\newcommand{\CalO}{\mathcal O}
 \newcommand{\Real}{\mathbb R}
    \newcommand{\BA}{\textbf A}
     \newcommand{\BPhi}{\boldsymbol \Phi}
      \newcommand{\BPsi}{\boldsymbol \Psi}
    \newcommand{\BT}{\mathbf T}
     \newcommand{\Bt}{\mathbf t}
     \newcommand{\Bx}{\mathbf x}
      \newcommand{\Bd}{\mathbf d}
    \newcommand{\BZ}{\mathbf Z}
    \newcommand{\BFH}{\mathbf H}
    \newcommand{\BFM}{\mathbf M}
    \newcommand{\Btheta}{\boldsymbol \theta}
    \newcommand{\BTheta}{\boldsymbol \Theta}
    \newcommand{\CalL}{\mathcal L}
    \newcommand{\CalH}{\mathcal H}
    \newcommand{\CalK}{\mathcal K}
    \newcommand{\CalN}{\mathcal N}
    \newcommand{\CalW}{\mathcal W}
    \newcommand{\E}{\mathbb E}
    \newtheorem{condition}{Condition}
\begin{document}

\title[A General Theory for Kernel Packets]{Beyond State Space Representation: A General Theory for Kernel Packets }

 \author{
    \Name{Liang Ding} \Email{liang\_ding@fudan.edu.cn}\\
    \addr{School of Data Science, Fudan University, Shanghai, China}\\
    \Name{Rui Tuo} \Email{ruituo@tamu.edu}\\
    \addr{Wm Michael Barnes '64 Department of Industrial \& Systems Engineering\\
    Texas A\&M University, College Station, TX 77843, USA}\\
    \Name{Lu Zou} \Email{zoulu1990330@gmail.com}\\
    \addr{School of Management, Shenzhen Polytechnic University, Shenzhen, China}
    }

\maketitle

\begin{abstract}%
Gaussian process (GP) regression provides a flexible, nonparametric framework for probabilistic modeling, yet remains computationally demanding in large-scale applications. For one-dimensional data, state space (SS) models achieve linear-time inference by reformulating GPs as stochastic differential equations (SDEs). However, SS approaches are confined to gridded inputs and cannot handle multi-dimensional  scattered data.
We propose a new framework based on kernel packet (KP), which overcomes these limitations while retaining exactness and scalability. A KP is a compactly supported function defined as a linear combination of the GP covariance functions. In this article, we prove that KPs can be identified via the forward and backward SS representations. We also show that the KP approach enables exact inference with linear-time training and logarithmic or constant-time prediction, and extends naturally to multi-dimensional  gridded or scattered data without low-rank approximations. Numerical experiments on large-scale additive and product-form GPs with millions of samples demonstrate that KPs achieve exact, memory-efficient inference where SDE-based and low-rank GP methods fail.
 
\end{abstract}

\begin{keywords}%
  Gaussian processes, state space model,  kernel method, sparse matrices, compactly supported function
\end{keywords}

\section{Introduction}


Gaussian process (GP) models provide a flexible, probabilistic, and nonparametric framework for interpolation, forecasting, and smoothing \citep{RasmussenWilliams06}. Despite their flexibility, GPs remain computationally demanding in large-scale applications due to their poor scalability with data size. Specifically, exact GP regression requires $\mathcal{O}(n^3)$ training time and $\mathcal{O}(n)$ prediction time for $n$ observations, which limits its use in large-scale datasets.  To address this limitation, various approaches have been developed to accelerate GP inference. One efficient line of research focuses on GPs governed by stochastic differential equations (SDEs), which admit equivalent state space (SS) representations \citep{solin2016stochastic}. In this framework, the GP is treated as the output of a  SDE driven by white noise, and exact inference can be achieved in linear time by solving an equivalent SS formulation. The SS formulation is particularly appealing because it requires only $\mathcal{O}(n)$ time  for exact computation \citep{hartikainen2010kalman,saatcci2012scalable,solin2016stochastic}. However, the SS approach has several inherent limitations. First, its original formulation requires one-dimensional input and predetermined prediction points, and therefore cannot be readily extended to multi-dimensional scattered data. Second, if the prediction point is not predetermined, it requires a prediction time of $\mathcal{O}(n)$, making it unsuitable for problems involving a large number of prediction points.



In this work, we propose a framework that extends the idea of KPs in  \cite{chen2022kernel}, which shows that linear combinations of Mat\'ern kernels  possess the compact-support property. We show that KPs can be obtained by combining both the forward and backward SS models. The compact support of KPs brings substantial computational advantages. While both  KP and SS approaches lead to $\CalO(n)$ time algorithms for the GP regression model, KP surpasses SS in log-likelihood  and predictive computational efficiency.  Unlike SS, which cannot improve the computational efficiency for evaluating the log-likelihood and requires $\CalO(n)$ time per prediction, KP reduces computation to $\CalO(n)$ for the log-likelihood and $\CalO(\log n)$ or $\CalO(1)$ for predictions. 

We also provide natural extensions of KPs to multi-dimensional problems with scattered inputs, which overcome a major structural constraint of the SS representation. Our algorithm constructs KPs directly from the SDE formulation and provides exact inference without resorting to low-rank approximations or variational methods.

Finally, we evaluate the proposed method on large-scale additive and product-form GPs with millions of training and test samples. The results demonstrate that KPs achieve exact, memory-efficient inference where existing SDE-based and low-rank GP methods fail due to memory or approximation limitations. To summarize,  we have the following three contributions: 
\begin{enumerate}
    \item  We develop an exact algorithm that generalizes SS models. Compared with SS methods, it retains linear-time training but achieves faster log-likelihood evaluation, prediction, and kernel–matrix computations by exploiting the compact support of KPs;
    \item  We extend kernel packets to multidimensional scattered-data settings, going beyond the one-dimensional SS model regime;
    \item  {We establish a unified framework that bridges SDE-based and kernel-based Gaussian process inference, and empirically demonstrate its exactness and scalability through  experiments with millions of samples and test points.}
\end{enumerate}

\subsection{Literature review}
\label{sec:review}
Scalable GP regression has been addressed through several approximation strategies.
Likelihood-based methods simplify the joint likelihood to reduce computational cost. Representative examples include pseudo-likelihoods \cite{varin2011overview,eidsvik2014estimation} and the Vecchia approximation \cite{stein2004approximating,katzfuss2021general}, both of which approximate the dependence structure among observations to achieve scalability.
Covariance tapering provides another approach by multiplying the covariance function with a compactly supported kernel, producing sparse covariance matrices that can be inverted efficiently \cite{furrer2006covariance,kaufman2008covariance,stein2013statistical}.
Random feature methods approximate kernels using stochastic basis functions \cite{rahimi2008random,le2013fastfood,hensman2017variational}.
Local approximations divide the input space into subregions and fit independent or weakly coupled GPs within each \cite{gramacy2015local,cole2021locally}.
Finally, low-rank approximations have been proposed from various perspectives, including discrete process convolutions \cite{higdon2002space}, fixed rank kriging \cite{cressie2008fixed,kang2011bayesian}, predictive processes \cite{banerjee2008gaussian,finley2009improving}, lattice kriging \cite{nychka2015multiresolution}, hierarchical matrices \cite{chen2023linear} and stochastic partial differential equations \cite{lindgren2011explicit},  among others. These approaches construct finite-dimensional representations of the underlying GP, typically using structured basis functions to reduce complexity. These approaches are generally applicable, but their computational efficiency is gained at the cost of accuracy. Another direction is to seek exact and scalable algorithms under specific covariance functions and experimental designs. When the design points are regular (i.e., equally-spaced) grids, Toeplitz methods can be applied to reduce the computational complexity \cite{wood1994simulation}. However, regular grids are too restrictive in computer experiment applications, and they are sub-optimal in terms of the prediction performance in multi-dimensional problems. A more powerful class of designs is the sparse grids. With these designs, \cite{plumlee2014fast} proposed an algorithm for the inference and prediction of GP models. Although this algorithm is faster than directly inverting the $n\times n$ covariance matrix, its training time complexity remains $\CalO(n^3 )$ under a fixed input dimension. When the GP can be represented as a SS model, Kalman filtering and smoothing can be applied to provide an efficient prediction algorithm \cite{hartikainen2010kalman,saatcci2012scalable,sarkka2013spatiotemporal,solin2016stochastic,loper2021general}. {But this approach has a major downside: the nature of Kalman filtering and smoothing requires specifying the input points where the algorithm is going to make predictions in the training process. This makes SS model difficult to apply in many applications where input points are not known in advance.}

\section{Preliminaries}
\label{sec:motivation}
A GP $y(t)\sim\CalN(\mu,K)$ is characterized by its mean function $\mu(t)$ and kernel function $K(t,t')=\E[y(t)y(t')]$. In preliminary, we assume that $y(t)$ is central, i.e., $\mu=0$ and observations are noiseless. Given observations $Y=[y(t_1),\ldots,y(t_n)]^\top$, by the fact that $y(t)$ is Gaussian distributed at any point $t$, it is straightforward to derive that  the posterior distribution of $y(t)$ for any untried $t$ is a multivariate normal distribution given by
\begin{eqnarray}\label{GPR}
    y(t)|Y\sim N(K(t,\mathbf{T})K^{-1}(\mathbf{T},\mathbf{T})Y,K(t,\mathbf{T})K^{-1}(\mathbf{T},\mathbf{T})K(\mathbf{T},t)),
\end{eqnarray}
 Note that (\ref{GPR}) involves inverting an $n\times n$ matrix, which prohibits the scalability of GP regression in its original form. We review two classes of methodologies to resolve this issue.

\subsection{State space models}
\label{sec:SS}
The state space model relies on a SDE representation of GP $y(t)$ as 
\begin{equation}
\label{eq:SODE}
\mathcal{L}[y] := y^{(m)}(t) + c_{m-1}(t)y^{(m-1)}(t) + \cdots + c_0(t)y(t) = W(t), \quad t \in [t_0, T],
\end{equation}
where $W(t)$ is a white noise process with unit spectral density. For our theorems and algorithms to be valid, we impose a mild condition on SDE \eqref{eq:SODE}:
 \begin{condition}
\label{condition:full_rank}
There exists $m$ linearly independent fundamental solutions $h_i$ to the operator $\CalL$ (i.e. 
$\CalL h_i=0$)
and  each $h_i$ is bounded.
\end{condition}

The SDE formulation is then further reformulated as the following Markov SS model
\begin{eqnarray}\label{SS}
\begin{aligned}
    \partial_t z(t)&=F(t)z(t)+LW(t) & \text{ (dynamic model)}\\
    y(t)&=H z(t) & \text{ (measurement model)}
\end{aligned}\quad ,
\end{eqnarray}
where $L=[0,\cdots,0,1]$, $H=[1,0,\cdots,0]\in\Real^m$ are vectors,  and $F(t)\in\Real^{m\times m}$ is a matrix-valued function of $t$,  respectively. Given a set of observations $\{y(t_i)\}_{i=1}^n$ and a predetermined prediction point $t^*$, we can insert $t^*$ into the sequence $\{t_i\}_{i=1}^n$ such that $t_i < t^* < t_{i+1}$, and sequentially compute the distribution of \eqref{SS} for $y(t_1), \ldots, y(t_i), y(t^*)$, which requires only $\mathcal{O}(n)$ time.

The Markov SS model representation of GP \eqref{eq:SODE} is not unique. We use GPs with Mat\'ern covariance functions as an example. It can be derived from the spectral density of a  Mat\'ern GP that it has the following SDE representation:
\begin{align}
\label{eq:SODE_matern}
    (\partial_t+\lambda)^m y(t)=W(t)=\sum_{j=0}^m{m\choose j}\lambda^{m-j}y^{(j)}(t)
\end{align}
The left-hand side and right-hand side of \eqref{eq:SODE_matern} yield two different Markov SS representations 
\begin{equation*}
      \partial_t z_1(t)=F_1z_1(t)+LW(t),\quad   \partial_tz_2(t)=F_2z_2(t)+LW(t)
\end{equation*}
with $z_1=[y,y^{(1)},\cdots,y^{(m-1)}]$, $z_2=[y,(\partial_t+\lambda)y,\cdots,(\partial_t+\lambda)^{m-1}y]$, and
\begin{equation*}
    F_1=-\begin{bmatrix}
0& -1 & 0 &\cdots & 0\\
& & \vdots & &\\
0&0&0& \cdots&-1\\
\lambda^m& m\lambda^{m-1}&{m\choose 2}\lambda^{m-2}&\cdots &m\lambda
\end{bmatrix},\ F_2=\begin{bmatrix}
-\lambda& 1 & 0 &\cdots & 0\\
0& -\lambda& 1  &\cdots & 0\\
& & \vdots & &\\
0&0&0&\cdots &-\lambda
\end{bmatrix}.
\end{equation*}
Note that any SDE can be reformulated in the form corresponding to $z_1(t)$; hence, it is refered as  the \emph{canonical} SS model. In this study, we do not restrict to a specific SS model representation, as our KP algorithm is applicable to \emph{any} such representation. 

{{In the following, we denote the covariance matrix of the SS model~\eqref{SS} by
\begin{equation}\label{eq:R_def}
R(t,\mu) = \mathbb{E}[z(t)z(\mu)],  \qquad R_{i,j}(t,\mu) = \mathbb{E}[z_i(t)z_j(\mu)]
\end{equation}
without committing to a particular SS representation. The kernel function associated to $y(t)$ is $K=R_{1,1}$. Derivation of kernel functions  from SS models is provided in Appendix \ref{sec:conversion}.}}


\subsection{Kernel functions asscoiate with SDEs}  
Since our study concerns efficient GP regression with genereal kernel $K$, a natural question arises: what types of kernel functions are associated with the SDE \eqref{eq:SODE}? As shown in \cite{solin2016stochastic,benavoli2016state}, a wide range of covariance functions—including Matérn kernels, spline kernels, and neural network kernels—can be represented through GPs in SDE form. We use two general kernel classes widely applied in physical and engineering problems to demonstrate that the range of kernels associated to SDEs is broad.

The first class is the well-known CARMA process \citep{brockwell2001continuous}, which corresponds to  stationary SDEs where the coefficients $\{c_j(t)=c_j\}_{j=0}^{m-1}$ in \eqref{eq:SODE} are time-invariant. Kernel functions associated with stationary SDEs are isotropic, i.e., $K(t,t')=K(t-t')$. So by multiplying both sides of \eqref{eq:SODE} by $y(t')$, taking expectations, and introducing the change of variable $\mu=t-t'$, we can get the following ODE representation of $K$:
\begin{equation}
    \label{eq:ODE_K}
    K^{(m)}(\mu)+c_{m-1}K^{(m-1)}(\mu)+\cdots+c_0K(\mu)=0.
\end{equation}
From  direct calculations, we can derive that $K$ is of the form $ K(\mu)=\sum_{j=0}^m A_j(\mu)e^{-\omega_j|\mu|}$ where $A_j(\mu)$ is proportional to one of $1$, $|\mu|^j$, $\cos(\alpha_j |\mu|)$, or $\sin(\beta_j |\mu|)$, depending on the roots of the characteristic polynomial of \eqref{eq:ODE_K}. A special case is the Mat\'ern kernel, where {$c_j= \lambda^j{m\choose j}$} \citep{stein1999interpolation}. \cite[p. 326]{papoulis1965random} also gave three examples for the case $m=2$.

The second class is from convolution of kernels, which corresponds to SDE of the form
\begin{equation}
 \label{eq:SODE_convolution}
     \begin{aligned}
        & \mathcal{L}_1[y_1]:=y_1^{(m_1)}(t)+c_{1,m_1-1}(t)y_1^{(m_1-1)}(t)+\cdots+c_{1,0}(t)y_1(t)=y_2(t)\\
        & \mathcal{L}_2[y_2]:=y_2^{(m_2)}(t)+c_{2,m_2-1}(t)y_2^{(m_2-1)}(t)+\cdots+c_{2,0}(t)y_2(t)=W(t)
     \end{aligned},\quad t\in[t_0,T],
 \end{equation}
where the kernel $K_2$ of $y_2$ is known, while $y_1$ is constrained by the physical conditions imposed through the operator $\mathcal{L}_1$. Then we have $K_1(t,t')=\int_{t_0}^{T} G(t,\mu)K_2(\mu,\mu')G(\mu',t')d\mu d\mu'$, where $G$ is the Green's function of $\mathcal{L}_1$. This method enables the construction of physics-informed GPs with closed-form kernels that are readily obtainable.  \cite{ding2025bdrymat,dalton2024boundary} employ this convolution method to construct kernels that satisfy general boundary conditions.

\subsection{Definition of Kernel packets}
The method of kernel packets looks for a different sparse representation of $K(\mathbf{T},\mathbf{T})$ in (\ref{GPR}). For Mat\'ern kernels with smoothness $\nu=p-1/2$ and $p\in\mathbb{N}_+$, \cite{chen2022kernel} proved that $K(\mathbf{T},\mathbf{T})=A^{-1}\Phi$, where $A$ and $\Phi$ are banded matrices  with bandwidth $p$ and $p-1$, respectively. To explain this representation, we need the definition of KPs. Denote $a\wedge b:=\min(a,b)$ and $a\vee b:=\max(a,b)$. Below is a rephrased version of the original definition in \cite{chen2022kernel}.

\begin{definition}
    Given $p,n\in\mathbb{N}_+$ with $n>2p+1$, positive definite function $K(\cdot,\cdot)$, and input points $t_1<\ldots<t_n\in(t_0,T)$, a set of functions $\{\phi_1(\cdot),\ldots,\phi_n(\cdot)\}$ is called a kernel packet system with degree $2p+1$ if
    \begin{enumerate}
        \item $\phi_j=\sum_{k=(j-p)\vee 0}^{(j+p)\wedge n} a_{jk} K(\cdot,t_k)$ for some not-all-zero constants $a_{jk}$'s.
        \item For $j=1,\ldots,p$, $\phi_j(t)=0$ whenever $t\geq t_{j+p}$. These $\phi_j$'s are called left-sided KPs.
        \item For $j=p+1,\ldots,n-p-1$, $\phi_j(t)=0$ for any $t\not\in(t_{j-p},t_{j+p})$. These $\phi_j$'s are called KPs.
        \item For $j=n-p,\ldots,n$, $\phi_j(t)=0$ for any $t\leq t_{j-p}$. These $\phi_j$'s are called right-sided KPs.
    \end{enumerate}
    A kernel packet system is called a kernel packet basis if the functions are linearly independent.
\end{definition}

In other words, a KP basis is a linear transform of $\{K(\cdot,t_1),\ldots,K(\cdot,t_n)\}$ and is (mostly) compactly supported. This leads to the aforementioned sparse representation and an $O(n)$ time GP regression algorithm. See Figure \ref{fig:KP_basis} for an illustration of KP bases.

\begin{figure}
\centering
\includegraphics[width=0.35\linewidth]{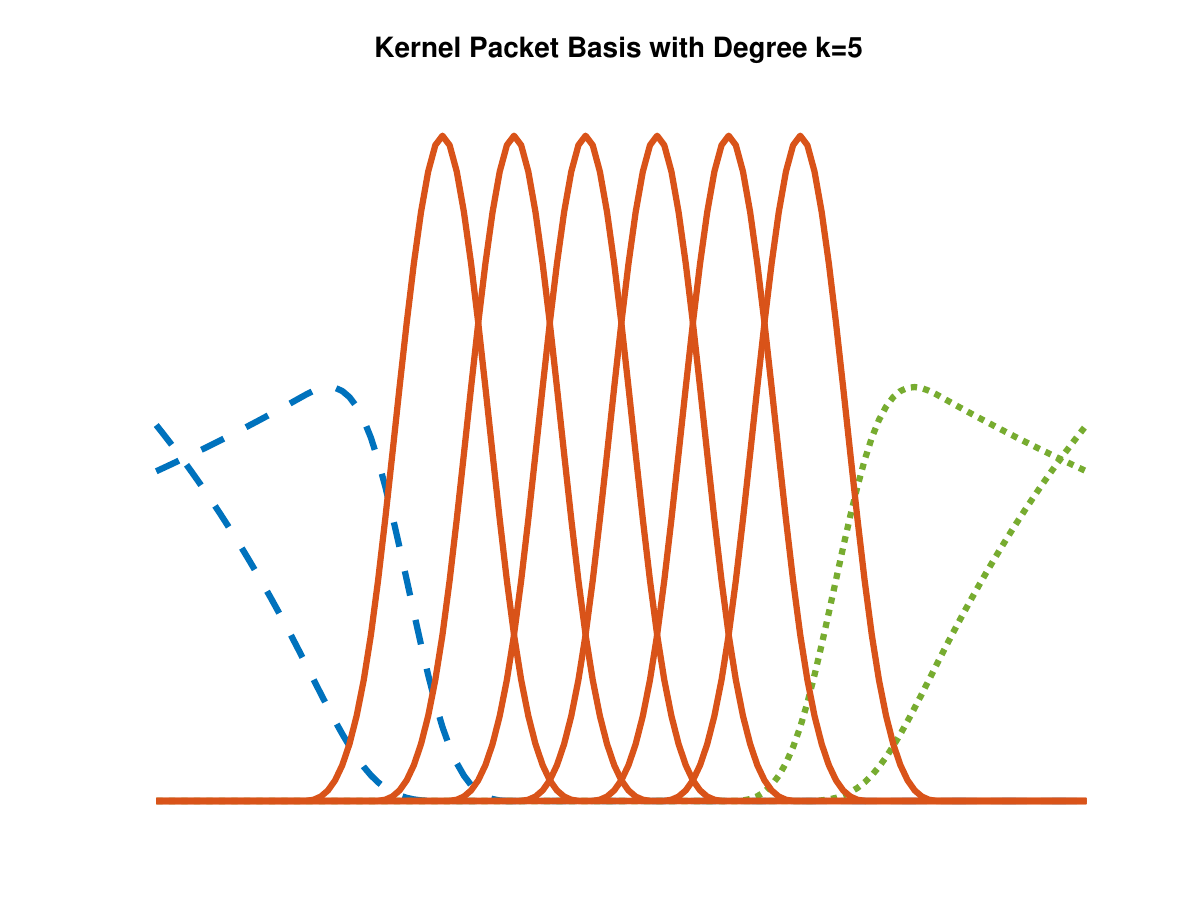}
\includegraphics[width=0.35\linewidth]{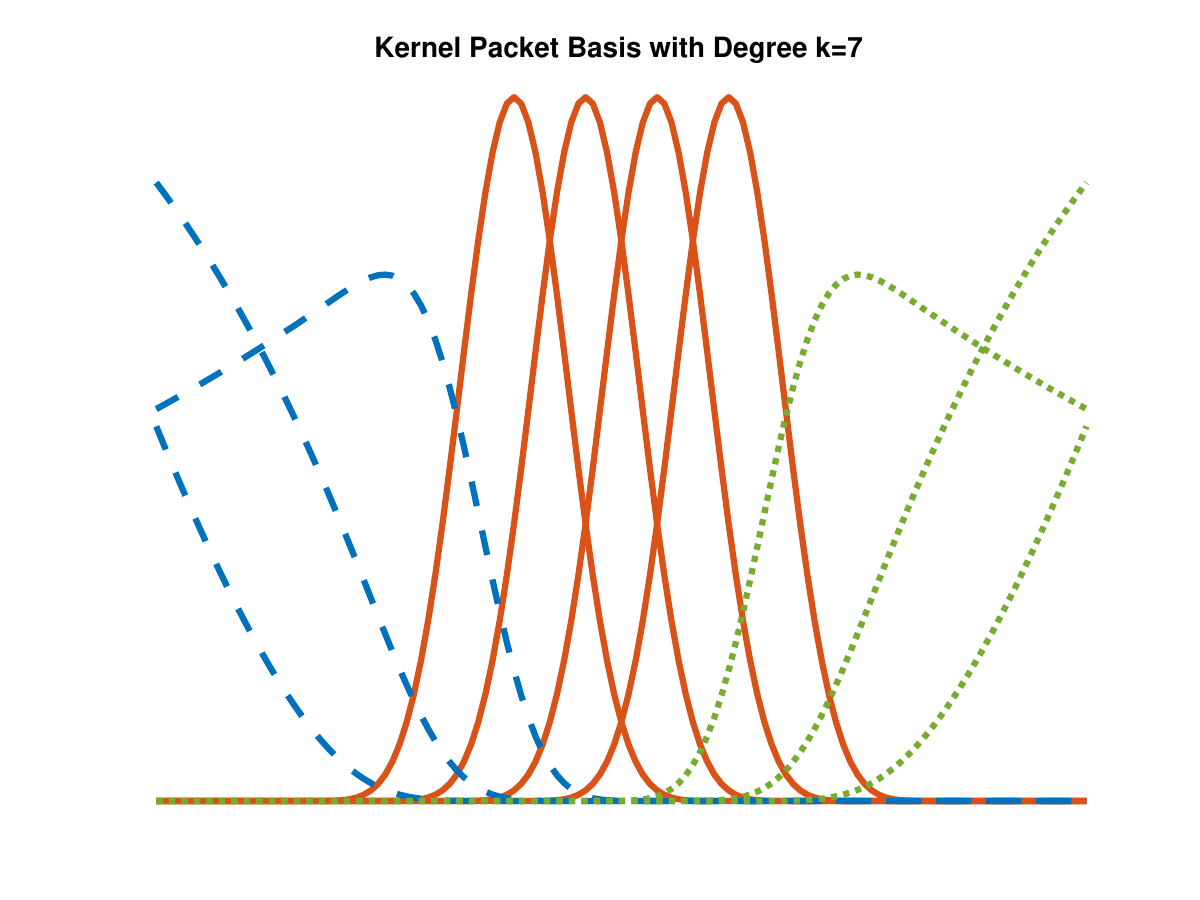}
\caption{KPs corresponding to  Mat\'ern-$3/2$ and $5/2$ correlations from \cite{chen2022kernel}.  KPs, left-sided KPs, and right-sided KPs are plotted in orange, blue, and green lines, respectively.} \label{fig:KP_basis}
\end{figure}



\cite{chen2022kernel}  showed that a sufficient condition for a KP system to form a KP basis is that each function in the system be \textit{irreducible}:

\begin{definition}
    A KP $\phi=\sum_{k=\underline{k}}^{\overline{k}} a_k K(\cdot,t_k)$ is called irreducible if 
    \begin{enumerate}
        \item No function with the form $\psi=\sum_{k=\underline{k}+1}^{\overline{k}} c_k K(\cdot,t_k)$ or $\psi=\sum_{k=\underline{k}}^{\overline{k}-1} c_k K(\cdot,t_k)$ with not-all-zero $c_k$'s can become a KP.
        \item There exist $t\in (t_{\underline{k}},t_{\underline{k}+1} ]$ and $t'\in [t_{\overline{k}-1},t_{\overline{k}})$ such that $\phi(t)\neq 0$ and $\phi(t')\neq 0$.
    \end{enumerate}
     Irreducible left- and right-sided KPs are defined analogously. A KP system is called minimal if each function in the system is irreducible.
\end{definition}

\begin{theorem}[\cite{chen2022kernel}]
\label{Th:KPbasis}
    A minimal KP system forms a KP basis. 
\end{theorem}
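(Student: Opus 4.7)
The plan is to prove linear independence by repeatedly isolating a single KP on a short subinterval near an endpoint of its support, using the two irreducibility conditions to rule out degenerate cancellations.

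Assume a nontrivial relation $\sum_{j=1}^n c_j \phi_j \equiv 0$ holds, set $I := \{j : c_j \neq 0\}$, and let $j^* := \max I$. If $j^* \leq n-p-1$, so that $\phi_{j^*}$ is interior or left-sided with finite right support endpoint $t_{j^*+p}$, then on the interval $[t_{j^*+p-1}, t_{j^*+p})$ every $\phi_j$ with $j < j^*$ vanishes (its support lies in $(-\infty, t_{j+p})$ and $t_{j+p} \leq t_{j^*+p-1}$), while no right-sided KP has index below $j^*$ in this regime. Hence $\sum_j c_j \phi_j(t) = c_{j^*} \phi_{j^*}(t)$ on this subinterval, and irreducibility condition~2 supplies a point there where $\phi_{j^*}$ is nonzero, forcing $c_{j^*} = 0$ and contradicting $j^* \in I$. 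The symmetric argument applied to $j_* := \min I$ handles the case $j_* \geq p+1$ via an isolation on $(t_{j_*-p}, t_{j_*-p+1}]$, where $\phi_{j_*}$ is interior or right-sided and irreducibility condition~2 at the left end gives $c_{j_*} = 0$. Iterating these two eliminations removes every non-boundary index from $I$.

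The residual case is $j_* \leq p$ and $j^* \geq n-p$ simultaneously, in which $I$ contains nonzero coefficients on both the left- and right-sided portions. To close it I would pass to the algebraic representation $\sum_j c_j \phi_j = \sum_{k=1}^n b_k K(\cdot, t_k)$ with $b_k := \sum_j c_j a_{jk}$; positive-definiteness of $K$ makes $K(\cdot, t_1), \ldots, K(\cdot, t_n)$ linearly independent, so $b_k = 0$ for every $k$. Irreducibility condition~1 then forces $a_{j, \underline{k}(j)} \neq 0$ and $a_{j, \overline{k}(j)} \neq 0$ for every $\phi_j$ in the system, since otherwise $\phi_j$ could be rewritten with a strictly shorter kernel index range while remaining a KP, contradicting its irreducibility. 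The extremal equations $b_1 = 0$ and $b_n = 0$ involve only left- or right-sided KPs, respectively, and cascading inwards through $b_2, \ldots, b_p$ and $b_{n-1}, \ldots, b_{n-p+1}$ pins down one boundary coefficient at a time, forcing $c_1 = \cdots = c_p = 0$ and $c_{n-p} = \cdots = c_n = 0$.

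The main obstacle is exactly this residual case: after the endpoint isolations fail, a support-based argument can no longer separate overlapping boundary KPs on the tail intervals, and the proof must combine kernel linear independence with irreducibility condition~1 to propagate zeros through the algebraic equations on the boundary coefficients. Verifying that this cascade actually closes — that each step genuinely pins down one more $c_j$ via the nonvanishing boundary entries of $A = (a_{jk})$ — is the technical heart of the argument.
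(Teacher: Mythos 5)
The first half of your argument is sound and matches the natural strategy: isolating $\phi_{j^*}$ on $[t_{j^*+p-1},t_{j^*+p})$ when $j^*=\max I\le n-p-1$ (and symmetrically at $\min I$) is the right use of the support structure together with irreducibility condition~2, and it correctly forces $\max I\ge n-p$ and $\min I\le p$ whenever $I\ne\emptyset$. (For reference, the paper gives no self-contained proof of this theorem; it defers to Theorem~13 of \cite{chen2022kernel}, so your attempt must be judged on its own.) There are, however, two genuine gaps. First, the claim that ``iterating these two eliminations removes every non-boundary index from $I$'' is unjustified: the isolation argument only ever applies to the current extremes of $I$, and once $\max I\ge n-p$ is forced you can no longer isolate an interior $j\in I$ on $[t_{j+p-1},t_{j+p})$, because KPs of larger index (e.g.\ $\phi_{j+1}$, supported on $(t_{j+1-p},t_{j+1+p})$) need not vanish there. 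Second, and more seriously, the cascade in the residual case does not close. The equation $b_1=0$ does not ``involve only left-sided KPs'': every $\phi_j$ with $j\le p+1$ has $t_1$ in its kernel index range, so $b_1=\sum_{j=1}^{p+1}c_j a_{j1}$, and in general each equation $b_k=0$ couples roughly $2p+1$ of the $c_j$'s. There is no triangular structure, so nothing is ``pinned down one coefficient at a time''; you flag this step as unverified, and it is precisely where the proof fails.

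A repair that stays within your framework: in the residual case set $g:=\sum_{j\le p}c_j\phi_j=-\sum_{j>p}c_j\phi_j$. The second expression shows $g(t)=0$ for $t\le t_1$ (all interior and right-sided KPs vanish there since $n>2p+1$), the first shows $g(t)=0$ for $t\ge t_{2p}$, so $g$ is a KP expressible using only $K(\cdot,t_1),\dots,K(\cdot,t_{2p})$. This index range is the range of $\phi_{p+1}$ with its last point deleted, so irreducibility condition~1 applied to $\phi_{p+1}$ forces all the kernel coefficients of $g$ to vanish, hence $\sum_{j\le p}c_j\phi_j\equiv 0$; the nested supports of the left-sided KPs together with condition~2 then give $c_1=\cdots=c_p=0$. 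The mirror argument kills the right-sided coefficients, after which your first step disposes of any remaining interior indices.
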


\section{Kernel Packets for State Space Models}
\label{sec:overview}
We first introduce the basic idea of constructing KPs from SS models, and then present an algorithm based on SS models for building a KP system.
\subsection{Kernel packets from SS models}
 
From a Gaussian-Markov process point of view, the existence of KPs is not entirely surprising. 
We  fix $t_1<\cdots<t_s$, and let $a_1,\ldots,a_s$ be undetermined coefficients. For any $t\geq t_s$, the Gauss-Markov property and linearity of the SS model \eqref{SS} imply that
\[\mathbb{E}[z(t)|z(t_1),\ldots,z(t_s)]=\mathbb{E}[z(t)|z(t_s)]=A(t,t_s)z(t_s),\quad\text{s.t.}\ A(t_s,t_s)=\mathbf{I}_m\]
for some deterministic function $A(t,t_s)$. Let covariance $R$ be defined as \eqref{eq:R_def}, then
\begin{eqnarray}
    \sum_{j=1}^s a_j R(t,t_j)=\mathbb{E}\left\{\mathbb{E}\left[z(t)\sum_{j=1}^s a_j z(t_j)^\top \Big|z(t_1),\ldots,z(t_s)\right]\right\} \nonumber\\
    =\mathbb{E}\left\{A(t,t_s)z(t_s)\sum_{j=1}^s a_j z(t_j)^\top \right\}=A(t,t_s)\underbrace{\sum_{j=1}^s a_j \mathbb{E}[z(t_s)z(t_j)^\top ]}_{(*)} .\label{Cov1}
\end{eqnarray}

Note that $(*)$ is independent of $t$, and (\ref{Cov1}) is zero if $(*)$ is zero. Then  if
\begin{eqnarray}\label{rightKPMarkov}
    \sum_{j=1}^s a_j R(t_s,t_j)=0,
\end{eqnarray}
we have $\sum_{j=1}^s a_j R(t,t_j)=0$ for all $t\geq t_s$. We call (\ref{rightKPMarkov}) the \textit{left-sided KP equations}. Analogously, we can have $\sum_{j=1}^s a_j R(t,t_j)=0$ for all $t\leq t_1$ if the \textit{left-sided KP equations} holds
\begin{eqnarray}\label{leftKPMarkov}
    \sum_{j=1}^s a_j R(t_s,t_j)=0,
\end{eqnarray}
which are derived from the ``backward Markov property'': $\mathbb{E}[z(t)|z(t_1),\ldots,z(t_s)]=\mathbb{E}[z(t)|z(t_1)]$ for $t\leq t_1$. We call the system of equations that simultaneously satisfies equations (\ref{rightKPMarkov}) and (\ref{leftKPMarkov}) the \textit{KP equations}. As shown in Theorem \ref{thm:existence}, a non-zero solution to the KP equations can lead to a KP, provided that $K$ is positive definite.

\begin{theorem}\label{thm:existence}
    Suppose $t_1<\cdots<t_s$, and $(a_1,\ldots,a_s)^\top $ is a non-zero vector satisfying both (\ref{rightKPMarkov}) and (\ref{leftKPMarkov}). Under Condition \ref{condition:full_rank}, the function $[\phi^{(j)}(\cdot)]_{j=1}^m=\sum_{i=1}^s a_i R(\cdot,t_i)$ satisfies $[\phi^{(j)}(t)]=0$ whenever $t\leq t_1$ or $t\geq t_s$ and $\phi^{(1)}$ is non-vanishing on $(t_1,t_s)$.
\end{theorem}
\begin{proof}
    $[\phi^{(j)}]$ are compactly supported on $(t_1,t_2)$ are obvious from \eqref{leftKPMarkov} and \eqref{rightKPMarkov}. It remains to prove that $\phi^{(1)}$ is non-vanishing. Note that for each $k=1,\ldots,s$,
    \begin{eqnarray*}
        \phi^{(1)}(t_k)=H\mathbb{E}\left\{z(t_k)\sum_{j=1}^s a_j z(t_j)^\top \right\}H^\top .
    \end{eqnarray*}
    Suppose $\phi^{(1)}(t_j)=0$ for $j=1,\ldots,s$. Then we have
    \begin{eqnarray}\label{VarContradiction}
        0=\sum_{k=1}^s a_k\phi^{(1)}(t_k)=H\mathbb{E}\left\{\sum_{k=1}^s a_kz(t_k)\sum_{j=1}^s a_j z(t_j)^\top \right\}H^\top =\operatorname{Var}\left(\sum_{k=1}^s a_k y(t_k)\right).
    \end{eqnarray}
    From Condition \ref{condition:full_rank}, $\CalL$ is invertible so kernel function $K$, which satisfies $\E[ \CalL[y](t)\CalL[y](\mu)]=\CalL_t\CalL_\mu K(t,\mu)=\delta_{t-\mu}$, is positive definite. However, because $(a_1,\ldots,a_s)$ is non-zero, (\ref{VarContradiction}) leads to contradiction.
\end{proof}
The question revolves around determining the minimal size of $s$ to allow for a non-zero solution to the KP equations. We have the following theorem:

\begin{theorem}
    \label{thm:fundamental-Covariance_centalKP} For any GP in the SDE form \eqref{eq:SODE} that satisfies Condition \ref{condition:full_rank}, the corresponding irreducible KP requires $s = 2m + 1$, and no smaller value of $s$ is possible.
 \end{theorem}
A detailed proof of Theorem \ref{thm:fundamental-Covariance_centalKP} is provided in Appendix \ref{sec:proof of ss}.

 \subsection{Algorithm and main theorem}
\begin{algorithm}[h]

\textbf{Input: }{ sorted point $t_1<\cdots<t_n$, covariance $R$ of a SS model of GP \eqref{eq:SODE}} 

\textbf{Return: }{banded matrix $\textbf{A}$ and kernel packets $\{\phi_{i}^{(j)}:i=1,\cdots,n;j=1,\cdots,m\}$} 

Define vector-valued function $R_1(t,\mu)=[R_{1,1}(t,\mu),\cdots,R_{1,m}(t,\mu)]$\;

\For{$i=1,2,\cdots, n$}{  

\If{$i\leq m$}{ 
$\text{\textbf{Left KPs: }solve for $a_l$ such that}:\  \sum_{l=i}^{i+m}a_lR_1(t_{i+m},t_l)=0$ ,\vspace{-1.5mm}
\begin{center}
$\text{let}\quad 
\mathbf{A}_{i,l}=a_l,\quad
\phi_{i}^{(j)} =
\sum_{l=i}^{i+m} a_l R_{1,j}(\cdot,t_l)$%
\hspace{1.5em}\refstepcounter{equation}(\theequation)\label{eq:lphi_def}
\end{center}\vspace{-1.5mm}
 }

\If{$m<i\leq n-m$}{ $\text{\textbf{Central KPs: }solve for $a_l$ such that}:\ \sum_{l=i-m}^{i+m}a_l[R_1(t_{i-m},t_l)\ R_{1}(t_{i+m},t_l)]=0$, \vspace{-1.5mm}

\begin{center}
    $\text{let}\quad \textbf{A}_{i,l}=a_l,\quad \phi_{i}^{(j)}=\sum_{l=i-m}^{i+m}a_l R_{1,j}(\cdot,t_l)$\hspace{1.5em}\refstepcounter{equation}(\theequation)\label{eq:cphi_def}\vspace{-1.5mm}
\end{center}
 }
\If{$i>n-m$}{ $\text{\textbf{Right KPs: }solve for $a_l$ such that}:\  \sum_{l=i-m}^{i}a_lR_1(t_{i-m},t_l)=0$,\vspace{-1.5mm}
\begin{center}
    $\text{let}\quad \textbf{A}_{i,l}=a_l,\quad \phi_i^{(j)}=\sum_{l=i-m}^{i}a_lR_{1,j}(\cdot,t_l)$ \hspace{1.5em}\refstepcounter{equation}(\theequation)\label{eq:rphi_def}\vspace{-1.5mm}
\end{center}
 }
} 
\caption{Computing transformation matrix $\textbf{A}$ and kernel packets $\phi^{(j)}_i$} \label{alg:KP}
\end{algorithm}
 
{{One of the main contributions of this work is the development of Algorithm~\ref{alg:KP}, a tractable method for computing the KP basis of a GP driven by an SDE as in~(\ref{eq:SODE}), where basis functions $\{\phi_i^{(j)}\}$ defined in~(\ref{eq:lphi_def})–(\ref{eq:rphi_def}) jointly form the KP system}}. In Algorithm \ref{alg:KP}, the covariance $R_{1,1}(t,\mu)=K(t,\mu)$ (see Eq.~\eqref{eq:R_def}) is clearly the kernel function for any SS formulation. For canonical SS formulation,  $R_{1,j}=D^{(j-1)}_tK(t,\mu)$ is simply the $(j-1)^{\text th}$ derivative of the kernel function with respect to time $t$.  For Mat\'ern kernels using SS formulation $z_2$ in Section \ref{sec:SS}, it can be show via direct calculations that Algorithm \ref{alg:KP} recovers the KP algorithm exactly as presented in \cite{chen2022kernel}. 

 Let $\CalK^{(j)}$ denote the function space $\text{span}\{R_{1,j}(\cdot,t_i)\}_{i=1}^n$ for $j=1,\cdots,m$.  It is essential for  any GP algorithm that the dimension of each  $\CalK^{(j)}$ is $n$. This ensures the invertibility of the covariance matrices $R_{1,j}(\mathbf{T},\mathbf{T})$.  Given Condition \ref{condition:full_rank}, we are prepared to present the main theorem of our paper, which states that KPs also form a basis of $\CalK^{(j)}$:
{{\begin{theorem}[Main Theorem]
\label{thm:main}
  Under Condition \ref{condition:full_rank}, $\{\phi_i^{(j)}\}_{i=1}^n$ forms a minimal KP system.
\end{theorem}}}

\begin{remark}
    Theorem \ref{thm:main} immediately gives the following three key properties of KPs:
\begin{enumerate}
    \item vector-valued function $\boldsymbol{\phi}^{(j)}(t)=\textbf{A}R_{1,j}(\mathbf{T},t)$ has  $\CalO(1)$ non-zero entries for any $t\in[t_0,T]$;
    \item matrix $\boldsymbol{\Phi}^{(j)}=\boldsymbol{\phi}^{(j)}(\mathbf{T})$ are banded matrices with band width $m-1$;
    \item $\boldsymbol{\Phi}^{(j)}$ are invertible because their columns are mutually linearly independent.
    \end{enumerate}
\end{remark}
A full proof of Theorem \ref{thm:main}, including all intermediate results and supporting lemmas, is presented in Appendix \ref{sec:proof of ss}.




\section{More Kernel Packets}
\label{sec:KPs_general_kernels}
{We begin by constructing KPs for one-dimensional combined kernels and then generalize them to multidimensional kernels in additive and product forms. }
\subsection{KPs for combined kernels}
\label{sec:combine_ker}

Kernel combinations, such as addition and multiplication, are powerful methods for creating  data-adaptive kernels.  We  show that KPs can be constructed for these two major types of combined kernels. In Theorems~\ref{thm:combine_ker_add} and~\ref{thm:combine_ker_multi}, we consider the covariance functions 
$R^{(1)}$ and $R^{(2)}$ corresponding to the SS model representations of two GPs $y_1$ and $y_2$ in the form of \eqref{eq:SODE}, both satisfying Condition~\ref{condition:full_rank}.

\begin{theorem}
\label{thm:combine_ker_add}
    Let $\psi=[\psi_1,\cdots,\psi_s]^\top $be any minimal spanning set of the function space \[\text{span}\{R_{1,j}^{(1)}(t_1,\cdot), \ R_{1,j}^{(1)}(t_{s+1},\cdot),\ R^{(2)}_{1,j}(t_1,\cdot), \ R^{(2)}_{1,j}(t_{s+1},\cdot) :\ j=1,\cdots,m\}.\]
    By solving $\sum_{j=1}^{s+1}a_j\psi(t_j)=0$, we have an irreducible KP of  $R^{(1)}+R^{(2)}$:
    \begin{equation}
        \label{eq:combine_KP_add}\sum_{j=1}^{s+1}a_j\left[R^{(1)}(t,t_j)+R^{(2)}(t,t_j)\right]=0,\quad \forall\ t\not\in(t_1,t_{s+1}).
    \end{equation}
\end{theorem}
\begin{figure}
    \centering

    \includegraphics[width=.24\linewidth]{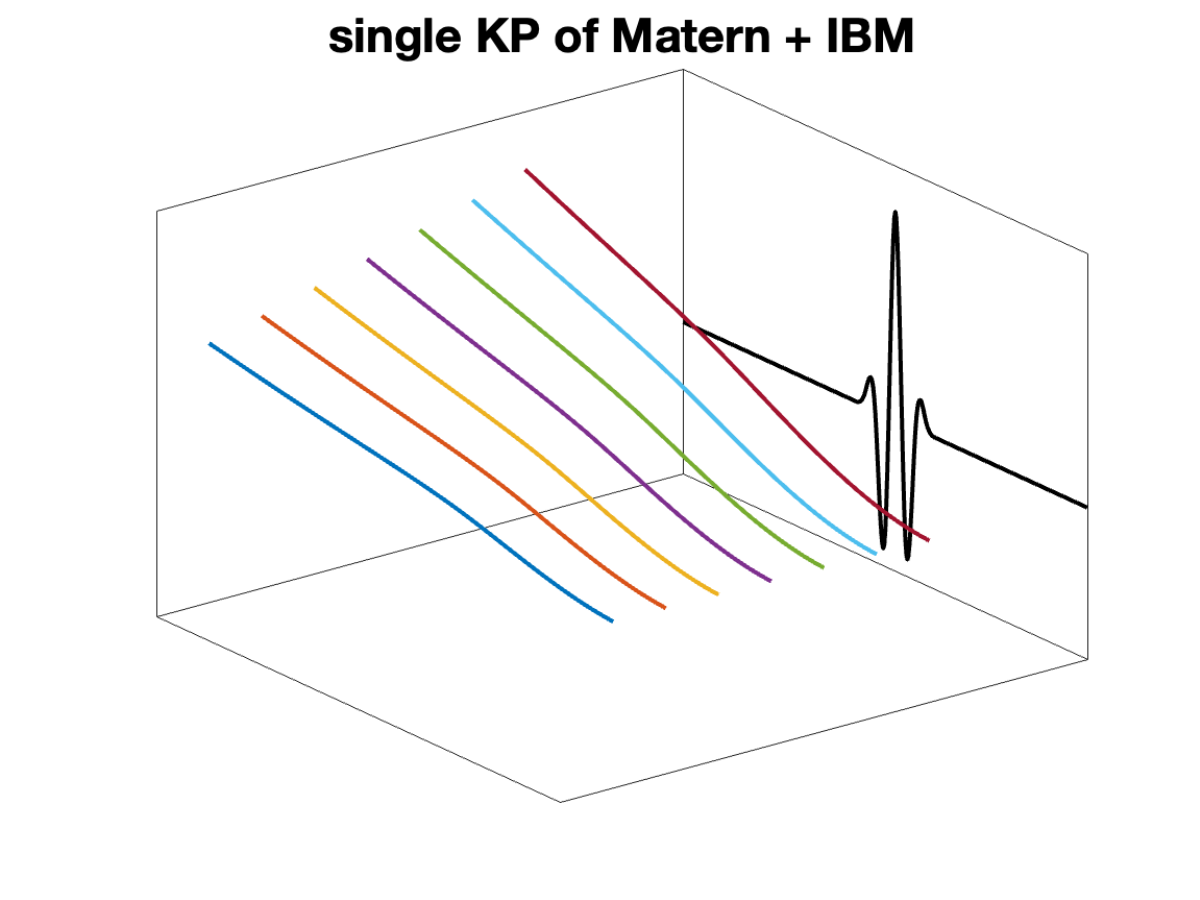}
     \includegraphics[width=.24\linewidth]{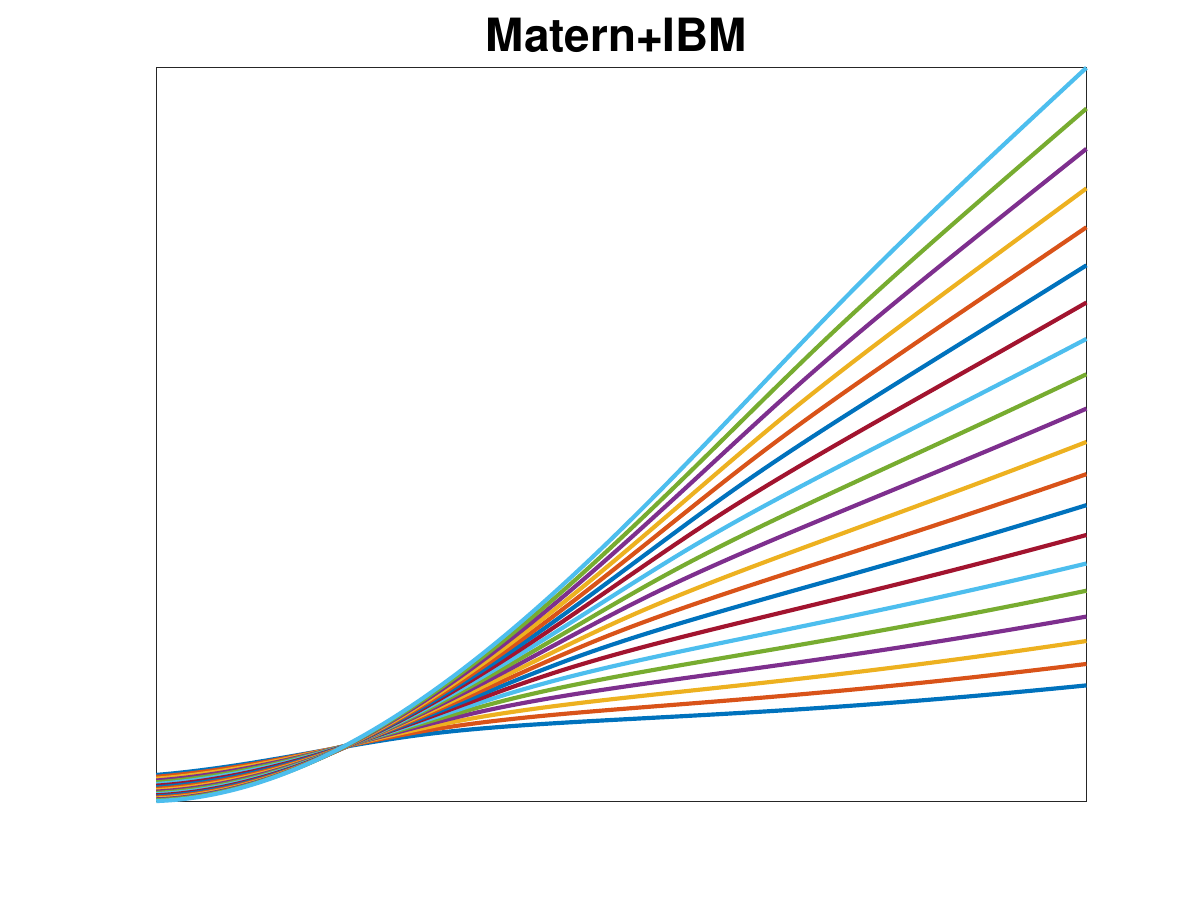}
    \includegraphics[width=.24\linewidth]{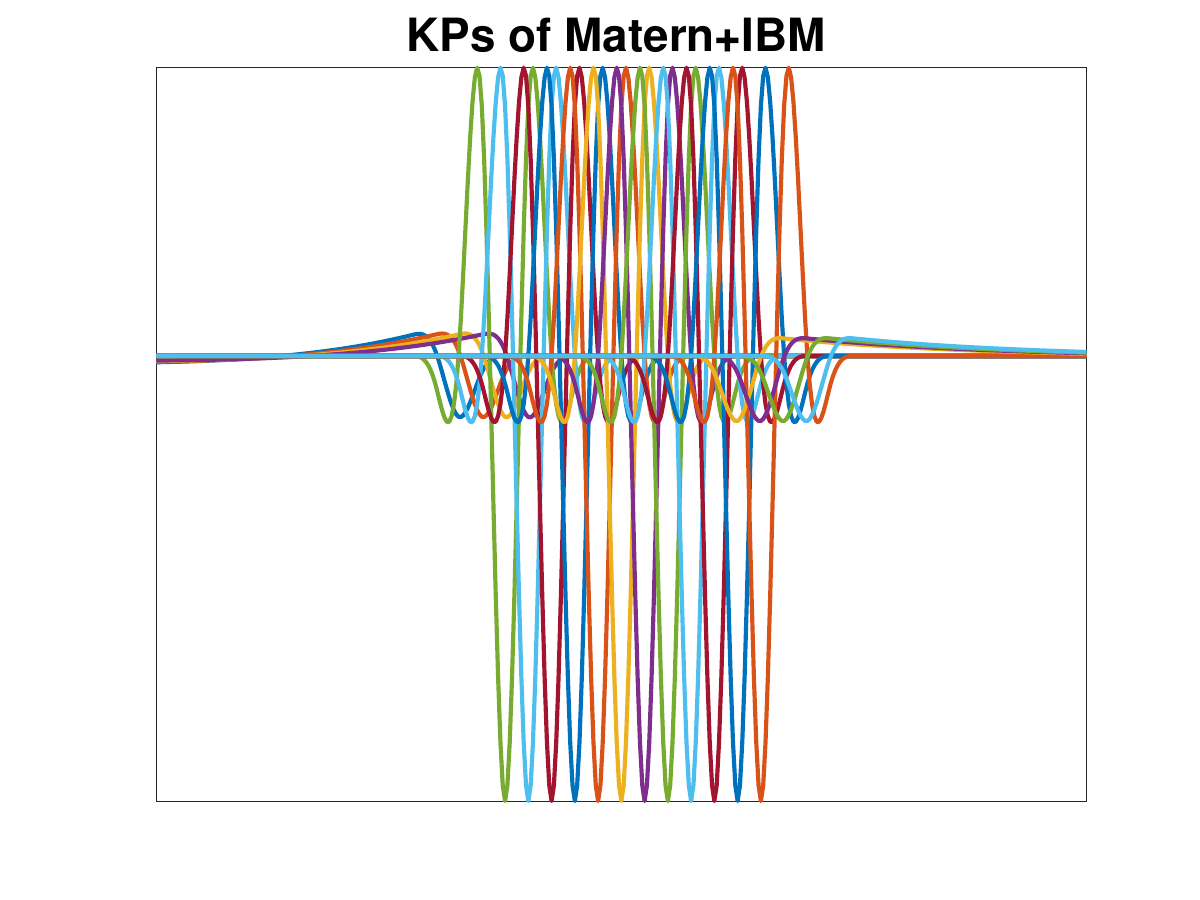}
    
    \includegraphics[width=.24\linewidth]{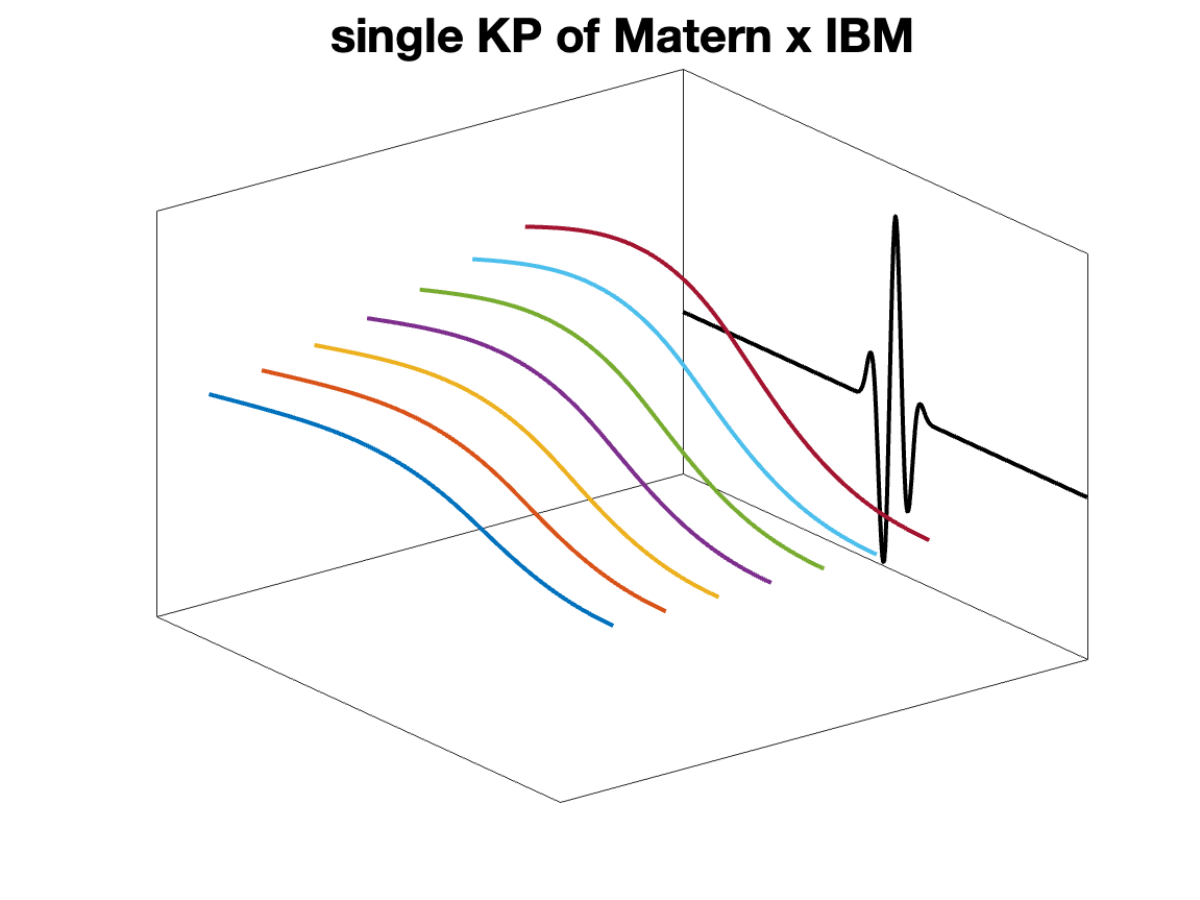}    
     \includegraphics[width=.24\linewidth]{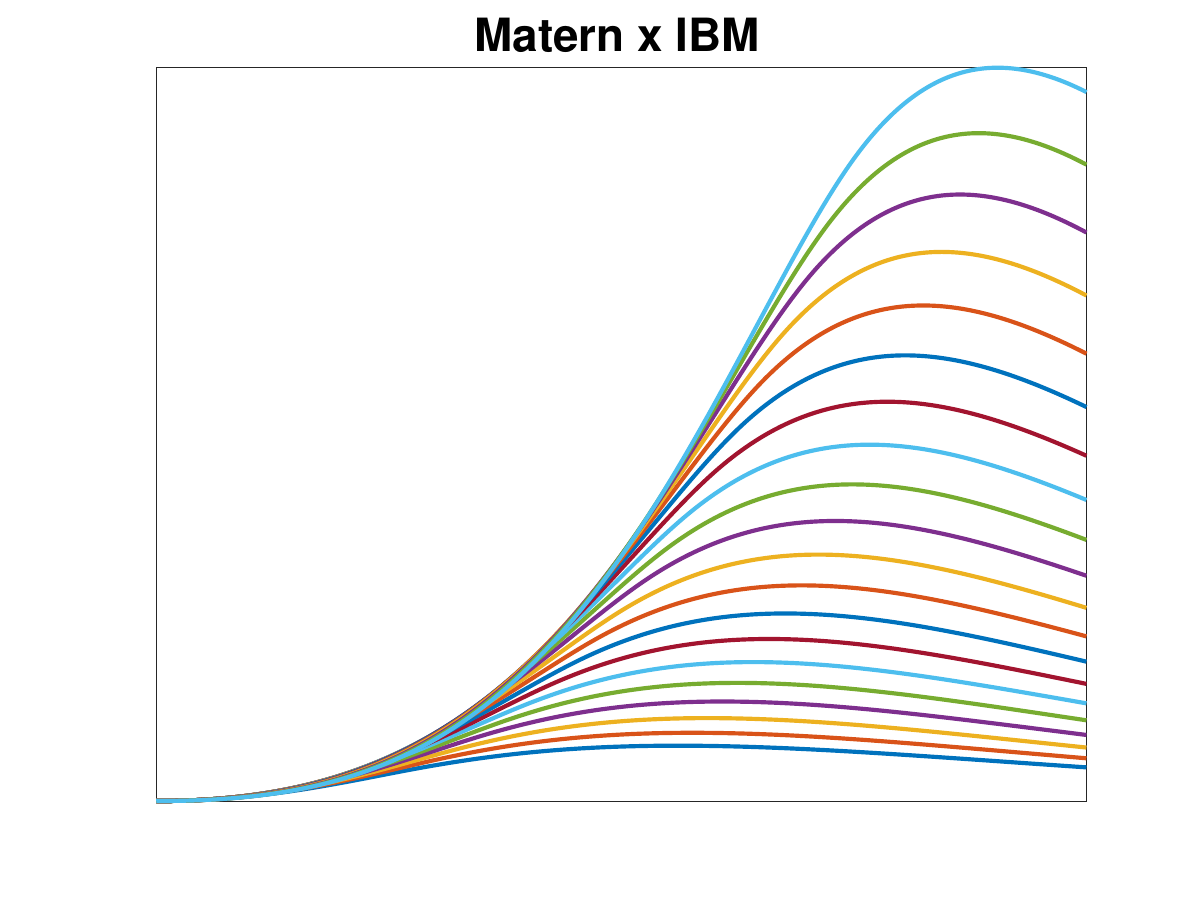}
     \includegraphics[width=.24\linewidth]{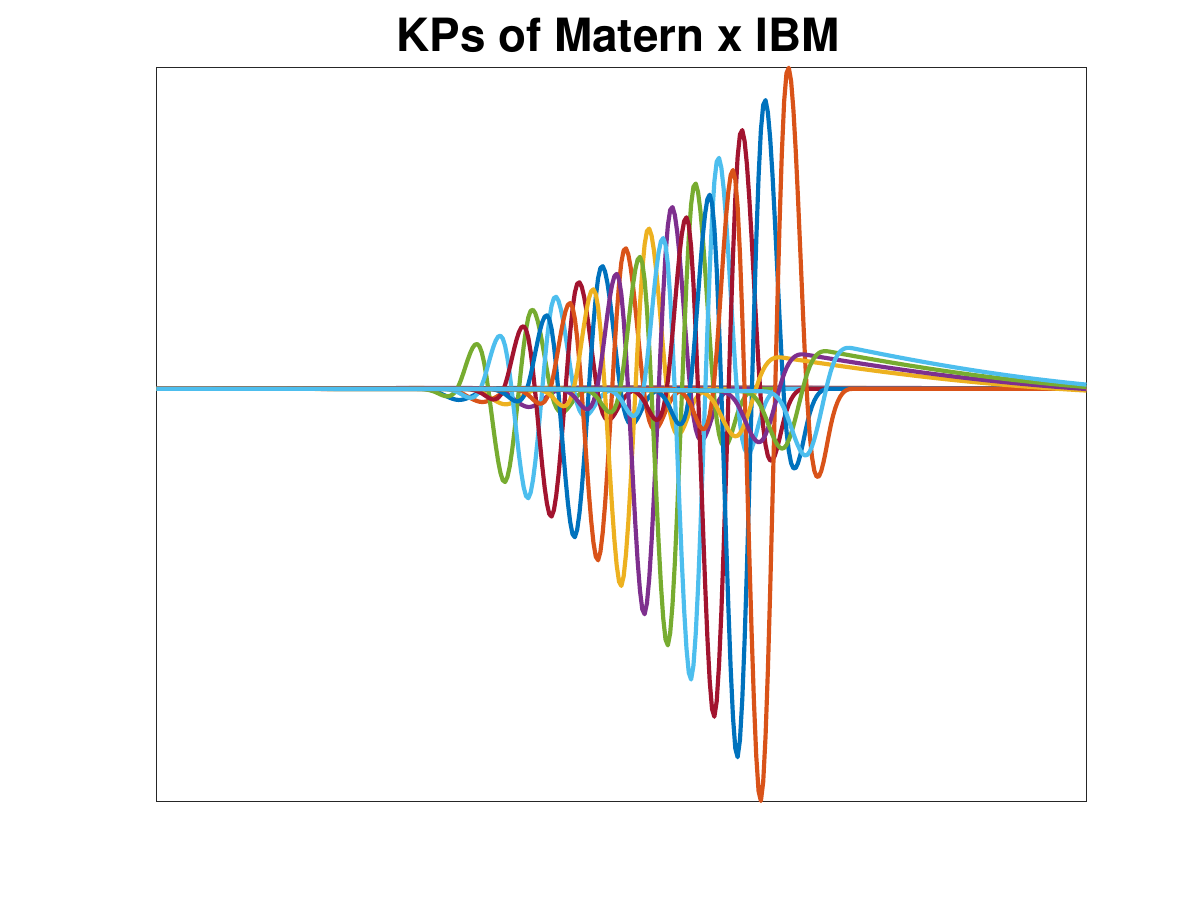}
    
    \caption{First column: KP is linear combinations of seven combined kernels; Middle column: Twenty kernel functions at $\{t_i=1+i/10\}_{i=1}^{20}$ that forms function spaces $\{K(\cdot,t_i)\}$; Last column: KP basis associated to combined kernel.}
    \label{fig:KP_experiment},
\end{figure}
\begin{theorem}
\label{thm:combine_ker_multi}
     Let   $\psi=[\psi_1,\cdots,\psi_{s}]^\top $ be any minimal spanning set of the function space 
    \begin{align*}
        \text{span}\{R^{(1)}_{1,j_1}(t_1,\cdot)R^{(2)}_{1,j_2}(t_1,\cdot),\ R^{(1)}_{1,j_1}(t_{s+1},\cdot)R^{(2)}_{1,j_2}(t_{s+1},\cdot):\  ,j_1,j_2=1,\cdots,m\}.
    \end{align*}
    Let $\mathbf{M}_1\bigotimes \mathbf{M}_2$ denote the Kronecker product of $\mathbf{M}_1$ and $\mathbf{M}_2$. By solving $\sum_{j=1}^{s+1}a_j\psi(t_j)=0$, we have an irreducible KP of  $R^{(1)}\bigotimes R^{(2)}$:
     \begin{equation}
        \label{eq:combine_KP_multiply}\sum_{j=1}^{s+1}a_j\left[R^{(1)}(t,t_j)\bigotimes R^{(2)}(t,t_j)\right]=0,\quad\forall\ t\not\in(t_1,t_{s+1}).
    \end{equation}
    
    
\end{theorem}

 Theorem~\ref{thm:combine_ker_add} gives the KP formulation for the sum of two SS models, while Theorem~\ref{thm:combine_ker_multi} corresponds to the Kronecker sum case. By the same reasoning, right- and left-sided KPs can be constructed for additive and multiplicative kernels. Since the right-sided KP is equivalent to the forward SS model, these constructions for right-sided KPs recover the SS model approach in \cite{solin2014explicit} for additive and multiplicative combined kernels.

We use the following two kernels to illustrate combined kernels: 
\begin{align}
    &K_{\text mat}(t,\tau)=\left(1+|t-\tau|\right)e^{-|t-\tau|}\label{eq:kernel_matern}\\
    &K_{\text ibm}(t,\ \tau)=\frac{t\tau(t\wedge \tau)}{2}-\frac{(t\wedge \tau)^3}{6},\quad t,\tau>0,\label{eq:kernel_bm}
\end{align}
where $K_{\text mat}$ is the Mat\'ern-$\frac{3}{2}$ kernel \citep{ Whittle54} and $K_{\text ibm}$ is the kernel of integrated Brownian motion (IBM). GPs induced by these two kernels are characterized by forward SDEs $\left(\partial_t+1\right)^2y(t)=W(t)$ and $\partial_{tt}y(t)=W(t)$, respectively. Therefore, they can also be represented by SS models with the following covariance functions:
\begin{equation*}
    R_{\text mat}(t,\tau)=
    \begin{cases}
        \exp\bigg\{ \begin{bmatrix}
        -1 & 1\\
        0& -1
    \end{bmatrix} t\bigg\}\exp\bigg\{ -\begin{bmatrix}
        -1 & -1\\
        0& -1
    \end{bmatrix} \tau\bigg\} \begin{bmatrix}
        1 & 1\\
        1& 2
    \end{bmatrix} & \text{if}\ t\geq \tau\\
     & \\
    \begin{bmatrix}
        1 & 1\\
        1& 2
    \end{bmatrix} 
        \exp\bigg\{ \begin{bmatrix}
        -1 & 0\\
        1& -1
    \end{bmatrix} \tau \bigg\}\exp\bigg\{- \begin{bmatrix}
        -1 & 0\\
        1& -1
    \end{bmatrix} t\bigg\}  &  \text{if}\ t\leq \tau
    \end{cases}
\end{equation*}

\begin{equation*}
    R_{\text ibm}(t,\tau)=
    \begin{cases}
        \exp\bigg\{ \begin{bmatrix}
        0 & 1\\
        0& 0
    \end{bmatrix} t\bigg\}\exp\bigg\{ -\begin{bmatrix}
        0 & 1\\
        0& 0
    \end{bmatrix} \tau\bigg\} \begin{bmatrix}
        \frac{\tau^3}{3} & \frac{\tau^2}{2}\\
        \frac{\tau^2}{2}& \tau
    \end{bmatrix} & \text{if}\ t\geq \tau\\
     & \\
    \begin{bmatrix}
        \frac{t^3}{3} & \frac{t^2}{2}\\
        \frac{t^2}{2}& t
    \end{bmatrix} 
        \exp\bigg\{ \begin{bmatrix}
        0 & 0\\
        1& 0
    \end{bmatrix} \tau \bigg\}\exp\bigg\{- \begin{bmatrix}
        0 & 0\\
        0& 1
    \end{bmatrix} t\bigg\}  &  \text{if}\ t\leq \tau
    \end{cases}
\end{equation*}

By applying Theorems \ref{thm:combine_ker_add} and \ref{thm:combine_ker_multi} to covariances  $R_{\text mat}$ and $R_{\text ibm}$, direct calculations show that for any seven consecutive points $\{t_i\}_{i=1}^7$, there exist seven coefficients $\{a_i\}_{i=1}^7$ such that the linear combinations $\sum_{i=1}^7a_i[R_{\text mat}(\cdot,t_i)+R_{\text ibm}(\cdot,t_i)]$ and $\sum_{i=1}^7a_i[R_{\text mat}(\cdot,t_i)R_{\text ibm}(\cdot,t_i)]$ are compactly supported on $[t_1,t_7]$. As shown in Figure \ref{fig:KP_experiment},  we computed the KP basis for 20 consecutive points $\{t_i=1+i/10\}_{i=1}^{20}$ and run Algorithm \ref{alg:KP} to convert  $\{K(\cdot,t_i)\}_{i=1}^{20}$ to KPs $\{\phi_i\}_{i=1}^{20}$ for additive kernel $K=K_{\text mat}+K_{\text ibm}$,  and product kernel $K=K_{\text mat}K_{\text ibm}$, which are all compactly supported.

 \subsection{Multidimensional Kernel Packets}
Multi-dimensional KPs for grid-based observations—such as full, sparse, and composite grids \cite{plumlee2021composite}—can be constructed via tensor products of KPs \citep{chen2022kernel}, but they are not applicable to scattered data. A KP-based backfitting algorithm for GP regression with additive kernels and scattered data was recently proposed in \cite{Zou17092025}, though it does not extend to product kernels.

In this subsection, we illustrate how to generalize KPs for additive and product kernels for multi-dimensional scattered data. The main idea remains similar—a linear combination of a finite number of kernel functions yields a function with compact support properties, as shown in Figure \ref{fig:KP_2D_matern}.

 \subsubsection{Additive Kernels}
 \label{sec:additive_kernel_mult}
Based on Theorem \ref{thm:combine_ker_add}, it is evident that the theorem remain valid even when the combined kernel is from adding kernels across varying dimensions. This implies that KPs exist for additive kernels at multi-dimensional input points. Here, we consider GPs $y^{(d)}$ all having the form \eqref{eq:SODE} and satisfying Condition \ref{condition:full_rank} in dimension $d$. Suppose each of $y^{(d)}$ has a specific SS model representation with  covariance $R^{(d)}$ with kernel $K_d=R^{(d)}_{1,1}$. 

\begin{theorem}   
\label{thm:combine_ker_add_multi_dim}  
    Given any $s=2mD+1$ scattered points $\{\bold{t}_i=(t_{i,1},t_{i,2},\cdots,t_{i,D})\}_{i=1}^{s}$,  define, for each dimension, the minimum and maximum points  $\underline{t}^{(d)}=\min_{i}\{t_{i,d}\}$ and $\overline{t}^{(d)}=\max_i\{t_{i,d}\}$.  Define   $R=\sum_{d=1}^D R^{(d)}$.   Define vector-valued function $H$ as
    \[H(\Bt)=[\phi_{j,d}(t_d),\psi_{j,d}(t_d)]_{j\in\{1,\cdots,m\},d\in\{1,\cdots,D\}},\ \text{where}\ \phi_{j,d}=R^{(d)}_{1,j}(\underline{t}^{(d)},\cdot), \ \psi_{j,d}=R^{(d)}_{1,j}(\overline{t}^{(d)},\cdot). \]
     By solving the  KP equations $\sum_{i=1}^{s}a_iH(\bold{t}_i)=0$, we have the irreducible KP:
    \begin{equation}
    \label{eq:combine_KP_add_multi_dim}\sum_{i=1}^{s+1}a_iR(\bold{t},\bold{t}_i)=0,\quad \forall\ \Bt\in U=\times_d\{(-\infty,\underline{t}^{(d)})\bigcup(\overline{t}^{(d)},\infty)\}.
    \end{equation}
\end{theorem}

 \begin{figure}
    \centering
    \includegraphics[width=.35\linewidth]{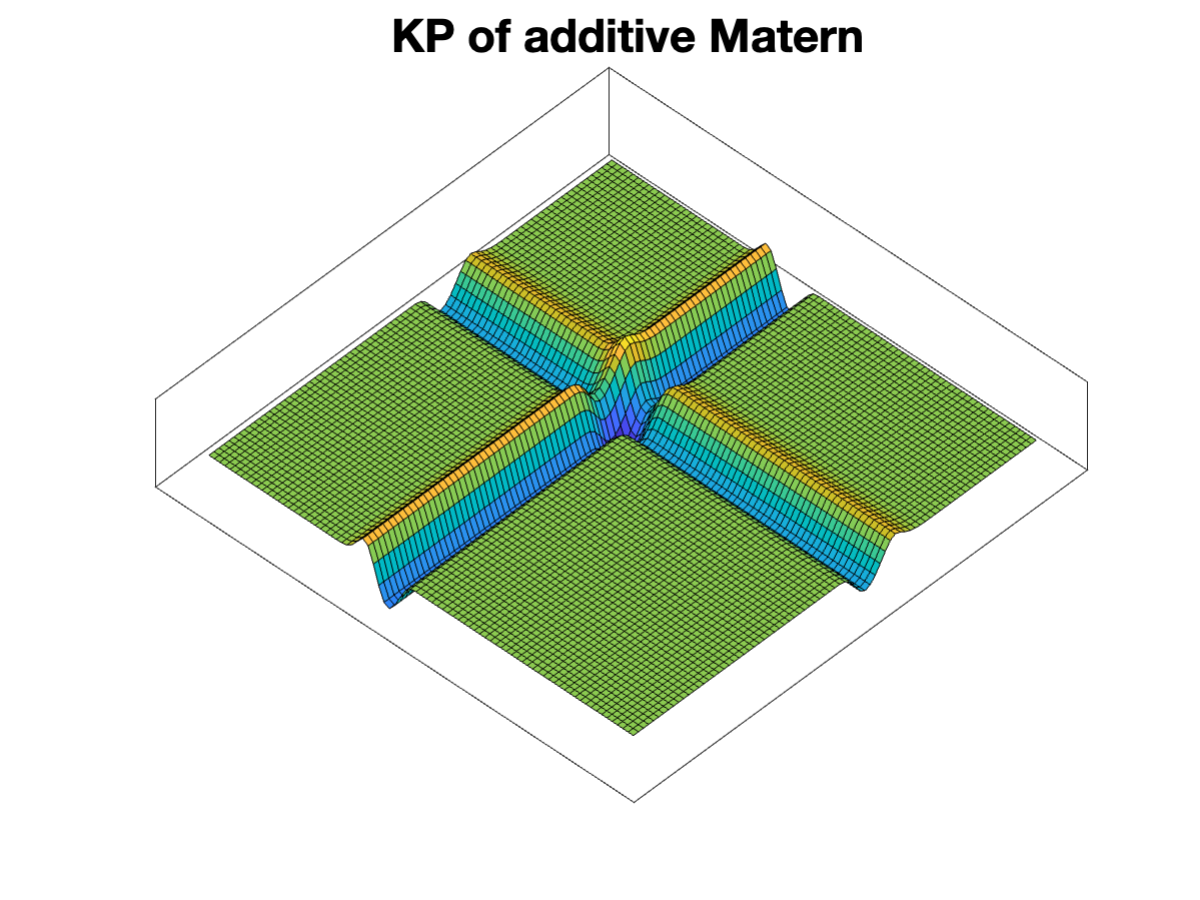}
     \includegraphics[width=.35\linewidth]{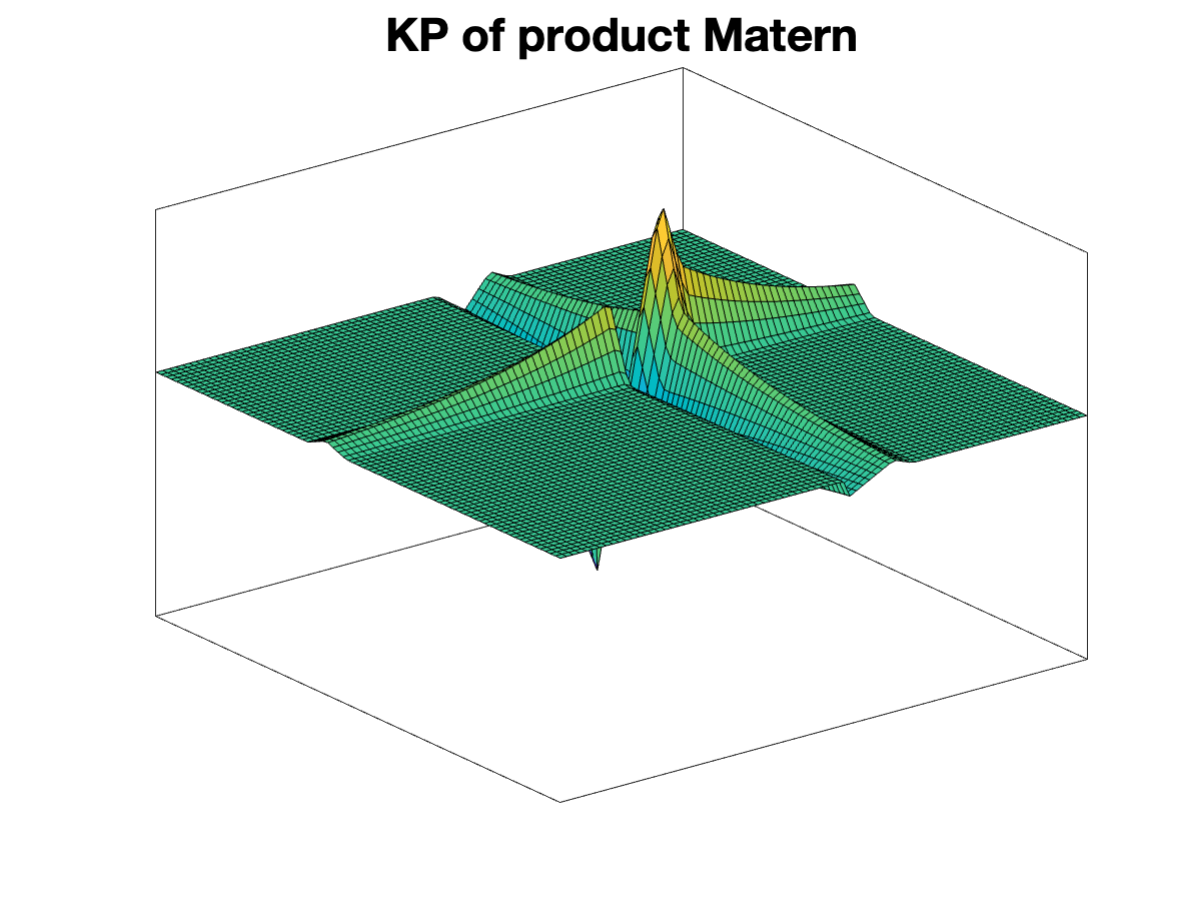}
    
    \caption{ Two-dimensional KP of additive (left) and product (right) Mat\'ern kernels}
    \label{fig:KP_2D_matern}
\end{figure}
Theorem \ref{thm:combine_ker_add_multi_dim} extends the idea of Theorem \ref{thm:combine_ker_add} to a multi-dimensional setting. While Theorem \ref{thm:combine_ker_add} focuses on combining two sets of fundamental solutions related to kernels within the same dimension through direct sum, Theorem \ref{thm:combine_ker_add_multi_dim} follows this approach by facilitating the direct sum of fundamental solutions from kernels in different dimensions.

We use a two-dimensional GPs with  additive Mat\'ern kernel to illustrate Theorem \ref{thm:combine_ker_add_multi_dim}:
\[K(\bold{t},\bold{t}')=(1+|t_1-t_1'|)\exp(-|t_1-t_1'|)+(1+|t_2-t_2'|)\exp(-|t_2-t_2'|).\]
By canceling common factors, the associated vector-valued function $H$ can be reduced to:
\[H(t_1,t_2)=[e^{-t_1},\ t_1e^{-t_1},\ e^{t_1},\ t_1e^{t_1},\ e^{-t_2},\ t_2e^{-t_2},\ e^{t_2},\ t_2e^{t_2}]^\top .\]
In this numerical example, we draw the following nine points uniformly from $[0,1]^2$.  By solving the KP equations given in Theorem \ref{thm:combine_ker_add_multi_dim} , we can have  a two-dimensional KP as shown in the left plot of Figure \ref{fig:KP_2D_matern}.


    

\subsubsection{Product Kernels}
\label{sec:product_kernel}
Based on Theorem \ref{thm:combine_ker_multi}, it is also evident that the theorem remain valid even when the combined kernel is from product kernels across varying dimensions. This implies that KPs exist for product kernels at multi-dimensional input points, i.e., kernel of the form $K(\Bt,\Bt)=\prod_{d=1}^DK_d(t_d,t_d')$. Similar to the additive setting in Section \ref{sec:additive_kernel_mult}, we consider covariances $R^{(d)}$  of GPs $y^{(d)}$ all having the form \eqref{eq:SODE} and meeting Condition \ref{condition:full_rank} in dimension $d$.

Unlike KP systems for multi-dimensional additive kernels that utilize a direct sum approach for each dimension, KP systems for multi-dimensional product kernels employ a tensor product of KP systems across dimensions. We can follow the basic idea  in Theorem \ref{thm:combine_ker_multi} for product form that involves kernels in different dimensions.
\begin{theorem}
    \label{thm:combine_ker_prod_multi_dim}
  Given any $s=(2m)^D+1$ scattered points $\{\Bt_i=(t_{i,1},t_{i,2},\cdots,t_{i,D})\}_{i=1}^s$, define, for each dimension, the minimum and maximum points  as $\underline{t}^{(d)}=\min_{i}\{t_{i,d}\}$ and $\overline{t}^{(d)}=\max_i\{t_{i,d}\}$. Define  $R=\bigotimes_{d=1}^D R^{(d)}$. 
 Define vector-valued function $H$ as 
 \[H(\Bt)=\bigotimes_{d=1}^D \left[R^{(d)}_{1}(\underline{t}^{(d)},\cdot),R^{(d)}_{1}(\overline{t}^{(d)},\cdot) \right].\]
 By solving the KP equations $\sum_{i=1}^sa_i H(\Bt_i) =0$, we have the irreducible KP :
    \begin{equation}
    \label{eq:combine_KP_prod_multi_dim}\sum_{i=1}^{s+1}a_i R(\bold{t},\bold{t}_i)=0,\quad \forall\ \Bt\in U=\times_d\{(-\infty,\underline{t}^{(d)})\bigcup(\overline{t}^{(d)},\infty)\}.
    \end{equation}
\end{theorem}
We  use the following two-dimensional product Mat\'ern kernel to illustrate Theorem \ref{thm:combine_ker_prod_multi_dim}:
\[K(\bold{t},\bold{t}')=\exp(-|t_1-t_1'|)\exp(-|t_2-t_2'|).\]
By canceling common factors, the covariances $R_1$ of Mat\'ern kernel $\exp(-|t_d-t_d'|)$ is equivalent to  $[e^{-t_d}\  e^{t_d}\ ]$. So  the function $H$ in Theorem \ref{thm:combine_ker_prod_multi_dim} is:
\begin{align*}
    H(t_1,t_2)=&[e^{-t_1}\  e^{t_1}\ ]^\top \bigotimes [e^{-t_2}\  e^{t_2}\ ]^\top=[e^{-t_1-t_2},\  e^{t_1-t_2},\ e^{-t_1+t_2},\ e^{t_1+t_2}\ ]^\top 
\end{align*}
In this numerical example, we draw the following five points uniformly from $[0,1]^2$.
 By solving KP equations given in Theorem \ref{thm:combine_ker_prod_multi_dim} , we can have  a two-dimensional KP as shown in the right plot of Figure \ref{fig:KP_2D_matern}.

\section{Training and Prediction Algorithms of GPs via KPs}
For GP $y\sim\CalN(0,K_{\Btheta})$ where $K_{\Btheta}$ is a  kernel parametrized by $\Btheta$.  Suppose we observe $n$ noisy data $(\BT,\BZ)=\{(\Bt_i,Z(\Bt_i))\}_{i=1}^n$ , where each data is  $Z(\Bt_i)=y(\Bt_i)+\varepsilon$ with $\varepsilon\sim \mathcal{N}(0,\sigma^2_y)$. In this case, the covariance of the
observed noisy responses is $\text{Cov}\big(Z(\Bt_i),Z(\Bt_j)\big)= K(\Bt_i,\Bt_j)+\sigma_y^2\mathbb{I}(i=j)$. In other words, the covariance matrix $\text{Cov}(Z,Z)$ is $K(\BT,\BT)+\sigma_y^2\bold{I}_n$. The posterior predictor at a new point $\Bt^*$ is also normal distributed with the following conditional mean and variance:
\begin{align}
          &\E\left[y(\Bt^*)\big| \BZ\right]= K(\Bt^*,\BT)\left[K(\BT,\BT)+{\sigma_y^2}\bold{I}\right]^{-1}\BZ,\label{eq:conditional-mean-noisy}\\
          & \text{Var}\left[y(\Bt^*)\big|\BZ\right]=K(\Bt^*,\Bt^*)-K(\Bt^*,\BT)\left[K(\BT,\BT)+{\sigma_y^2}\bold{I}\right]^{-1}K(\BT,\Bt^*),\label{eq:conditional-variance-noisy}
\end{align}
and the log-likelihood function of $\Btheta$ given data $\BZ$ is:
\begin{equation}\label{eq:loglike-noisy}
    L(\Btheta)=-\frac{1}{2}\left[ \log \det( K_{\Btheta}(\BT,\BT)+\sigma^2_y\bold{I}) + \BZ^\top \big[ K_{\Btheta}(\BT,\BT)+\sigma_y^2\bold{I}\big]^{-1}\BZ\right].
\end{equation}
In this section, we first propose algorithms for efficient computation of \eqref{eq:conditional-mean-noisy}–\eqref{eq:conditional-variance-noisy} for one-dimensional GPs without employing any approximation, and then extend them to the multi-dimensional case.

\label{sec:prediction_algorithm}

\subsection{One-dimensional Gaussian Processes}
Suppose input $\BT=\{\Bt_i=t_i\}_{i=1}^n$ is one dimensional and $y$  follows a parametrized SDE as follows
\begin{equation}
\label{eq:SODE_parametrize}
    y^{(m)}(t)+c_{m-1}(t;\Btheta)y^{(m-1)}.(t)+\cdots+c_0(t;\Btheta)y(t)=W(t)
\end{equation}
Then \eqref{eq:conditional-mean-noisy}, \eqref{eq:conditional-variance-noisy}, and \eqref{eq:loglike-noisy} can be calculated in $\CalO(m^3 n)$ because, from Algorithm \ref{alg:KP},  $K_{\Btheta}(\cdot,\BT)\BA_{\Btheta}=\BPhi^\top _{\Btheta}(\cdot)$ and $\BA_{\Btheta}K_{\Btheta}(\BT,\cdot)=\BPhi_{\Btheta}(\cdot)$ where $\BA_{\Theta}$ is the transformation matrix,  $K_\theta=R_{1,1}$, and $[\BPhi_{\Btheta}(\cdot)]_i=\phi^{(1)}_i$ KP basis functions obtained by Algorithm \ref{alg:KP} with input $K_{\Btheta}$ and sorted points $\BT$. So the covariance matrix $K_{\Btheta}(\BT,\BT)+\sigma_y^2\bold{I}$ admits the following factorization
\begin{equation}
     \label{eq:Kmat2KPmat-noisy}
     K_{\Btheta}(\BT,\BT)+\sigma_y^2\bold{I}=\BA_{\Btheta}^{-1}\big(\BPhi_{\Btheta}(\BT)+\sigma_y^2\BA_{\Btheta}\big)=\big(\BPhi_{\Btheta}(\BT)^\top +\sigma_y^2\BA_{\Btheta}\big)\BA_{\Btheta}^{-1},
\end{equation}
The computational time complexity of Algorithm \ref{alg:KP} is $\CalO(m^3n)$ obviously for it solves an $m\times (m+1)$ system in each of its $n$ total iterations.

By substituting \eqref{eq:Kmat2KPmat-noisy}  into \eqref{eq:conditional-mean-noisy}, \eqref{eq:conditional-variance-noisy}, and \eqref{eq:loglike-noisy}, we can obtain:
\begin{align}
          &\E\left[y(t^*)\big|\BZ\right]= \BPhi^\top (t^*)\left[\BPhi(\BT)+{\sigma_y^2}\BA\right]^{-1}\BZ,\label{eq:conditional-mean-KP-noisy}\\
          & \text{Var}\left[y(t^*)\big|\BZ\right]=K(t^*,t^*)-\BPhi^\top (t^*)\left[\BPhi(\BT)+{\sigma_y^2}\BA\right]^{-1} \BA^{-T}\BPhi(t^*)\label{eq:conditional-variance-KP-noisy}
\end{align}
and 
\begin{align}
    L(\Btheta)=&-\frac{1}{2}\bigg[ \log \det\big(\BPhi_{\Btheta}(\BT)+\sigma_y^2\BA_{\Btheta}\big)-\log \det(\BA_{\Btheta})+\BZ^\top \BA_{\Btheta}\big[\BPhi_{\Btheta}(\BT)+\sigma_y^2\BA_{\Btheta}\big]^{-1}\BZ\bigg].\label{eq:loglike-noisy-KP} 
\end{align}
According to Main Theorem \ref{thm:main} $\BPhi_{\Btheta}(\BT)$ and $\BA_{\Btheta}$ are banded matrices with bandwidth $m-1$ and $m$, respectively. Therefore, the matrix $ \BPhi_{\Btheta}(\BT)+\sigma_y^2\BA_{\Btheta}$ is also a banded matrix with bandwidth $m$. Time complexity for computing this sum is $\CalO( mn)$.  Now we present algorithms for the efficient computation of \eqref{eq:conditional-mean-KP-noisy}, \eqref{eq:conditional-variance-KP-noisy}, and \eqref{eq:loglike-noisy-KP}, as each requires distinct computational approaches.

\subsubsection{Computations of posterior mean \eqref{eq:conditional-mean-KP-noisy}}
\label{subsection:posterio_mean}
The training of \eqref{eq:conditional-mean-KP-noisy} can be regarded as solving the vector $C=\left[\BPhi(\BT)+{\sigma_y^2}\BA\right]^{-1}\BZ$. Because, as we mentioned previously, $ \BPhi(\BT)+\sigma_y^2\BA$ is a banded matrix with band width $m$, the computation of $C$ can be done in $\CalO(m^3n)$ time by applying  banded matrix solver. For example, the algorithm based on the LU decomposition in \cite{davis2006direct} can be applied to solve the equation. MATLAB also provides convenient and efficient builtin functions, such as \texttt{mldivide} or \texttt{decomposition}, to solve sparse banded linear system in this form.  

The posterior mean at a new point $t^*$ is calculated through the computation of the inner product $\BPhi^\top (t^*)C$, with $C$ being determined during the training phase. From Main Theorem \ref{thm:main},    number of  non-zero entries of $\BPhi(t^*)$ is  at most $m$. So the time complexity for computing the inner product is then $\CalO(\log n)$ for searching indices of the non-zero entries, or even $\CalO(1)$ if the smallest $i$ such that $t_i>t$ is known.

\subsubsection{Computations of posterior variance \eqref{eq:conditional-variance-KP-noisy} }
\label{subsec:posterior_variance}

In the computation of the conditional variance as specified by \eqref{eq:conditional-variance-KP-noisy}, the sparse structure of $\BPhi(t^*)$, which contains at most $m$ non-zero consecutive entries for any given point $t^*$, significantly reduces the computational complexity. For the training process, it is sufficient to calculate the $m$-band of the matrix $\left[\BPhi(\BT)+{\sigma_y^2}\BA\right]^{-1} \BA^{-T}$, since only these parts of the matrix are required for computing $\BPhi(t^*)^\top \left[\BPhi(\BT)+{\sigma_y^2}\BA\right]^{-1} \BA^{-T}\BPhi(t^*)$. This reduction of computation ensures efficiency, as entries outside the $m$-band do not contribute to the calculation of the conditional variance for any $t^*$.
\begin{algorithm}
\caption{Computing the $m$-band of $\BPsi^{-1} \BA^{-T}$}\label{alg:band_PhiA}
\textbf{Input}: banded matrices $\BPsi$ and $\BA$ \vspace{+1.5mm}

 \textbf{Output}:  $[\BPsi^{-1}\BA^{-T}]_{i,j}$ for $|i-j|\leq m$\vspace{+1.5mm}
 
Define matrix blocks $\BFH_i^-,\BFH_i,\BFH_i^+$ of $[h_{i,j}]:=\BA^\top \BPsi$ as 
\begin{align}
    \BFH_i^-=&\begin{bmatrix}
 h_{s_i,s_i-2m}  &\cdots& h_{s_i,s_i-1}\\
   &\ddots &  \vdots \\
    & &h_{s_{i+1}-1,s_{i}-1}
\end{bmatrix},\ \ \BFH_i=\begin{bmatrix}
 h_{s_i,s_i}  &\cdots& h_{s_i,s_{i+1}-1}\\
  \vdots &\ddots &  \vdots \\
  h_{s_{i+1}-1,s_{i}} &\cdots &h_{s_{i+1}-1,s_{i+1}-1}
\end{bmatrix},\nonumber\\
\BFH_i^+=&\begin{bmatrix}
 h_{s_i,s_{i+1}}  & & \\
  \vdots &\ddots &   \\
  h_{s_{i+1}-1,s_{i+1}} &\cdots &h_{s_{i+1}-1,s_{i+2}-1}
\end{bmatrix}\label{eq:tridiagonal_block}
\end{align}
where $i=1,\cdots,I$, $I=\lceil \frac{n}{2m}\rceil$, $s_i=(i-1)2m+1$, and $s_{I+1}-1=\min\{n,2m I \}$ \\
{\small(Note: {{$\BA^\top \BPsi
$ is a $2m$-banded matrix, and $\BFH_1^{-}$ and $\BFH_{I}^+$ are null}})}\vspace{+1.5mm}

Define matrix blocks $\BFM_i^-,\BFM_i,\BFM_i^+$ of $\BPsi^{-1}\BA^{-T}$ corresponding to the same entry indices of $\BFH_i^-,\BFH_i,\BFH_i^+$\vspace{+1.5mm}

Solve $\BFM_1$, $\BFM_1^+$\vspace{+1.5mm}

\For{$j= 2$ to $I$}{
 $\BFM_j^-=\BFM_{j-1}^+\quad $ ({{note: $\BA^\top \BPsi=\BA [K(\BT,\BT)+\sigma_y^2\bold{I}]\BA^\top $ is a symmetric matrix}}) \vspace{+1.5mm}

Solve auxiliary matrix $\BFM_j^{--}$: $$\BFH_{j-1}^{-}\BFM_{j-2}+\BFH_{j-1}\BFM_{j-1}^-+\BFH_{j-1}^+\BFM_j^{--}=0\quad (\text{note: skip for\ } j=2)$$ \vspace{+1.5mm}

 Solve $\BFM_j: \BFM_j^{--}\BFH_{j-1}^-+\BFM_j^{-}\BFH_{j-1}+\BFM_j\BFH_{j-1}^+=0$\vspace{+1.5mm}
 
 Solve $\BFM_{j}^+: \BFM_j^{-}\BFH_{j}^-+\BFM_j\BFH_{j}+\BFM_j^{+}\BFH_{j}^+=\bold{I}_{2m}\quad $ ({note: skip for $j=I$})
}

\textbf{return}: $\BFM_j^-,\BFM_j,\BFM_j^+, j=1,\cdots I$

\end{algorithm}

Algorithm \ref{alg:band_PhiA} is then  designed to compute the $m$-band of $\BPsi^{-1} \BA^{-T}$ in $\CalO(m^2n)$ time.  For notation simplicity, we let $\BPsi$ denote $\BPhi(\BT)+{\sigma_y^2}\BA$ so $\BPsi$ and $\BA$ are both $m$-banded matrices. The main concept behind Algorithm \ref{alg:band_PhiA} is that the multiplication of two $m$-banded matrices results in a $2m$-banded matrix, which can be partitioned into a block-tridiagonal matrix $\BFH = \text{diag}[\BFH^-_j, \BFH_j, \BFH^+_j]$, where each block is a $2m$-by-$2m$ matrix. Since we only require the $m$-band of $\BPsi^{-1}\BA^{-T}$, we can utilize the block-tridiagonal property of $\BFH$. This means that the multiplication of any row/column of  $\BPsi^{-1}\BA^{-T}$ by any column/row of $\BFH$ only involves three consecutive $2m$-by-$2m$ block matrices from $\BPsi^{-1}\BA^{-T}$. The process of computing the band of $\BPsi^{-1}\BA^{-T}$ is illustrated in Figure \ref{fig:banded_inverse} . Solving a $2m$-by-$2m$ matrix equation has a time complexity of $O(m^3)$, and since we only need to solve $O(nm)$ of these matrix equations, the total time complexity of Algorithm \ref{alg:band_PhiA} is $O(m^2n)$.

\begin{figure}
    \centering
    \includegraphics[width=.3\linewidth]{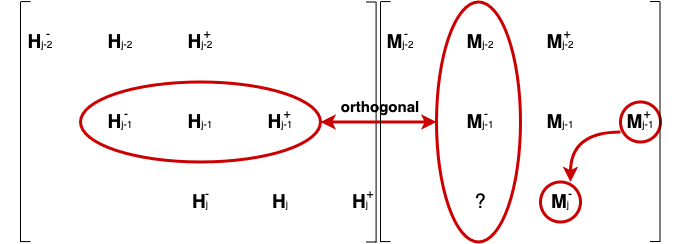}
    \includegraphics[width=.3\linewidth]{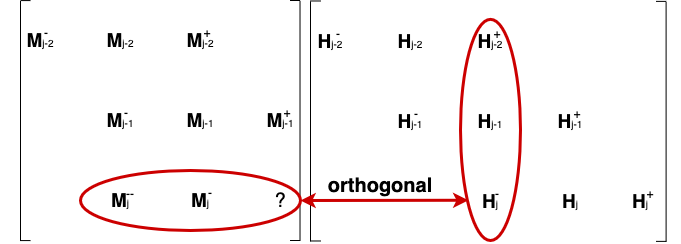}
    \includegraphics[width=.3\linewidth]{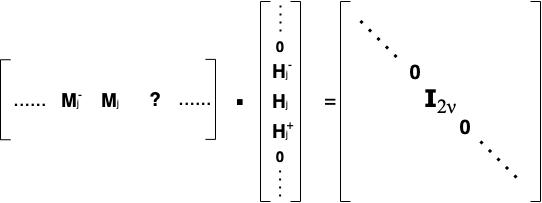}
    \caption{$\BFH$ is a block-tridiagonal matrix. When working on the $j$-th column, we can get $\BFM_{j}^-=\BFM_{j-1}^+$ directly by symmetry and solve an auxiliary matrix $\BFM_j^{--}$ by putting $[\BFM_{j-2};\BFM_{j-1}^-;\BFM_j^{--}]$ in a consecutive column (left); then we use $[\BFM_j^{--}, \BFM_j^- ,\BFM_j]$ to solve $\BFM_j$ (middle), and  $[\BFM_j^{-}, \BFM_j ,\BFM_j^+]$ to solve $\BFM_j^+$ (right). }
    \label{fig:banded_inverse}
\end{figure}

Following the training phase, the $m$-band structure of $\Psi^{-1}\BA^{-T}$ is established. To compute the posterior variance, we employ a method analogous to the one used for the posterior mean. The posterior variance at a new point $t^*$ is determined by $K(t^*,t^*)-\BPhi^\top (t^*)\Psi^{-1}\BA^{-T}\BPhi(t^*)$. Given that $\BPhi(t^*)$ contains at most $m$ non-zero consecutive entries and the $m$-band of $\Psi^{-1}\BA^{-T}$ is specified, the computational complexity for $\BPhi^\top (t^*)\Psi^{-1}\BA^{-T}\BPhi(t^*)$ is thus $\CalO(\log n)$, which accounts for the index search of non-zero entries, or even $\CalO(1)$, assuming the smallest index $i$ such that $t_i>t$ is predetermined.

If $t^*$ is predetermined, the computation of \eqref{eq:conditional-variance-KP-noisy} can be further simplified by employing banded-matrix solvers, similar to the approach used for calculating the posterior mean.

\subsubsection{Computations of log-likelihood \eqref{eq:loglike-noisy-KP} }

For learning the hyperparameter $\Btheta\in\boldsymbol{\Theta}$, we need to  directly compute the value of log-likelihood function \eqref{eq:loglike-noisy} if  $\BTheta$ is discrete or its gradient to run gradient descent if $\BTheta$ is continuous.

In the discrete scenario, calculating \eqref{eq:loglike-noisy} requires the computation of matrix inverses and determinants. The matrix inversion part  can be efficiently handled using the banded matrix solver in Section \ref{subsection:posterio_mean}. The focus now is the efficient computation of the following terms:
\[\log \det(\BPhi_{\Btheta}(\BT)+{\sigma_y^2}\BA_{\Btheta}),
    \quad\log\det(\BA_{\Btheta}).\]
Because both $\BA$ and $\BPhi_{\Btheta}(\BT)$ are $m$-banded matrices, their determinants can be computed in $\CalO(m^2n)$ time by sequential methods \citep[section 4.1]{kamgnia2014some}. 

In the continuous scenario, the gradient of log-likelihood $L$ can be written in the following form via direct calculations:
\begin{equation}
\label{eq:log_like_1}
    \begin{aligned}
    2\frac{\partial L}{\partial \theta_j}=& \text{Tr}\left(\left[K_{\Btheta}^{-1}\BZ\BZ^\top -\bold{I}\right]\BPhi_{\Btheta}(\BT)^{-1}\left(\frac{\partial\BPhi_{\Btheta}(\BT)}{\partial \theta_j}-\frac{\partial \BA_{\Btheta}}{\partial \theta_j}K_{\Btheta}\right)\right)\\
    =&\underbrace{\text{Tr}\left(\left[\BZ^\top \BPhi_{\Btheta}(\BT)^{-1}\frac{\partial\BPhi_{\Btheta}(\BT)}{\partial \theta_j}\right]\left[\BA_{\Btheta}\BPhi_{\Btheta}(\BT)^{-T}\BZ\right]\right)}_{A}\\
    &-\underbrace{\text{Tr}\left(\left[\BZ^\top \BPhi_{\Btheta}(\BT)^{-1}\right]\left[\frac{\partial \BA_{\Btheta}}{\partial \theta_j}\BZ\right]\right)}_{B}-\underbrace{\text{Tr}\left(\BPhi_{\Btheta}(\BT)^{-1}\frac{\partial\BPhi_{\Btheta}(\BT)}{\partial \theta_j}\right)}_{C}+\underbrace{\text{Tr}\left(\BA^{-1}_{\Btheta}\frac{\partial \BA_{\Btheta}}{\partial \theta_j}\right)}_{D}
\end{aligned}
\end{equation}
where the second equality is from the KP identities $K_{\Btheta}=\BPhi_{\Btheta}(\BT)^{T}\BA_{\Btheta}^{-1}=\BA_{\Btheta}^{-1}\BPhi_{\Btheta}(\BT)$ and the last equality is from the identity $\text{Tr}(AB)=\text{Tr}(BA)$ for any matrices $A$ and $B$.

Notice that both terms  $A$ and $B$ in \eqref{eq:log_like_1} are  scalars:
\begin{align*}
    &\text{Tr}\left(\left[\BZ^\top \BPhi_{\Btheta}(\BT)^{-1}\frac{\partial\BPhi_{\Btheta}(\BT)}{\partial \theta_j}\right]\left[\BA_{\Btheta}\BPhi_{\Btheta}(\BT)^{-T}\BZ\right]\right)=\left[\BZ^\top \BPhi_{\Btheta}(\BT)^{-1}\frac{\partial\BPhi_{\Btheta}(\BT)}{\partial \theta_j}\right]\left[\BA_{\Btheta}\BPhi_{\Btheta}(\BT)^{-T}\BZ\right],\\
    &\text{Tr}\left(\left[\BZ^\top \BPhi_{\Btheta}(\BT)^{-1}\right]\left[\frac{\partial \BA_{\Btheta}}{\partial \theta_j}\BZ\right]\right)=\left[\BZ^\top \BPhi_{\Btheta}(\BT)^{-1}\right]\left[\frac{\partial \BA_{\Btheta}}{\partial \theta_j}\BZ\right].
\end{align*}
Because $\BPhi_{\Btheta}(\BT)$, $\BA_{\Btheta}$, $\frac{\partial \BA_{\Btheta}}{\partial \theta_j}$, and $\frac{\partial\BPhi_{\Btheta}(\BT)}{\partial \theta_j}$ are all banded matrices, terms  $A$ and $B$ can be computed in $\CalO(m^3n)$ time using banded matrix solver as described in Section \ref{subsection:posterio_mean}.

For the computation of terms $C$ and $D$ in \eqref{eq:log_like_1}, both formulated as $\text{Tr}(A^{-1}B)$ with $A$ and $B$ being $m$-banded matrices, the approach outlined in Section \ref{subsec:posterior_variance} is applicable. The focus is on computing the $m$-band of $A^{-1}$, given that $B$ is $m$-banded, which implies the computation of trace involves only the $m$-band of $A^{-1}$. This process is achievable by simply replacing the $2m$ in \eqref{eq:tridiagonal_block} and Algorithm \ref{alg:band_PhiA} by $m$ and can be finished within $\CalO(m^3n)$ time. With the $m$-band of $A^{-1}$ identified, the calculation of $\text{Tr}(A^{-1}B)$ is similarly efficient, maintaining the overall time complexity at $\CalO(m^3n)$.

\subsection{Multi-dimensional Gaussian Processes}
\cite{chen2022kernel} introduced efficient training and prediction algorithms for GPs using KPs of Matérn-type kernels when observations are from grid-based designs. These algorithms are also applicable to KPs with general kernels, as their key idea is to covert the kernel matrix by a Kronecker product of sparse banded matrices, which can be constructed by taking the Kronecker product of the outputs from Algorithm~\ref{alg:KP} for general kernels.

We propose  algorithms for GPs' efficient training and prediction using general KPs of product kernel $K(\Bt,\Bt)=\prod_{d=1}^DK_d(t_d,t_d)$ or additive kernels $K(\Bt,\Bt)=\sum_{d=1}^DK_d(t_d,t_d)$ when observations are scattered . We suppose  each $K_d$ is the kernel function of a one-dimensional GP following parametrized SDE \eqref{eq:SODE_parametrize} meeting Condition \ref{condition:full_rank}.

In multiple dimensions, the banded structure of KPs no longer holds. However, we can still use KP to decompose the kernel matrix $K(\BT,\BT)$ into sparse matrices, thereby enabling efficient computation of \eqref{eq:conditional-mean-noisy}–\eqref{eq:loglike-noisy}. We first propose the following algorithm for such a decomposition:
\begin{algorithm}[h]

\textbf{Input: }{ Scattered points $\BT$ , product kernel $K=\prod_d K_d$ or additive kernel $K=\sum_dK_d$} 

\textbf{Return: }{sparse matrices $\textbf{A}$ and kernel packets $\BPsi(\cdot)$} 


\For{$i=1,2,\cdots, n$}{  
 Search for the $s$ nearest points to  $\mathbf{t}_i$  in $\mathbf{T}$, denoted by $\{\mathbf{t}_{i_j}\}_{j=1}^s$, where $ s = (2m)^D + 1$  for  product kernel and  $s = 2mD + 1$  for additive kernel;\vspace{+1.5mm}

Delete $\Bt_i$ ;  \small{(Note: This is to ensure that the same nearest point set  $\{\mathbf{t}_{i_j}\}_{j=1}^s$ will not appear in any following iteration, otherwise, the resulted $\BA$ is not invertible)}\vspace{+1.5mm}

Solve for  coefficients $\sum_{j=1}^sa_j H(\Bt_{i_j})=0$ according to Theorem \ref{thm:combine_ker_prod_multi_dim} for product kernel and Theorem \ref{thm:combine_ker_add_multi_dim} for additive kernel, then assign $\BA_{\Bt_i,\Bt_{i_j}}=a_j$ for $j=1,\cdots,s$\  ;
} 
$\BPsi(\cdot)=\BA K(\BT,\cdot)$ \ \small{(Note: $\BA_{\Bt,\Bt'}$ has the same index as $K(\Bt,\Bt')$ in $K(\BT,\BT)$.)}
\caption{Computing sparse transformation matrix $\textbf{A}$ and kernel packets $\BPsi(\cdot)$} \label{alg:KP_multi_dim}

\end{algorithm}

In Algorithm \ref{alg:KP_multi_dim}, the search for nearest neighbors can be achieved in $\CalO(1)$ time for structured samples (e.g. partitioned samples), and in $\CalO(\log n)$ time in the worst case by using the matching algorithm in \cite{friedman1977algorithm}, which is implemented as a built-in function in MATLAB. Therefore, the time complexity of Algorithm \ref{alg:KP_multi_dim} is $\CalO(n\log n)$.  Figure~\ref{fig:KP_muli_dim} illustrates two-dimensional KPs constructed from 1,000 scattered points on $[0,1]^2$ for product kernel $K(\Bt,\Bt')=e^{-|t_1-t_1'|-|t_2-t_2'|}$. Ten KP functions are randomly selected from $\BPsi(\cdot)$ (i.e., ten rows of of $\BPsi(\cdot)$) and plotted. Each KP function is compactly supported on a small region, making $\BPsi(\BT)$ a sparse matrix.

 \begin{figure}
    \centering
    \includegraphics[width=.35\linewidth]{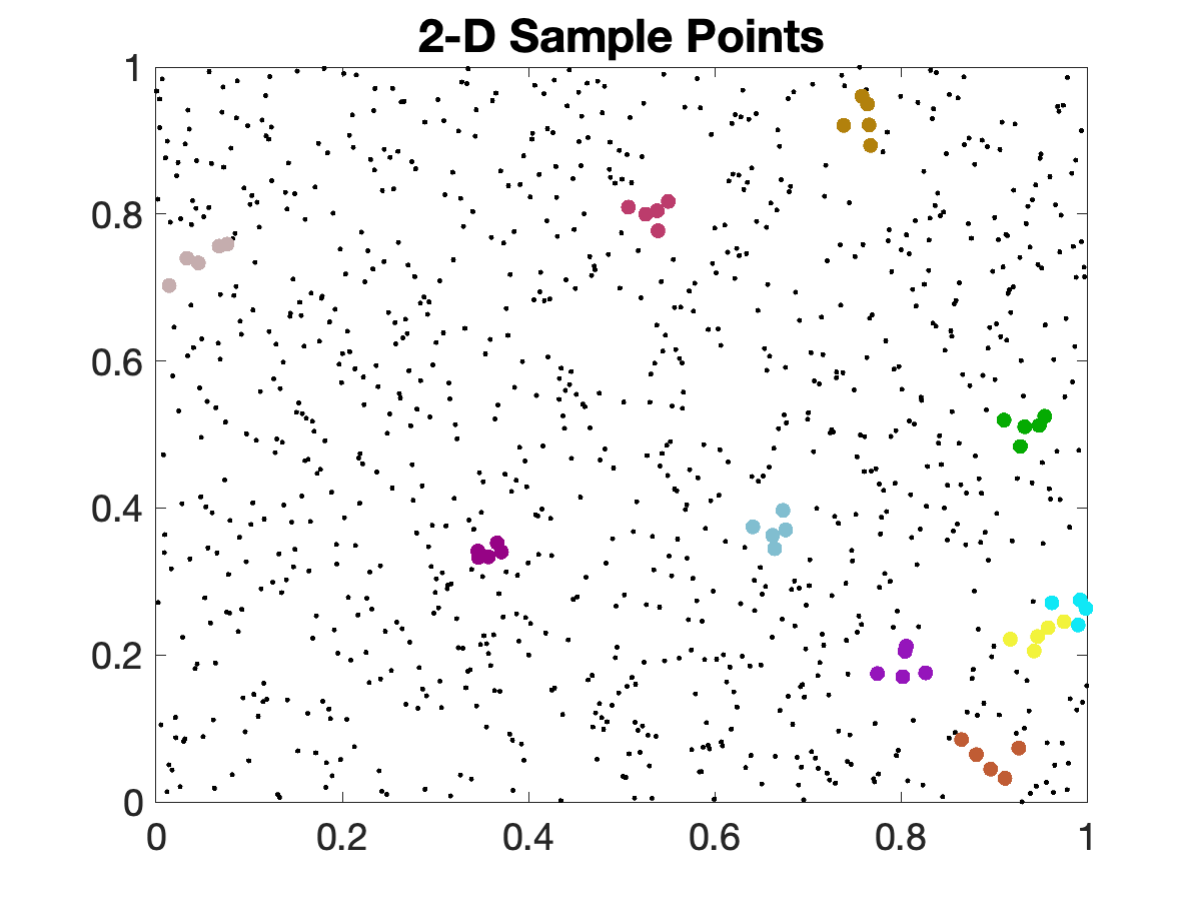}
     \includegraphics[width=.35\linewidth]{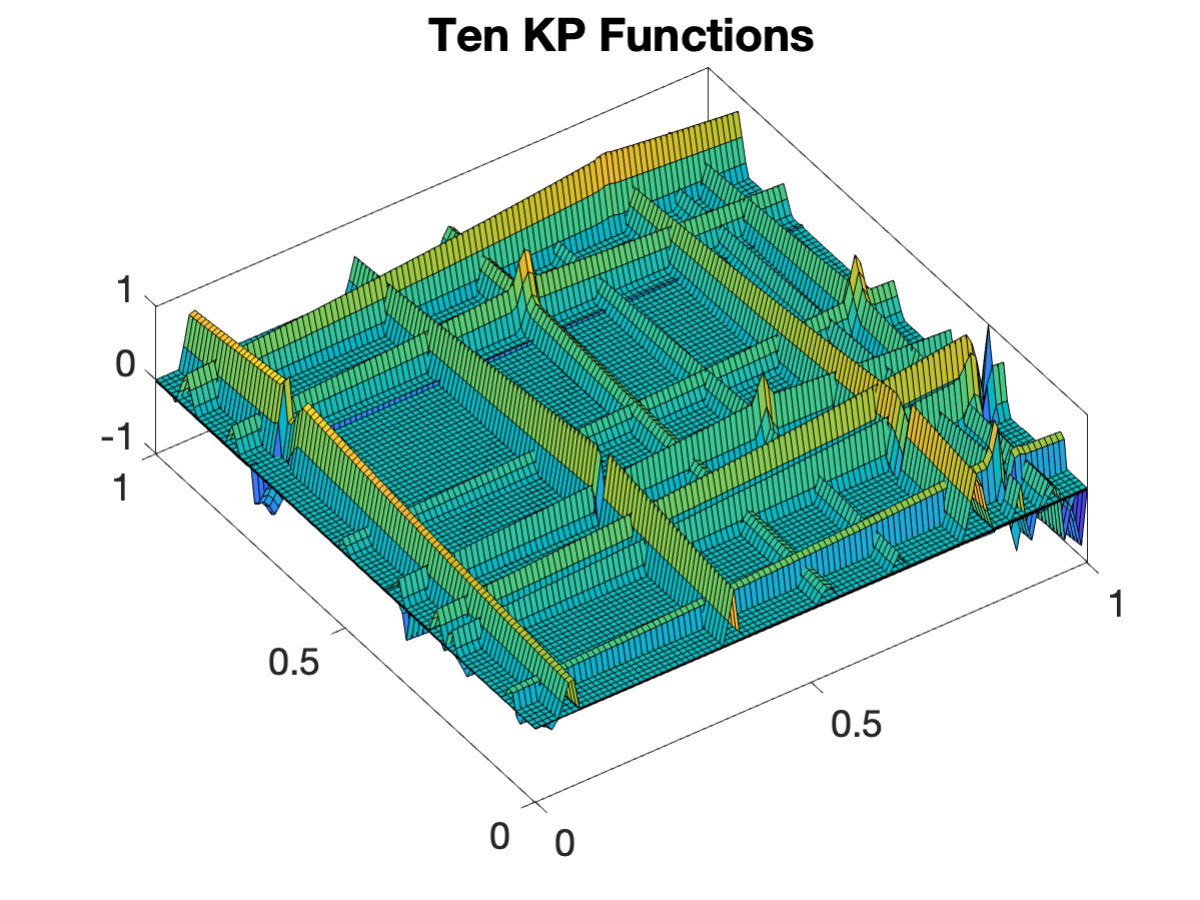}
    
    \caption{Left: Ten groups of points in different colors are selected from 1,000 sample points to construct ten KP functions. Right: The plots of the ten KP functions for product kernel $K(\Bt,\Bt')=e^{-|t_1-t_1'|-|t_2-t_2'|}$ corresponding to the selected groups of points.}
    \label{fig:KP_muli_dim}
\end{figure}

Moreover, using the sparse decomposition $\BA K(\BT,\BT)=\Psi(\BT)$, \eqref{eq:conditional-mean-noisy} and \eqref{eq:conditional-variance-noisy} can be efficiently computed via iterative methods, such as the conjugate gradient method \cite{wang2019exact} or kernel gradient \cite{ding2024random}. These iterative methods require $\CalO(\log n)$ iterations, with each iteration involves matrix-vector multiplications only. Consequently, the sparse decomposition provided by KPs significantly improves computational efficiency by allowing sparse matrix multiplications.

\section{Numerical Experiments}
\label{sec:experiments}
To evaluate the performance of KPs, we apply our algorithms to additive GPs and GPs with product-form kernels across several datasets, each containing millions of data points, and compare their prediction errors on millions of test samples. Due to the massive size of the test sets, classical SDE-based GP algorithms fail because of excessive memory requirements. In contrast, KPs offer a feasible solution without resorting to any low-rank approximations because of their inherent sparsity. We select the following two low-rank approximation algorithms as benchmarks:
\begin{enumerate}  [nosep]
    \item \textbf{Random Fourier feature (RFF)}:  Approximates the Gaussian kernel using Fourier features $\{\cos(\boldsymbol{\omega}_j^\top (\mathbf{t} - \mathbf{t}'))\}_{j=1}^m$, where $\{\boldsymbol{\omega}_j\}_{j=1}^m$ are i.i.d. samples drawn from a spherical Gaussian distribution \citep{rahimi2008random}. We set $m = 1000$, which achieves a good balance between numerical accuracy and computational efficiency according to our experiments.
    \item \textbf{Sparse GP}: Randomly select $m$ points $\{\tilde{\Bt}_j\}_{j=1}^m$ from the data sets and use $\{K(\tilde{\Bt}_j,\cdot)\}_{j=1}^m$ as basis functions to approximate the GP \citep{SnelsonGhahramani05}. We also set $m = 1000$ for a good balance between numerical accuracy and computational efficiency.
\end{enumerate}

\subsection{Additive GPs}
We evaluate KPs on the additive GP model using the SUSY dataset, which contains five million samples, each with $D = 8$ dimensions, and is designed to classify whether an observed event originates from a supersymmetric signal process. We use four million samples for training and the remaining one million for testing. We use the probit approximation $\sigma(y(\Bt))$ \citep[Section 3]{RasmussenWilliams06} to approximate the conditional  binary distribution of data where $y(\Bt)$ is a 8-dimensional additive GPs
\begin{equation}
\label{eq:additive_susy}
    y(\Bt)=\sum_{d=1}^8y^{(d)}(t_d)
\end{equation}
The conditional distribution $y(\mathbf{T}^*) \mid \mathbf{T}$ at the test points $\mathbf{T}^*$, given the training points $\mathbf{T}$, can be computed using the backfitting algorithm \citep{saatcci2012scalable}, which iteratively solves a series of one-dimensional GP regressions for $\{y^{(d)}\}_{d=1}^8$. Our experiments are conducted with training set sizes of one, two, three and four million, and are evaluated on one million randomly selected test points.

For KPs and sparse GPs, we use the Mat\'ern-$3/2$ kernel $K_{\rm mat}$ for each additive components, i.e. the  $y^{(d)}\sim \CalN(0,K_{\rm mat})$. Since KPs are not subject to any approximation, their error rate decreases significantly as the data size increases. In contrast, for RFF and Sparse GP, because their approximation capacity is constrained by the approximation degree $m$, their performance improvement with increasing training data size is much less obvious.
\begin{table}[ht]

\centering
\caption{Classification error rate  of SUSY}
\vspace{5pt}
\scalebox{0.85}{ 
\begin{tabular}{ |p{3cm}||p{3cm}|p{3cm}|p{3cm}|p{3cm}|  }
 \hline
Data size & one million & two million & three million& four million\\
 \hline
KPs     &23.86\%&  22.25\% & 20.14\%& 18.89\%\\
 \hline
RFF  &  33.93\%   &33.27\% &  33.03\%   &32.93\%\\
\hline
Sparse GP & 24.16\% & 23.86\% & 23.64\% & 23.44\%\\
\hline
\end{tabular}}

\label{tab:SUSY}
\end{table}

\subsection{Product Form Kernels}
 \begin{figure}
    \centering
    \includegraphics[width=.32\linewidth]{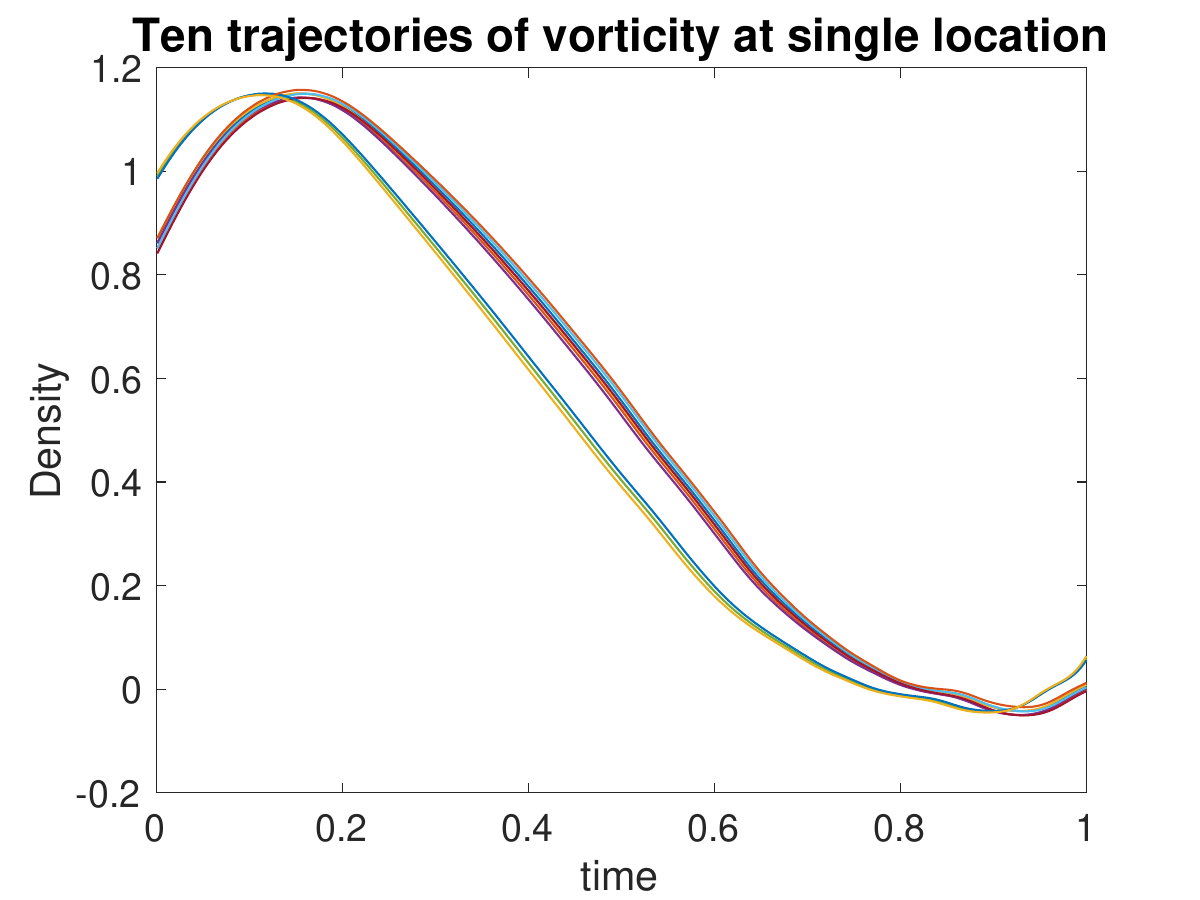}
    \includegraphics[width=.32\linewidth]{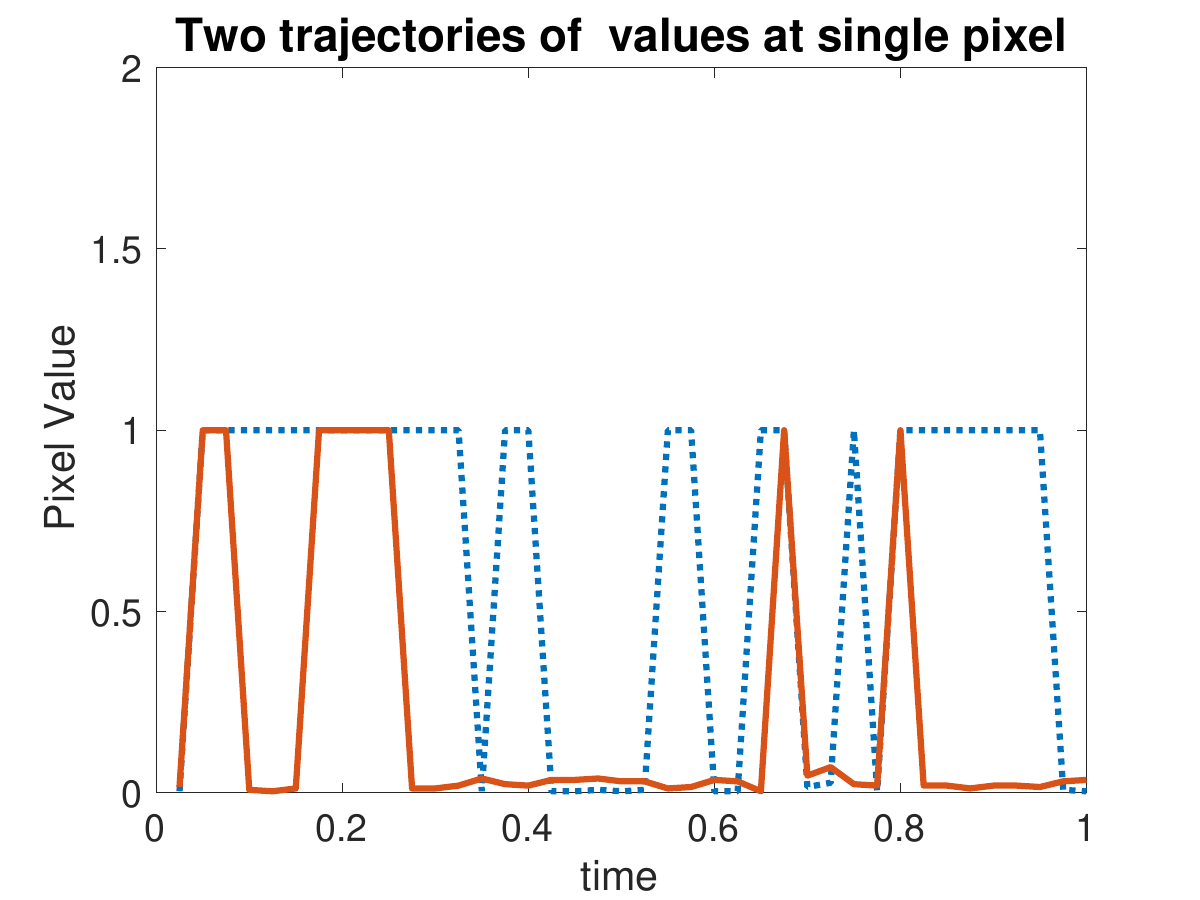}
    \includegraphics[width=.32\linewidth]{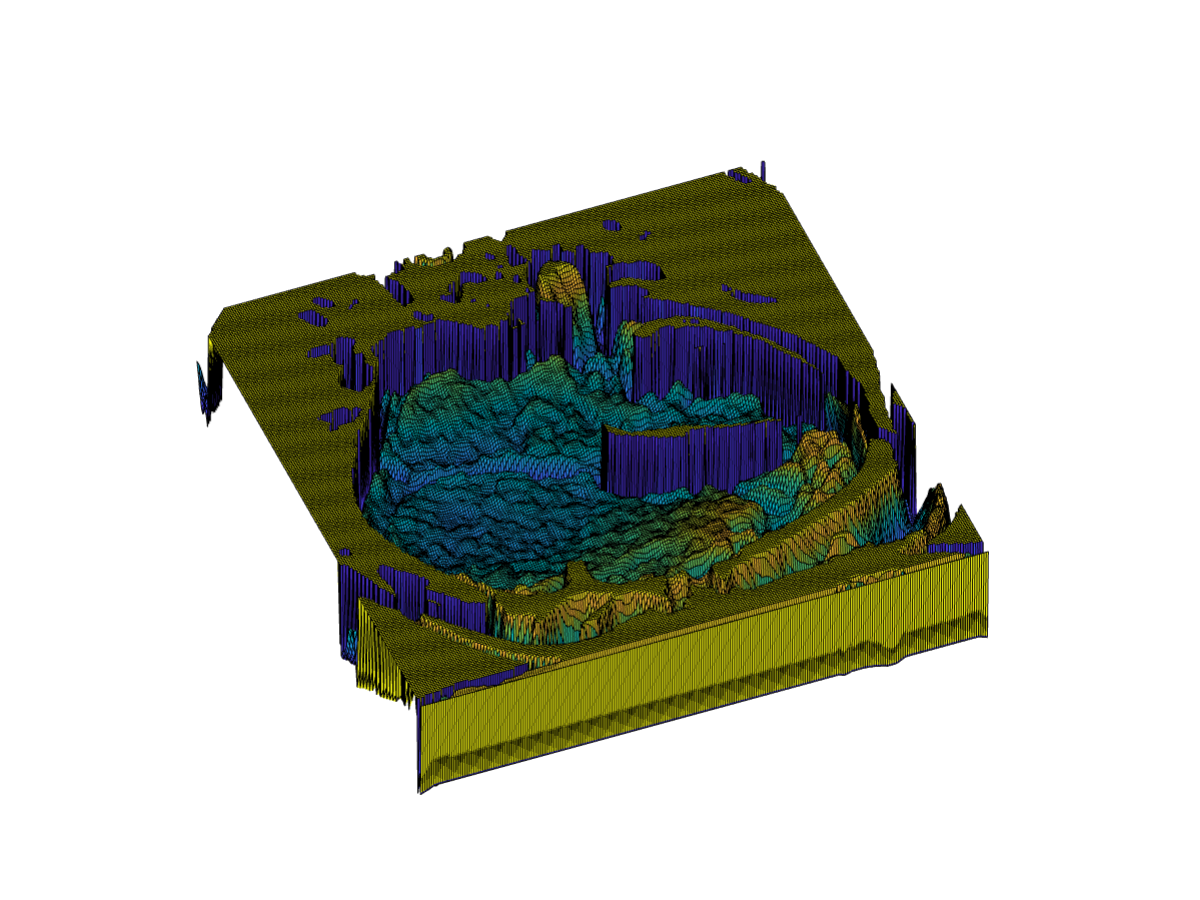}

    \caption{Left: Trajectories of vorticity at different locations; Middle: Trajectories of values at two different pixels; Right: MRI is treated as a non-smooth function on surface.}
    \label{fig:experiment_surface}
\end{figure}
\subsubsection{Euler Flow:} 
We solve the trajectory of a two-dimensional vorticity governed by the Euler equation. Specifically, we randomly select 20, 40, 60, 80, and 100 time slices of the vorticity dynamics. For each time slice, we use 10,000 scattered triangular finite elements  and solve the weak solution of the Euler equation constructed by these finite elements. Consequently, the total data sizes in the experiments are $n = 2\times10^5$, $4\times10^5$, $6\times10^5$, $8\times10^5$, and $10^6$ scattered points, respectively. The two spatial dimensions and one temporal dimension form a three-dimensional PDE reconstruction problem. Thus, the vorticity dynamics are modeled as a GP sample path with the following product kernel.
\begin{equation}
    \label{eq:vortex_kernel}
    K((\Bx,t),(\Bx',t'))=[\sin(2\pi |t-t'|)+\cos(2\pi|t-t'|]\exp\{-|t-t'|\} \exp\{-\|\Bx-\Bx'\|_1\}
\end{equation}
where the temporal dimension is modeled using the differentiable and periodic kernel $[\sin(2\pi |t-t'|)+\cos(2\pi|t-t'|]\exp\{-|t-t'|\}$ since the vorticity at a fixed spatial point exhibits a single-period pattern, as shown in the first plot of Figure \ref{fig:experiment_surface}. For the spatial dimensions, we use the product Laplace kernel $\exp\{-\|\Bx-\Bx'\|_1\}=\exp\{-|x_1-x_1'|-|x_2-x_2'|\}$, which possesses the spatial Markov property \citep{ding2024sample} and captures the second-order spatial relation governed by the Euler equation.

The experimental results are presented in the first column of Figure \ref{fig:product_form_exp}. The MSE comparison shows that the MSE of KPs consistently decreases as the sample size increases, since KPs perform exact computations without relying on any approximation. In contrast, the two competing methods exhibit little improvement with increasing sample size due to errors introduced by their low-rank approximations. As shown in the plots, KPs accurately capture the spatiotemporal vorticity dynamics.

 \begin{figure}
    \centering
    \includegraphics[width=.22\linewidth]{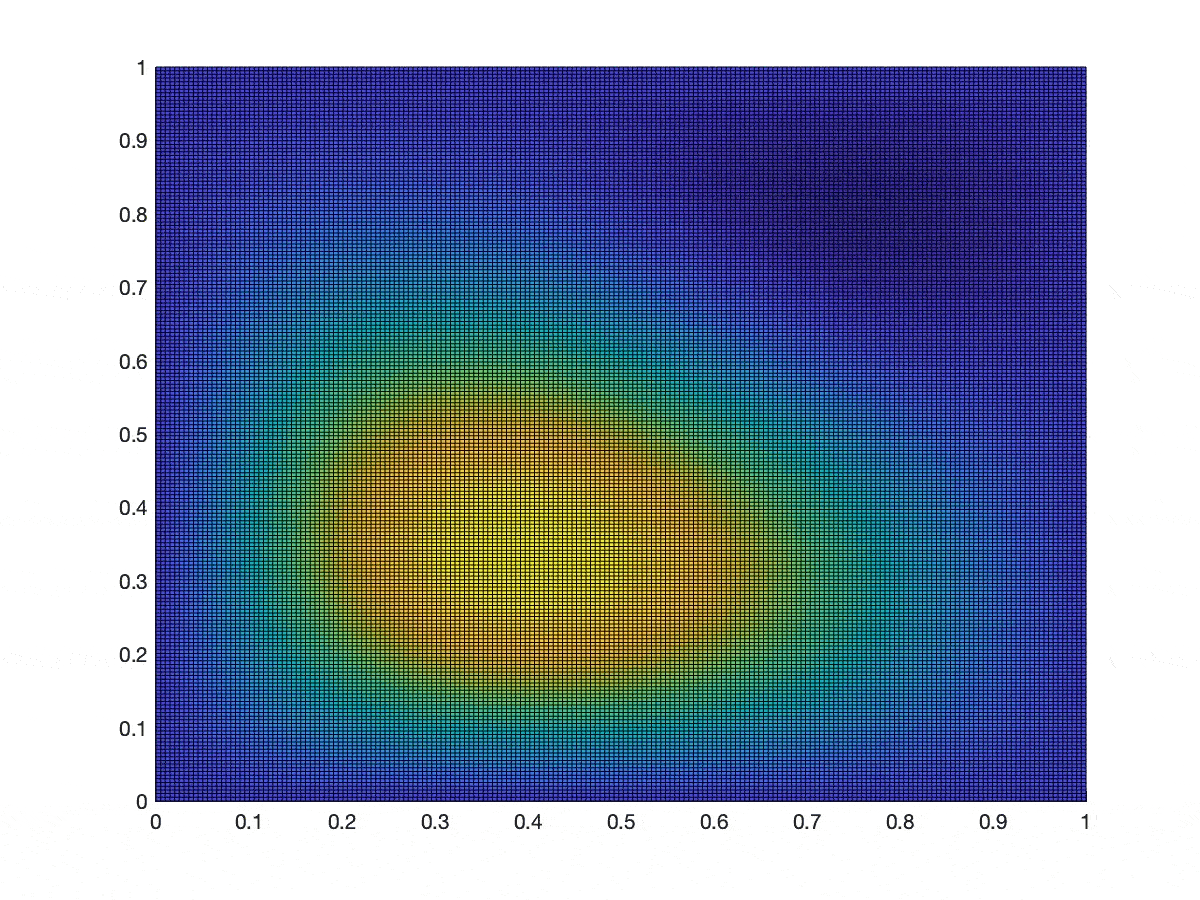}
    \includegraphics[width=.22\linewidth]{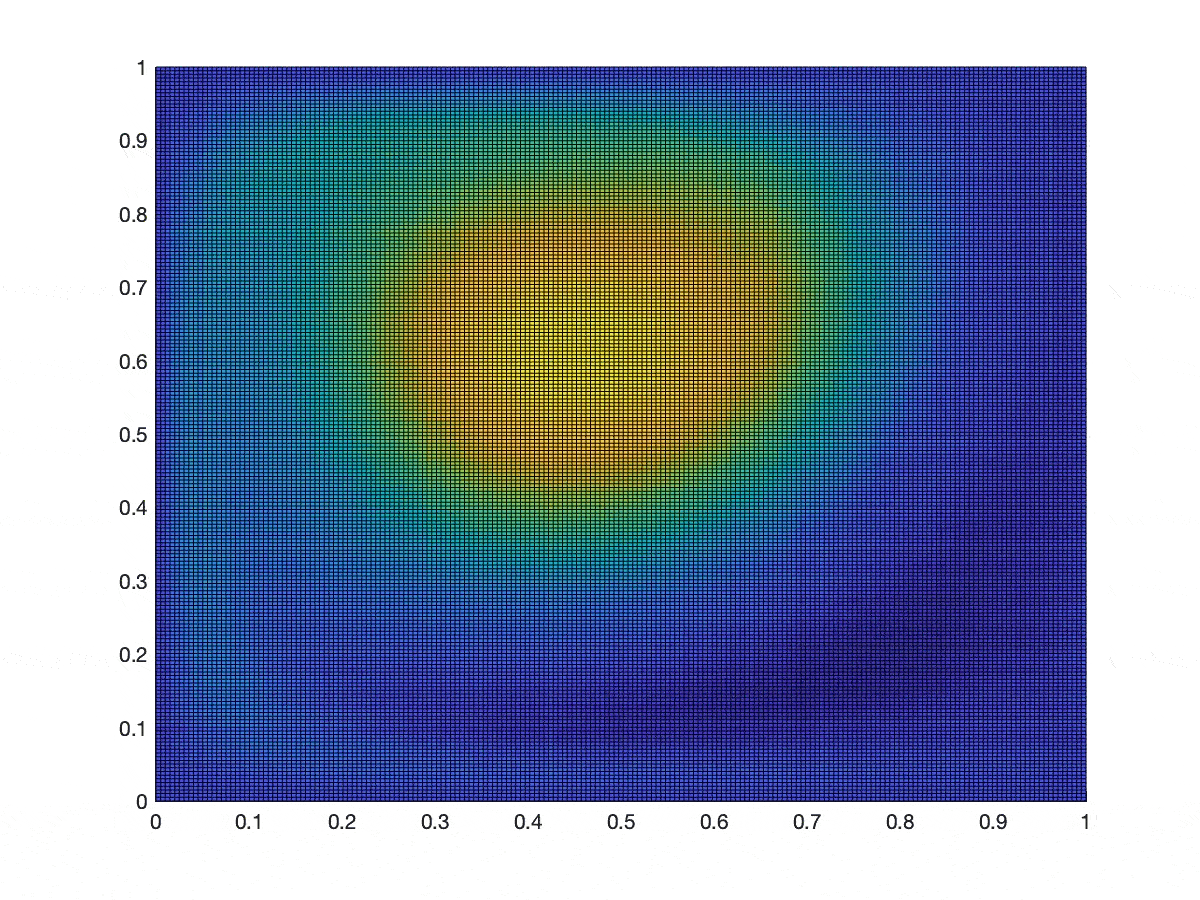}
    \includegraphics[width=.22\linewidth]{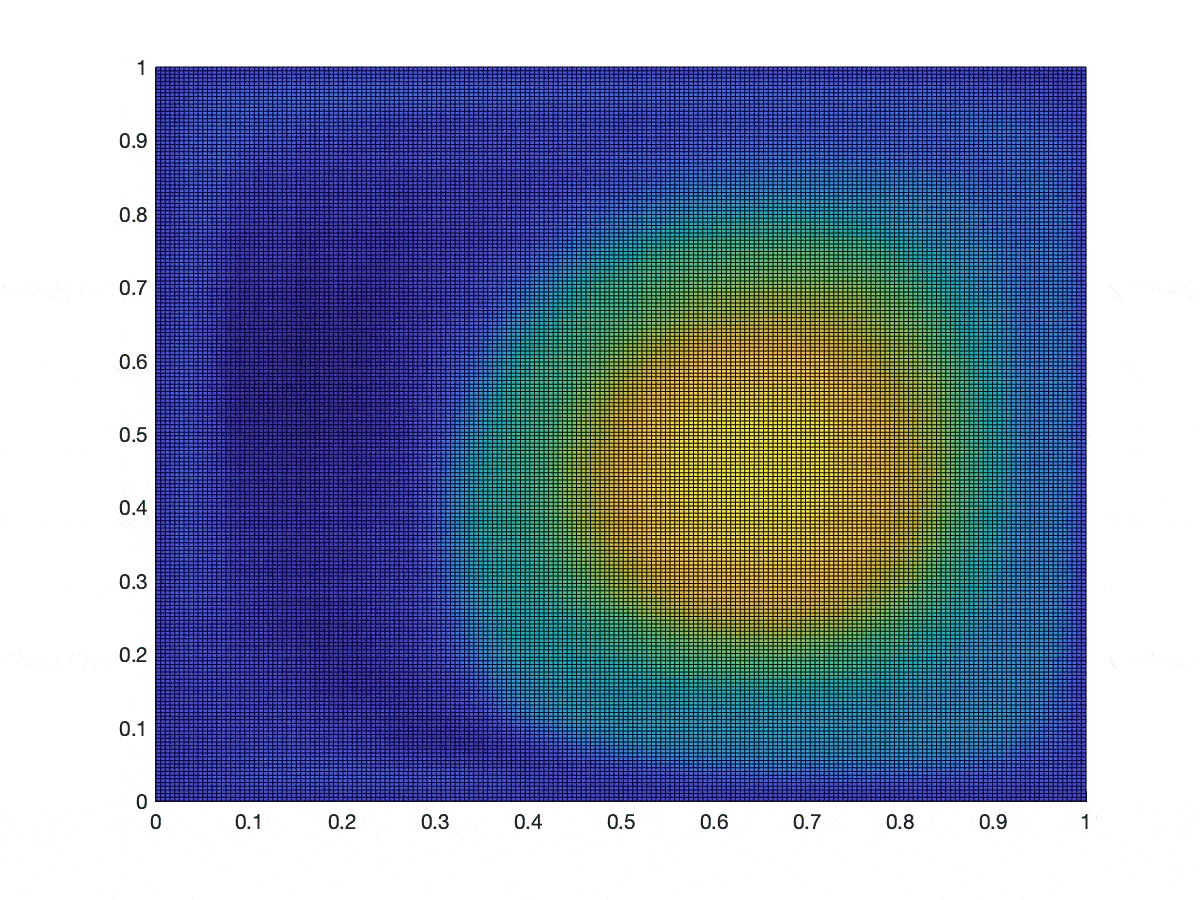}
    \includegraphics[width=.22\linewidth]{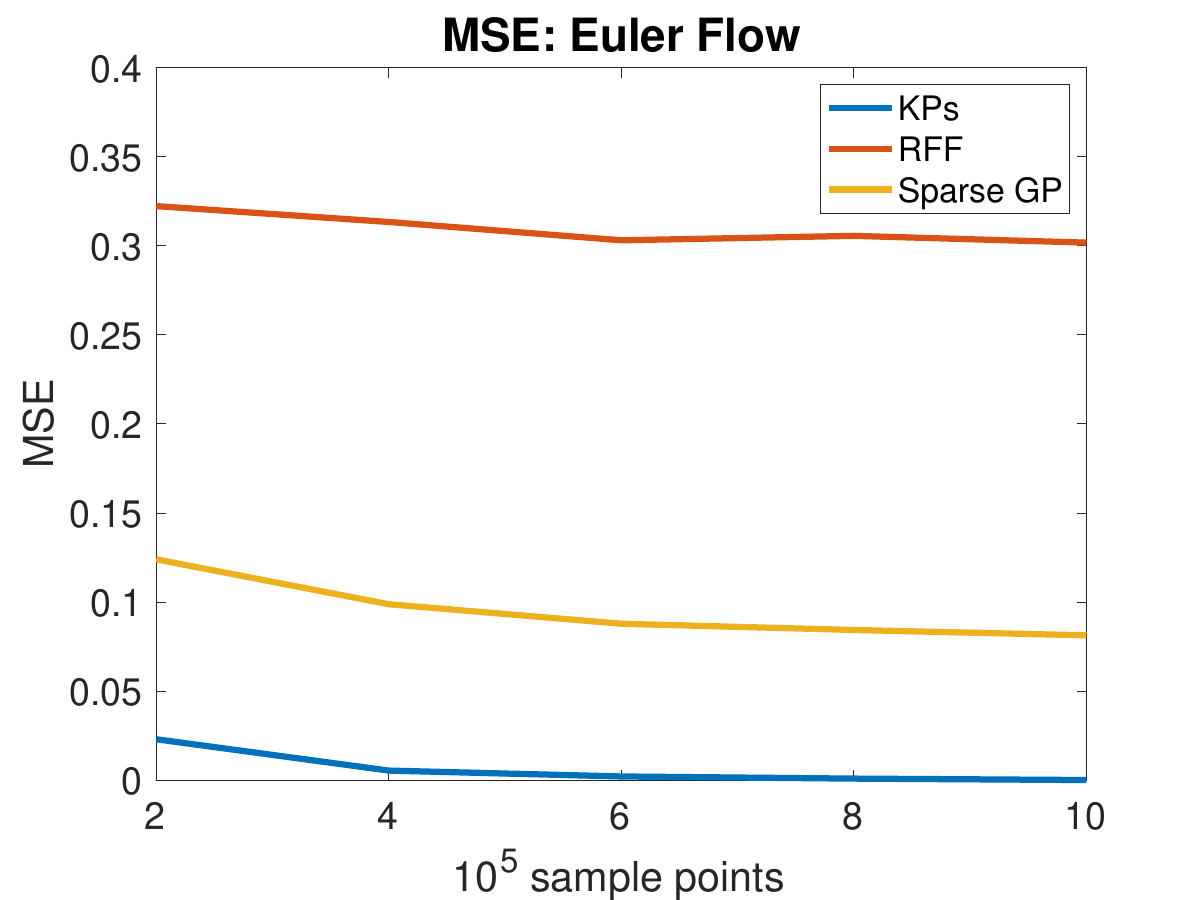}
    \includegraphics[width=.22\linewidth]{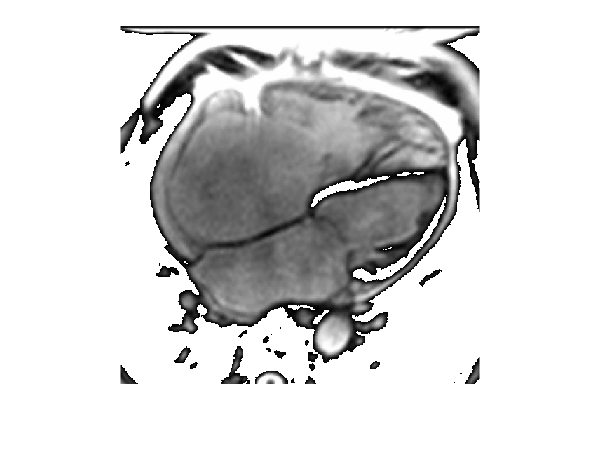}
    \includegraphics[width=.22\linewidth]{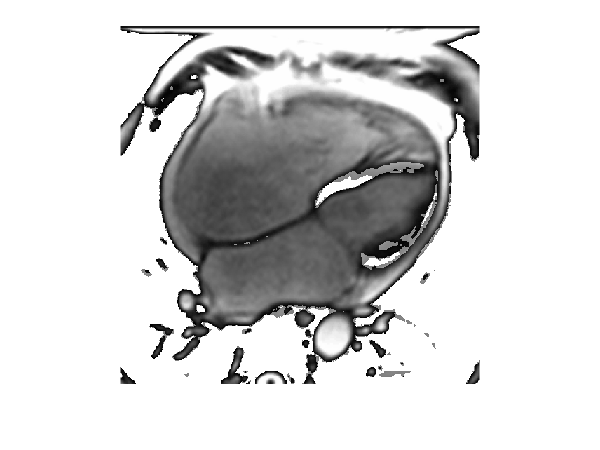}
    \includegraphics[width=.22\linewidth]{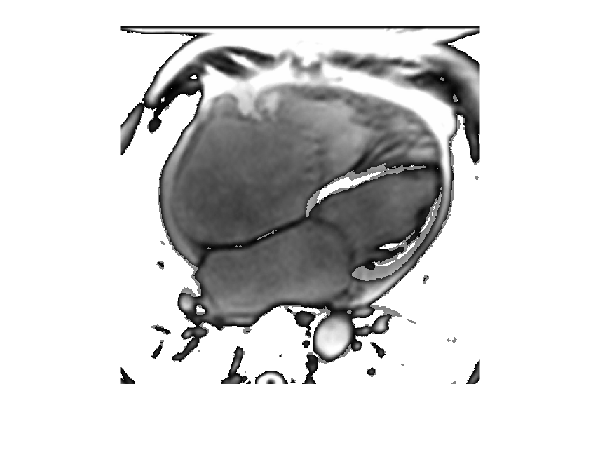}
    \includegraphics[width=.22\linewidth]{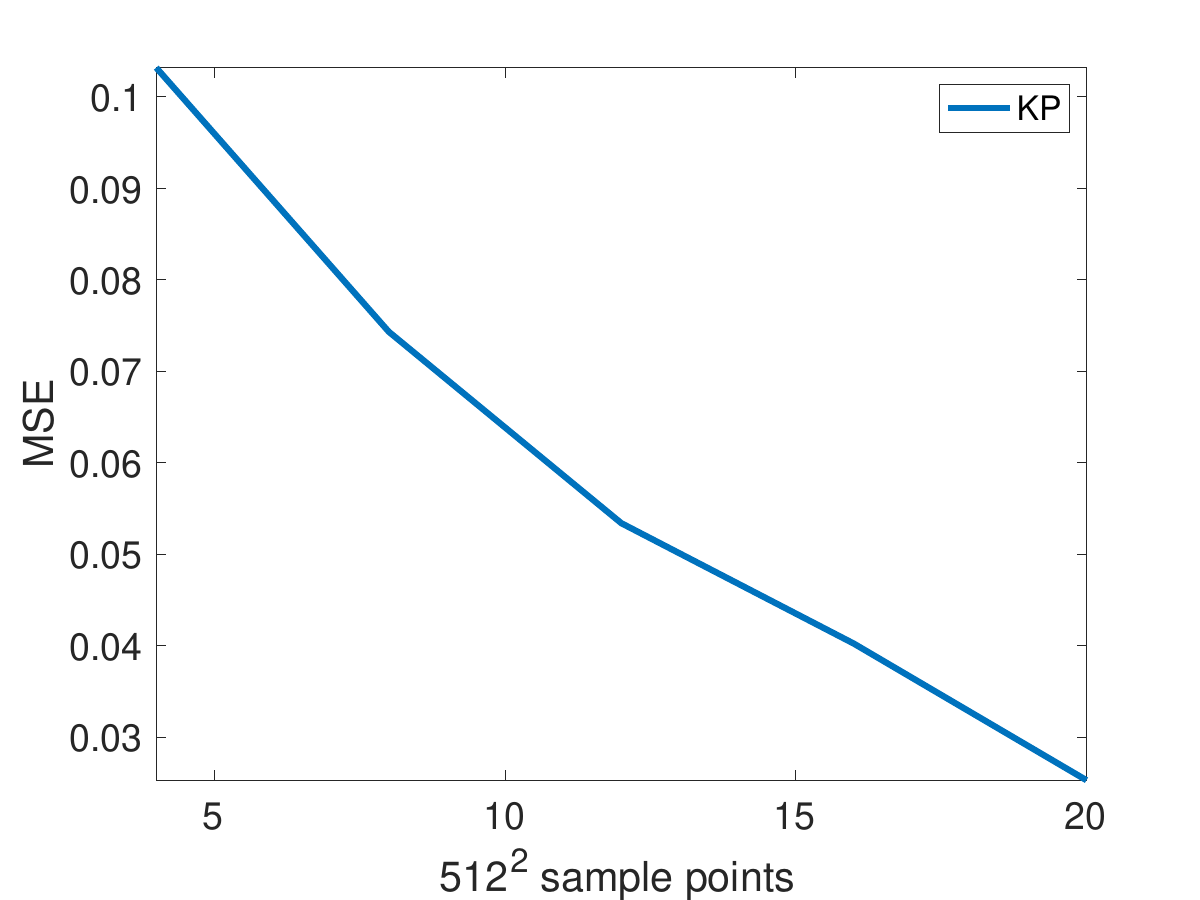}
    
    \caption{ Upper row: time trajectory of the vortex dynamics interpolated by KPs and MSE of competing algorithms; Lower row:  time trajectory of the real-time cardiac MRI interpolated by KPs and MSE of KPs}
    \label{fig:product_form_exp}
\end{figure}

\subsubsection{Real-Time MRI}

We use a sequence of $512 \times 512$ cardiac MRIs from a real-time MRI video as training data. Each MRI frame is treated as a function on the domain $[0,1]^2$, and the entire video as a spatiotemporal function on $[0,1]^2 \times [0,T]$. For each frame, we use its $256 \times 256$ pixels as training samples and select 5, 10, 15, and 20 frames from the video for reconstruction. Consequently, the total data sizes in the experiments are $n=5\times 256^2$, $10\times 256^2$, $15\times 256^2$, and $20\times 256^2$ gridded points, respectively. Similar to the previous experiment, this is a three-dimensional dynamics reconstruction problem and we model the dynamics as a GP sample path with the following product kernel:
\begin{equation}
    \label{eq:MRI_kernel}
    K((\Bx,t),(\Bx',t'))=\cos(10\pi|t-t'|]\exp\{-|t-t'|\} \exp\{-10\|\Bx-\Bx'\|_1\}
\end{equation}
where the temporal dimension is modeled using the non-differentiable and periodic kernel $\cos(10\pi|t-t'|]\exp\{-|t-t'|\} $ since each pixel exhibits a ten-period impulse pattern, as shown in the second plot of Figure \ref{fig:experiment_surface}. For the spatial dimensions, we use the product
Laplace kernel $\exp\{-\omega|t-t'|\} \exp\{-10\|\Bx-\Bx'\|_1$ with wavelength $\omega=10$ to model the highly non-smooth MRI surfaces as shown in the right plot in Figure \ref{fig:experiment_surface}.

Due to the large data volume and the highly non-smooth nature of MRIs, RFF and Sparse GP fail to yield meaningful approximations. In contrast, KPs still produce accurate reconstructions of the real-time MRI, as shown in the second row of Figure \ref{fig:product_form_exp}. This is because KPs perform exact computations, which are essential for accurately reconstructing non-smooth functions in this experiment.

\section{Conclusion}
\label{sec:conclusion}

{{In this study, we develop a general theory for constructing KPs for a broad class of GPs driven by SDEs. We further propose an exact and efficient algorithm that derives both forward and backward KPs and combines them to obtain compactly supported kernel representations. This algorithm enables $\mathcal{O}(n)$ training and $\mathcal{O}(\log n)$ or even $\mathcal{O}(1)$ prediction, while maintaining the exactness of GP inference. The KP framework also generalizes beyond the state space setting to handle scattered and multi-dimensional inputs without relying on low-rank approximations. Extensive experiments confirm that KPs achieve scalable and memory-efficient inference on large-scale additive and product-form GPs, outperforming existing SDE-based and approximate methods.  }}

\appendix

\section{Technical Details of KPs Associated with State Space Models}
\label{sec:proof of ss}

We combine the forward and backward SS models to prove Theorem \ref{thm:fundamental-Covariance_centalKP} and main theorem.

\subsection{Forward stochastic differential equation}
We first show that $m+1$ equations are enough to determine a right-sided KP system. Under condition \ref{condition:full_rank}, we  rewrite \eqref{eq:SODE} using P\'olya factorization \citep[Theorem 4.59]{bohner2001dynamic}.
\begin{theorem}[P\'olya Factorization]
\label{thm:product_form}
   For fundamental solutions $h_1,\cdots,h_{j+1}$, define
\[
 \CalW_{j+1}[h_1,\cdots,h_{j+1}](t) = \text{det}
\begin{bmatrix}
h_1(t) & \cdots&h_{j+1}(t)\\
\vdots& &\vdots\\
\frac{\partial^{j}}{\partial t^{j}}h_1(t) & \cdots& \frac{\partial^{j}}{\partial t^{j}} h_{j+1}(t)
\end{bmatrix}.\]
Under Condition \ref{condition:full_rank}, $0<|\CalW_j|<\infty$ for each $j$. Then \eqref{eq:SODE} has the  equivalent product form:
\begin{align}
    \CalL[y]=\frac{1}{u_{m+1}}\frac{\partial}{\partial t}\tilde{D}_m\tilde{D}_{m-1}\cdots \tilde{D}_3\tilde{D}_2 \frac{y }{u_1}
    =W \label{eq:product_form}
\end{align}
where functions $u_1=\CalW_1$, $1/u_2=\CalW_1^2/\CalW_2$, $1/u_{m+1}=\CalW_{m-1}/\CalW_m$, $1/u_j=\CalW_{j-1}^2/(\CalW_{j}\CalW_{j-2})$ for $j=3,\cdots,m-1$, and the differential operator $\tilde{D}_j$ is defined as
$\tilde{D}_j=\frac{1}{u_j}\frac{\partial}{\partial t}$.
\end{theorem}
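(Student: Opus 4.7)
The plan is to split the claim into two parts: (a) the sub-Wronskians $\CalW_j$ are finite and non-vanishing on $[t_0,T]$, and (b) the operator $\CalL$ admits the explicit product factorization displayed in \eqref{eq:product_form}.

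For part (a), the full Wronskian $\CalW_m$ is handled by Abel's identity: $\CalW_m(t) = \CalW_m(t_0)\exp\bigl(-\int_{t_0}^{t} c_{m-1}(s)\,ds\bigr)$. Linear independence of $h_1,\ldots,h_m$ forces $\CalW_m(t_0)\neq 0$, and continuity of $c_{m-1}$ on any compact subinterval then gives $0<|\CalW_m|<\infty$. The sub-Wronskians $\CalW_1,\ldots,\CalW_{m-1}$ are the delicate case, since non-vanishing of these does not follow from linear independence alone. The boundedness hypothesis in Condition \ref{condition:full_rank} must be exploited via oscillation theory: I would appeal to the classical fact that a linear ODE admitting a fundamental system of bounded solutions is disconjugate on $[t_0,T]$, and disconjugacy is equivalent (by P\'olya's 1924 theorem) to the existence of a Markov system $(h_1,\ldots,h_m)$ for which every leading principal Wronskian $\CalW_j$ is non-zero on the interval. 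Replacing the original $h_i$'s by such a Markov system extracted from their span does not alter $\CalL$ but secures the required non-vanishing.

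For part (b), I would induct on $m$. The base case $m=1$ is immediate: set $u_1=h_1$, and $\CalL[y] = (1/u_2)\,\partial_t(y/u_1)$ with $u_2$ read off from Abel's formula. For the inductive step, substitute $v=y/u_1$; since $h_1/u_1\equiv 1$, the ODE $\CalL[y]=0$ reduces to an equation on $\partial_t v$ of order $m-1$ whose fundamental solutions are $\partial_t(h_k/u_1)$ for $k=2,\ldots,m$. Iterating this reduction peels off one fundamental solution at each stage and introduces the next factor $\tilde D_i=(1/u_i)\,\partial_t$. The explicit Wronskian formulas for $u_i$ then fall out of Jacobi's identity for bordered determinants: the Wronskian of $\{\partial_t(h_2/u_1),\ldots,\partial_t(h_j/u_1)\}$ works out, via cofactor expansion, to the ratio $\CalW_j\CalW_{j-2}/\CalW_{j-1}^2$, matching the stated $1/u_i$. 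Matching the leading coefficient at the outermost stage yields $1/u_{m+1}=\CalW_{m-1}/\CalW_m$, and equality of the two sides as differential operators follows because they agree on the $m$-dimensional solution space and both are monic of order $m$ up to the overall factor.

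The main obstacle is part (a): converting the boundedness hypothesis into non-vanishing sub-Wronskians, since linear independence by itself is insufficient and a genuine disconjugacy argument is required. I expect the cleanest route is to invoke Theorem 4.59 of \cite{bohner2001dynamic} to reorganize the fundamental system into a Markov system, after which part (b) becomes an essentially algebraic identity driven by repeated applications of the Jacobi determinant identity. A secondary nuisance will be the bookkeeping at the endpoints $i=2$, $i=m$, and $i=m+1$, where the definition of $u_i$ degenerates to separate formulas and the induction needs slightly ad-hoc initial and terminal steps.
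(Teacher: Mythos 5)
The paper does not actually prove this theorem; it defers entirely to \cite[Theorem 4.59]{bohner2001dynamic} and P\'olya's original paper, so your sketch is being measured against the classical argument in those sources. Your part (b) --- reduction of order via the substitution $v=y/u_1$, iterated to peel off one fundamental solution at a time, with the $u_i$ identified through the Jacobi determinant identity $\CalW_j\CalW_{j-2}/\CalW_{j-1}^2$ --- is exactly the standard proof of the P\'olya factorization and is fine, granted that every $\CalW_j$ is non-vanishing.

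The genuine gap is in part (a). The ``classical fact'' you invoke --- that a linear ODE with a bounded fundamental system is disconjugate on $[t_0,T]$ --- is false. Take $\CalL[y]=y''+y$ on $[0,2\pi]$: the fundamental system $\{\sin t,\cos t\}$ is bounded and linearly independent, so Condition \ref{condition:full_rank} as literally stated is satisfied, yet every nontrivial solution vanishes at points spaced $\pi$ apart, the equation is not disconjugate on any interval of length exceeding $\pi$, and no reorganization of the fundamental system into a Markov system is possible ($\CalW_1=h_1$ must vanish somewhere for any choice of $h_1$). P\'olya's equivalence between disconjugacy and the existence of a Markov system is correct, but boundedness of solutions does not deliver disconjugacy, so the bridge from the stated hypothesis to $0<|\CalW_j|<\infty$ is broken. (This is arguably a defect of the theorem statement itself --- the hypothesis one actually needs is disconjugacy, or directly the non-vanishing of the leading principal Wronskians, which holds for the examples treated in the paper such as $(\partial_t+1)^2$ and $\partial_{tt}$ --- but as a proof your step asserts a false lemma and would not survive the counterexample above.) A correct write-up should either assume disconjugacy outright or verify the non-vanishing of the $\CalW_j$ for the specific class of operators considered, rather than deducing it from boundedness.
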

There are several benefits of the product form \eqref{eq:product_form}. Firstly, the existence of a set of fundamental solutions denoted as $\{P_j\}_{j=1}^m$ is assured, with each $P_{j}$ being an $j$-th differentiable function:
\begin{equation}
    \label{eq:P_j}\tilde{D}_{j+1}\cdots\tilde{D}_2P_{j}=0,\quad \tilde{D}_{j}\cdots\tilde{D}_2P_{j}=1
\end{equation}
where each $\tilde{D}_i=\frac{1}{u_i}\frac{\partial}{\partial t}$  is a generalized first order derivative with $0<|u_i|<\infty$.

Secondly, the SDE \eqref{eq:product_form} can be written as a first-order $m$-dimensional Markov process:
\begin{align}\label{eq:forward_Markov}
 \left\{
\begin{array}{ll}
\partial_t z^*(t)=F^*(t)z^*(t)+LW(t)  \\[0.5ex]
y(t)=Hz^*(t)
\end{array}
\right.,\quad t\in(t_0,T)
\end{align}
where
\begin{equation*}
    F^*(t)=\begin{bmatrix}
0& u_2(t) & 0 &\cdots & 0\\
& & \vdots & &\\
0&0&0& \cdots&u_{m}(t)\\
0& 0&0&\cdots &0
\end{bmatrix},
\end{equation*}
 $z^*=[y,\tilde{D_2}y,\cdots]$ is a vector with $j$-th entries $z^*_{j}=\tilde{D}_jz^*_{j-1}$. 

Using forward SDE \eqref{eq:forward_Markov}, we can check that each fundamental solution $P_j$ satisfies
\begin{equation}
\label{eq:first_order_Pi}
    \partial_t [P_j, \tilde{D}_2P_j,\cdots,(\tilde{D}_m\cdots\tilde{D}_2)P_j]^\top =F(t)[P_j, \tilde{D}_2P_j,\cdots,(\tilde{D}_m\cdots\tilde{D}_2)P_j]^\top.
\end{equation}
This identity, coupled with the covariance equation $R^*(t,\mu)=\E z^*(t)z^{*\top}(\mu)=e^{\int_\mu^t F^*(\tau)d\tau}R^*(\mu,\mu)$ for $t\geq \mu$ \citep[(2.34)]{solin2016stochastic}, results in a specific right-KP equations for $R^*$ as follows:

\begin{theorem}
\label{thm:fundamental-Covariance_rightKP}
     Suppose Condition \ref{condition:full_rank} holds.  For any consecutive points $t_1<\cdots<t_{m+1}\in(t_0,T)$, 
    \[\sum_{j=1}^{m+1}a_j
    [
        P_1(t_j)\ 
        \cdots\ 
        P_m(t_j)
    ]^\top =0\quad\text{if and only if}\quad \sum_{j=1}^{m+1}a_j R^*(t_1,t_j)=0 \]
    where $[a_j]_{j=1}^{m+1}$ is one-dimensional. Therefore, $s=m+1$ for an irreducible right-sided KP.
\end{theorem}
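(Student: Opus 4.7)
The plan is to reduce the matrix condition $\sum_j a_jR(t_1,t_j)=0$ to the scalar condition on the $P_i$'s by combining the Markov structure of $z(t)$ with the Wronskian-type matrix built from \eqref{eq:P_i}, and then to match dimensions on both sides to show the two solution spaces coincide.

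First I would exploit the Markov property: combined with the covariance identity $R(t_1,t_j)=\Phi(t_j,t_1)R(t_1,t_1)$ stated just before the theorem and the invertibility of the state covariance $R(t_1,t_1)$, the RHS reduces to $\sum_j a_j\Phi(t_j,t_1)=0$. Equation \eqref{eq:first_order_Pi} says that the vectors $v_i(t)=[P_i(t),\tilde D_2 P_i(t),\ldots,\tilde D_m\cdots\tilde D_2 P_i(t)]^T$ are $m$ independent solutions of $\partial_t v=F(t)v$, so collecting them into $V(t)=[v_1(t),\ldots,v_m(t)]$ gives $\Phi(t,t_1)=V(t)V(t_1)^{-1}$, with $V(t_1)$ invertible by Condition \ref{condition:full_rank}. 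The RHS condition therefore further reduces to $\sum_j a_j V(t_j)=0$, and reading off its top row recovers $\sum_j a_j[P_1(t_j),\ldots,P_m(t_j)]^T=0$, giving the easy direction.

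For the reverse direction I would count dimensions on each side and show both solution spaces are exactly one-dimensional. The LHS is $m$ equations in $m+1$ unknowns whose coefficient matrix $[P_i(t_j)]$ has rank $m$: under Condition \ref{condition:full_rank} the functions $P_i$, built from nested integrals of the P\'olya factors $u_j$ by \eqref{eq:P_i}, form an extended complete Tchebycheff system, so every $m\times m$ submatrix is invertible and the LHS nullspace is one-dimensional. For the matrix RHS, the strict upper-bidiagonal (nilpotent) form of $F$ truncates the Peano--Baker expansion of $\Phi(t,t_1)$ after $m$ layers of iterated integrals, placing the curve $t\mapsto\Phi(t,t_1)$ inside an $m$-dimensional affine subspace of $\Real^{m\times m}$. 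Consequently the $m+1$ matrices $\Phi(t_j,t_1)$ must admit a nontrivial linear relation, so the RHS nullspace is at least one-dimensional; being contained in the one-dimensional LHS nullspace, it coincides with it. The claim that $[a_j]_{j=1}^{m+1}$ is one-dimensional, as well as the minimality statement $s=m+1$ (for $s\le m$ the matrix $[P_i(t_j)]$ still has full column rank by the ECT property, forcing $a=0$), both drop out of the same count.

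I expect the hardest step to be verifying those two dimension counts from Condition \ref{condition:full_rank} alone: the extended-Tchebycheff property of $\{P_i\}$, and the equality $\dim\mathrm{span}\{\Phi(t,t_1)\}_t=m$. Both are structural facts about iterated $\tilde D$-integrals inherited from the P\'olya factorization; the cleanest route is likely to invoke classical Markov-system theory (Karlin--Studden) for the first, and compute the Peano--Baker series directly for the second, using that the $(k,i)$-entry of the $r$-th layer vanishes unless $i-k=r$.
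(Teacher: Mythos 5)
Your forward direction and your handling of the left-hand side are fine: reducing $\sum_j a_jR(t_1,t_j)=0$ to $\sum_j a_jV(t_j)=0$ via the invertibility of $\Pi(t_1)$ and of the fundamental matrix $V(t_1)$ is exactly the paper's mechanism, and your ECT/Karlin--Studden argument that $[P_i(t_j)]$ has rank $m$ is actually more careful than the paper, which merely asserts the one-dimensionality. The gap is in the reverse direction, at precisely the step you flagged as hardest. The claim that $t\mapsto\Phi(t,t_1)$ lies in an $m$-dimensional affine subspace is false in general: the truncated Peano--Baker series contributes one nonconstant entry at each position $(k,k+r)$ with $1\le r\le m-1$ and $1\le k\le m-r$, i.e.\ up to $m(m-1)/2$ linearly independent functions, which exceeds $m$ once $m\ge 4$. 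Worse, even when the affine hull \emph{is} $m$-dimensional, $m+1$ points in an $m$-dimensional affine subspace not containing the origin (this one contains $I$ plus strictly upper-triangular directions, so it cannot contain $0$) are generically linearly independent; you would need $m+2$ points to force a relation. Concretely, take $m=3$ with $u_2=1$ and $u_3(t)=t$ on an interval away from zero (admissible under Condition \ref{condition:full_rank}). Then $P_1=1$, $P_2=t$, $P_3=t^3/6$, and $V(t)$ is unit upper triangular with entries $t$, $t^3/6$, $t^2/2$ above the diagonal, so $\sum_ja_j\Phi(t_j,t_1)=0$ forces $\sum_ja_jt_j^k=0$ for $k=0,1,2,3$, which by Vandermonde has only the trivial solution for four distinct points --- while $\sum_ja_jP_i(t_j)=0$ (only $k=0,1,3$) has a nontrivial solution. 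So the full-matrix ``if and only if'' cannot be rescued by any dimension count.

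What \emph{is} true, and what the rest of the paper actually uses (the $H(\cdot)H^T$ reduction in \eqref{Cov1}, the vector $R_1$ in Algorithm \ref{alg:KP}, and $R_{1,:}$ in Theorem \ref{thm:fundamental-Covariance_leftKP}), is the statement with the first row/column of $\sum_ja_jR(t_1,t_j)$ in place of the full matrix: since the first row of $\Phi(t_j,t_1)$ equals $[P_1(t_j),\ldots,P_m(t_j)]V(t_1)^{-1}$, your own reduction plus the ECT rank count proves that version completely, with no need for the span-of-$\Phi$ argument at all. For comparison, the paper's proof makes the same leap from the first row to the full matrix --- it deduces $\sum_ja_j\vec P_i(t_j)=0$ from $\sum_ja_jP_i(t_j)=0$ ``by linearity of $\tilde D^{(j)}$'' --- and the example above contradicts that step as well; so you have independently reproduced, rather than repaired, the weak point of the argument.
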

{{
\begin{remark}
   Here,  ``one-dimensional" signifies uniqueness up to a scalar. This theorem establishes that the minimum $s$ for right-KP is $s=m+1$. 
\end{remark}}}

\begin{proof}[Proof of Theorem \ref{thm:fundamental-Covariance_rightKP}]

(1) Suppose $$\sum_{j=1}^{m+1}a_j \left[P_1(t_j),\cdots, P_m(t_j)\right]^\top =0.$$ From the differential form for each $P_i$, it is obvious that $\{P_i\}$ is a set of linearly independent fundamental solution and, as a result, the solution $[a_1,\cdots,a_{m+1}]$ must be in the null space of the matrix
\begin{equation*}
    \begin{bmatrix}
        P_1(t_1)&\cdots&P_1(t_{m+1})\\
        \vdots & &\vdots\\
        P_m(t_1)&\cdots&P_m(t_{m+1})
    \end{bmatrix}\in\Real^{m\times (m+1)}.
\end{equation*}
Therefore, $[a_1,\cdots,a_{m+1}]$ must be one-dimensional. 

We now define the vector-valued function
$$\vec{P}_j=[P_j, \tilde{D}^{(1)}P_j,\cdots,\tilde{D}^{(m-1)}P_j]^\top .$$
Because each $\tilde{D}^{(j)}$ is linear differential operator, we have
\begin{equation}
\label{eq:proof_theorem_right_KP_1}
    a_1\vec{P}_j(t_1)+\cdots +a_{m+1}\vec{P}_j(t_{m+1})=0
\end{equation}
for all $j=1,\cdots,m$.
On the other hand, we can derive  from \eqref{eq:first_order_Pi} that, for any $t_0\leq \mu\leq t \leq T$, $\vec{P}_j(t)$ can be solved via initial condition $\vec{P}_j(s)$:
\begin{equation}
    \label{eq:proof_theorem_right_KP_2}\vec{P}_j(t)=e^{\int_\mu^T F^*(\tau)d\tau}\vec{P}_j(\mu).
\end{equation}
Because \eqref{eq:proof_theorem_right_KP_2} holds true for any $\mu\leq t$. By combining \eqref{eq:proof_theorem_right_KP_1} and \eqref{eq:proof_theorem_right_KP_2}, we can derive
\begin{equation}
\label{eq:proof_theorem_right_KP_3}
    a_1e^{\int_\mu^{t_1}F^*(\tau)d\tau}+\cdots +a_{m+1}e^{\int_\mu^{t_{m+1}}F^*(\tau)d\tau}=0
\end{equation}
for any $\mu\leq t_1$. Let $\mu=t_1$ and multiply both sides of \eqref{eq:proof_theorem_right_KP_3} by $R^*(t_1,t_1):=\Pi(t_1)$, we  have the desired result 
\[a_1e^{\int_{t_1}^{t_1}F^*(\tau)d\tau}\Pi(t_1)+\cdots +a_{m+1}e^{\int_{t_1}^{t_{m+1}}F^*(\tau)d\tau}\Pi(t_1)=\sum_{j=1}^{m+1}a_jR^*(t_1,t_j)=0.\]

(2) Suppose $\sum_{j=1}^{m+1}a_jR^*(t_1,t_j)=0$. From $R^*(s,t)=e^{\int_s^T F^*(\tau)d\tau}\Pi(s)$, we can have
\begin{equation}
\label{eq:proof_theorem_right_KP_4}
\sum_{j=1}^{m+1}a_je^{\int_{t_1}^{t_j}F^*(\tau)d\tau}\Pi(t_1)=0.
\end{equation}
Multiply both sides of \eqref{eq:proof_theorem_right_KP_4} by $[\Pi(t_1)]^{-1}[\vec{P}_1(t_1),\cdots,\vec{P}_m(t_1)]^\top $ (the invertibility of $\Pi(t_1)$ will be proved in  Lemma \ref{lem:Pi_invertible}) , together with \eqref{eq:proof_theorem_right_KP_2}, we can have the desired result:
$$\sum_{j=1}^{m+1}a_j \left[P_1(t_j),\cdots, P_m(t_j)\right]^\top =0.$$
\end{proof}

\subsection{Backward stochastic differential equation}
To construct the left-KP, it is natural to consider the backward version of \eqref{eq:product_form}, because  time of  backward SDE  runs in a reversed direction $\tau=-t$. By doing so, we obtain another set of $m$ fundamental solutions, and consequently, the minimum  $s$ for constructing the left-KP is also $s=m+1$.  We first need the following lemma for the existence of the target backward SDE :
\begin{lemma}
    \label{lem:Pi_invertible}
    Suppose  Condition \ref{condition:full_rank} holds. Then $\Pi(t)=R(t,t)$ is invertible for any $t\in(t,T)$.
\end{lemma}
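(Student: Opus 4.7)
The symbol $\Pi(t)$, though not explicitly defined in the excerpt, must denote the $m\times m$ matrix whose $i$-th column is the phase-space lift $[P_i(t),\,(\tilde{D}_2 P_i)(t),\,\ldots,\,(\tilde{D}_m\cdots\tilde{D}_2 P_i)(t)]^T$ of the $i$-th fundamental solution appearing in equation \eqref{eq:first_order_Pi}. My plan is to recognize $\Pi$ as a fundamental matrix of the linear ODE $\partial_t \Pi(t)=F(t)\Pi(t)$, and then propagate invertibility across $(t_0,T)$ using uniqueness of solutions.

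I would proceed by contradiction. Suppose $\Pi(t^{*})c=0$ for some $t^{*}\in(t_0,T)$ and some non-zero $c\in\Real^m$. Since each column of $\Pi$ solves the homogeneous linear system driven by $F$ by \eqref{eq:first_order_Pi}, so does $t\mapsto \Pi(t)c$; as this curve vanishes at $t^{*}$, uniqueness of solutions to linear ODEs forces $\Pi(t)c\equiv 0$ on the whole of $(t_0,T)$. Reading off the first entry yields the identity $\sum_{i=1}^{m}c_i P_i\equiv 0$ on $(t_0,T)$, which I must then rule out.

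The remaining and only substantive step is thus to show that $\{P_1,\ldots,P_m\}$ is linearly independent. For this I would exploit the triangular structure encoded in the defining relations \eqref{eq:P_i}: the composite operator $\tilde{D}_j\cdots\tilde{D}_2$ annihilates $P_i$ for $i<j$ but sends $P_j$ to the constant $1$. Applying $\tilde{D}_m\cdots\tilde{D}_2$ to $\sum_i c_i P_i\equiv 0$ peels off $c_m$, then $\tilde{D}_{m-1}\cdots\tilde{D}_2$ peels off $c_{m-1}$, and iterating forces every $c_i=0$, contradicting the assumption on $c$. The main obstacle is essentially a book-keeping one: ensuring that each $\tilde{D}_j=(1/u_j)\partial_t$ is a well-defined first-order operator on the functions at hand, which is exactly what the bound $0<|\CalW_j|<\infty$ from Theorem \ref{thm:product_form} (a consequence of Condition \ref{condition:full_rank}) guarantees.

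As a sanity check, the matrix $F(t)$ displayed just after \eqref{eq:forward_Markov} has zero diagonal, so $\operatorname{tr} F(t)\equiv 0$ and Abel--Liouville gives $\det\Pi(t)\equiv\det\Pi(s)$ for all $s,t\in(t_0,T)$. Hence invertibility is actually a constant-determinant phenomenon, and checking it at any single point (e.g.\ using the triangular values $\tilde{D}_i\cdots\tilde{D}_2 P_j$) would give the result directly; but the uniqueness-based argument above packages both the single-point check and the propagation together in a single, coordinate-free line.
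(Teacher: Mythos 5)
There is a genuine gap, and it begins with the identification of the object $\Pi(t)$. In the paper, $\Pi(\tau)$ is defined (see Theorem \ref{thm:backward_Markov}) as $R(\tau,\tau)=\mathbb{E}[z(\tau)z(\tau)^T]$, the covariance matrix of the state vector of the first-order Markov representation \eqref{eq:forward_Markov} — not the fundamental (Wronskian) matrix whose columns are the phase-space lifts of the $P_i$. These are very different matrices: the fundamental matrix solves the homogeneous system $\partial_t\Phi=F(t)\Phi$, whereas the covariance solves the inhomogeneous Lyapunov equation $\partial_t\Pi(t)=F(t)\Pi(t)+\Pi(t)F(t)^T+BB^T$. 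Your own sanity check exposes the problem: Abel--Liouville with $\operatorname{tr}F\equiv 0$ gives a \emph{constant} determinant for the fundamental matrix, but for the true $\Pi$ the paper shows (via Jacobi's formula and $\operatorname{tr}(F+F^T)=0$) that $\partial_t\det\Pi(t)=\det\Pi_{1:(m-1),1:(m-1)}(t)$, which is strictly positive; the determinant is \emph{increasing}, not constant. This matters because the state covariance typically starts degenerate (e.g.\ $\det\Pi(t_0)=0$ for a deterministic initial state), so a constant-determinant argument would conclude $\det\Pi\equiv 0$ — the opposite of the lemma. The invertibility of $\Pi(t)$ is really a nondegeneracy/controllability statement about how the white-noise forcing $BB^T$ spreads variance into all $m$ state coordinates, and no argument that ignores the $BB^T$ term can reach it.

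What you actually prove — linear independence of $\{P_1,\ldots,P_m\}$ via the triangular structure of \eqref{eq:P_i}, and hence invertibility of the fundamental matrix in \eqref{eq:first_order_Pi} — is correct and is essentially the content of the bound $0<|\CalW_j|<\infty$ already supplied by Condition \ref{condition:full_rank} and Theorem \ref{thm:product_form}; it is an ingredient the paper uses elsewhere, but it is not Lemma \ref{lem:Pi_invertible}. The paper's proof instead proceeds by induction on $m$: the base case $m=1$ is immediate, and the inductive step shows that $\det\Pi_{1:(m-1),1:(m-1)}(t)$ cannot vanish (a vanishing leading principal minor would produce a linear relation $\sum_j\alpha_j\tilde{D}^{(j-1)}y(t)=0$ contradicting the $(m-1)$-dimensional induction hypothesis), whence $\partial_t\det\Pi>0$ and $\det\Pi(t)>0$ for $t>t_0$. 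To repair your proposal you would need to switch targets entirely and engage with the Lyapunov dynamics of the covariance.
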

\begin{proof}[Proof of Lemma \ref{lem:Pi_invertible}]
We prove by induction. For the base case $m=1$, it is clear that $\Pi(t)=u(t)$ is invertible for any $0<|u(t)|<\infty$. Suppose the lemma holds for $m-1$, then for the case $m$, we now discuss how the determinant of $\Pi(t)$ changes with $t$.  

     From (6.2) in \cite{sarkka2019applied}, $\Pi$  satisfies the following differential equation:
    \[\partial_t\Pi(t)=F^*(t)\Pi(t)+\Pi(t)F^*(t)^\top +LL^\top.\]
    We then apply Jacobi's formula on the determinant of $\Pi$, then 
    \begin{align*}
        \partial_t\text{det}[\Pi(t)]=&\text{Tr}\left\{\text{adj}[\Pi(t)]\partial_t\Pi(t)\right\}\\
        =&\text{Tr}\left\{\text{adj}[\Pi(t)]F^*(t)\Pi(t)+\text{adj}[\Pi(t)]\Pi(t)F^*(t)^\top +\text{adj}[\Pi(t)] LL^\top\right\}\\
        =& \underbrace{\text{Tr}\left\{\text{adj}[\Pi(t)]\Pi(t)\left[F^*(t)+F^*(t)^\top \right]\right\}}_A+\underbrace{\text{Tr}\left\{ LL^\top\text{adj}[\Pi(t)]\right\}}_B
    \end{align*}
    where $\text{adj}[A]$ denote the adjugate of a matrix $A$ and the last line is from the properties $\text{Tr}[AB]=\text{Tr}[BA]$ and $\text{Tr}[A+B]=\text{Tr}[A]+\text{Tr}[B]$.

    For term $A$, we have $A=0$ regardless if $\text{det}[\Pi(t)]=0$,  because the trace of $F^*(t)+F^*(t)^\top $ is zero and $\text{adj}[\Pi(t)]\Pi(t)=\text{det}[\Pi(t)]\mathrm{I}$. 
    
    For term $B$, because $L=[0,\cdots,0,1]^\top $, we can have the following identity via direct calculations
    \[\text{Tr}\left\{ LL^\top\text{adj}[\Pi(t)]\right\}=\text{det}[\Pi_{1:(m-1),1:(m-1)}].\]

    Determinant of  $\Pi_{1:(m-1),1:(m-1)}$ must be non-negative because it is a covariance matrix. If $\text{det}[\Pi_{1:(m-1),1:(m-1)}]=0$ , it simply means that there exists non-zero $\{\alpha_j\}_{j=0}^{m-2}$ such that
    \begin{align}
        \sum_{j=0}^{m-2}\alpha_j \tilde{D}^{(j-1)}y(t)=0.\label{eq:Pi_invertible_proof_1}
    \end{align}
    Take the time derivative on both sides of \eqref{eq:Pi_invertible_proof_1}, we have
    \begin{align}
        \sum_{j=0}^{m-2}\alpha_j \hat{D}^{(j)}y(t)=0.\label{eq:Pi_invertible_proof_2}
    \end{align}
    where $\hat{D}^{(j)}=\partial_t\tilde{D}^{(j-1)}$ is an order-$j$ linear differential operator. This reduces to the following case for $m-1$ 
    \begin{align}
        \alpha_0\partial_ty+\alpha_1\partial_t\frac{1}{u_2}\partial_t y+\alpha_2\partial_t\frac{1}{u_3}\partial_t\frac{1}{u_2}\partial_t y+\cdots+\alpha_{m-2}\left(\partial_t\frac{1}{u_{m-1}}\partial_t\cdots\frac{1}{u_2}\partial_t y\right)=0.\label{eq:Pi_invertible_proof_3}
    \end{align}
    However, under Condition \ref{condition:full_rank}, all $u_i$ are bounded away from $0$ and infinity. So \eqref{eq:Pi_invertible_proof_3} contradicts with our induction assumption that for $m-1$, the determinant of $\Pi(t)$ is non-zero. We must have 
    \[\partial_t\text{det}[\Pi(t)]=\text{det}[\Pi_{1:(m-1),1:(m-1)}]>0.\]
    Therefore, $\text{det}[\Pi(t)]>0$ for any $t>t_0$.
\end{proof}

We now can apply Lemma 1 of \cite{ljung1976backwards} to obtain the backward version of the first-order Markov model \eqref{eq:forward_Markov} with the same covariance matrix $R^*(s,t)$:
\begin{theorem}[\citeauthor{ljung1976backwards}]
\label{thm:backward_Markov}
    The backward equation of \eqref{eq:forward_Markov} is :
    \begin{equation}\label{eq:Markov_no_prod}
\left\{
\begin{array}{ll}
\partial_\tau x(\tau )=\left[F^*(\tau)+C(\tau)\right]x(\tau)-L W(T-\tau)  \\[0.5ex]
y(\tau)=Hx(\tau)
\end{array}
\right.,\quad \tau\in(t_0,T)
\end{equation}
where
\begin{equation*}
    C(\tau)=LL^\top\Pi^{-1}(\tau)=\begin{bmatrix}
        0 & 0 &\cdots &0\\
        \vdots&\vdots&\cdots&\vdots\\
        0 & 0 &\cdots &0\\
        C_1(\tau) &C_2(\tau) &\cdots&C_m(\tau)
    \end{bmatrix},
\end{equation*}
 and $\mathbb{E}[x(\tau)x(t)^\top ]=R^*(\tau,t)=\mathbb{E}[z(\tau)z(t)^\top ]$ for all $\tau,t\in[t_0,T]$. Moreover, for any $t_0\leq \tau \leq s\leq T$, the covariance matrix $R^*(t,s)$ satisfies
 \begin{equation}
     \label{eq:covariance_R}
     \partial_\tau R^*(\tau,\mu)=\left[F^*(\tau)+C(\tau)\right]R^*(\tau,\mu)\quad \Rightarrow\quad R^*(\tau,\mu)=e^{-\int_{\tau}^\mu F^*(\gamma)+C(\gamma)d\gamma}\Pi(\mu).
 \end{equation}
\end{theorem}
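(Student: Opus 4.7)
The plan is to invoke Lemma 1 of \cite{ljung1976backwards}, which provides the time-reversed Markov representation of any linear Gaussian SDE whose instantaneous state covariance is invertible. First I would verify its hypotheses for the forward model \eqref{eq:forward_Markov}: under Condition \ref{condition:full_rank} and Lemma \ref{lem:Pi_invertible}, the one-time covariance $\Pi(\tau)=R(\tau,\tau)$ is invertible on $(t_0,T)$; the drift matrix $F(\tau)$ is a nilpotent bidiagonal matrix whose superdiagonal entries $u_i(\tau)$ are continuous and nowhere zero by Theorem \ref{thm:product_form}; and $B(\tau)\equiv[0,\ldots,0,1]^T$ is constant. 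Applying Ljung's lemma then outputs the reversed SDE \eqref{eq:Markov_no_prod} with drift correction $C(\tau)=B(\tau)B(\tau)^T\Pi^{-1}(\tau)$, together with the two-time covariance identity $\E[x(\tau)x(t)^T]=R(\tau,t)$ that characterises the reversal.

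To match the displayed block form of $C(\tau)$, I would observe that since $B(\tau)$ has a single non-zero entry in position $m$, the outer product $B(\tau)B(\tau)^T$ equals the rank-one matrix $e_m e_m^T$. Right-multiplication by $\Pi^{-1}(\tau)$ then annihilates every row except the last, which becomes $e_m^T\Pi^{-1}(\tau)$, i.e.\ the $m$-th row of $\Pi^{-1}(\tau)$; reading off its entries as $C_1(\tau),\ldots,C_m(\tau)$ exactly reproduces the displayed form.

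For the ODE in \eqref{eq:covariance_R} I would compute $R(\tau,s)=\Pi(\tau)\Phi_F(s,\tau)^T$ directly from the forward SDE, where $\Phi_F$ is the state transition matrix of $F$, then differentiate in $\tau$ and substitute the Lyapunov equation $\dot\Pi=F\Pi+\Pi F^T+BB^T$ for $\dot\Pi$: the cross-term $BB^T\Phi_F^T$ regroups with $F\Pi\Phi_F^T$ into $[F+BB^T\Pi^{-1}]R=[F+C]R$. Integrating the resulting linear matrix ODE backward in $\tau$ from the terminal condition $R(s,s)=\Pi(s)$ gives the exponential formula $R(\tau,s)=e^{-\int_\tau^s[F(\gamma)+C(\gamma)]d\gamma}\Pi(s)$, understood as a time-ordered exponential when $F+C$ does not commute at different times. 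The main obstacle is this short bookkeeping calculation tying $C=BB^T\Pi^{-1}$ to the Lyapunov equation; once it is done, the remaining structural claims follow either from the rank-one form of $BB^T$ or directly from Ljung's construction.
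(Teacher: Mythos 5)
Your proposal takes essentially the same route as the paper: the paper presents this theorem as a direct application of Lemma 1 of \cite{ljung1976backwards}, with Lemma \ref{lem:Pi_invertible} supplying the required invertibility of $\Pi(\tau)$, and offers no further proof. Your additional bookkeeping — the rank-one structure $BB^T=e_me_m^T$ giving the displayed form of $C(\tau)$, and the derivation of \eqref{eq:covariance_R} from $R(\tau,s)=\Pi(\tau)\Phi_F(s,\tau)^T$ together with the Lyapunov equation (including the time-ordered-exponential caveat) — is correct and fills in details the paper leaves implicit.
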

 Because both the GP $y\sim\CalN(0,K)$ and the covariance matrix $R^*$ have no preference to any direction of time, so they can also be represented by the backward SDE as follows
\begin{align}
    &\CalL_b[y](\tau)=\CalL[y](\tau)-\sum_{i=1}^{m}C_i(\tau)\tilde{D}_{\tau}^{(i-1)}y(\tau)=W(T-\tau),\nonumber\\
    &\CalL_b[R^*_{1,j}(\cdot,\mu)](\tau)=\mathbb{E}\left[\left\{\CalL_b[y](\tau)\right\}x_j(\mu)\right]=\mathbb{E}[W(T-\tau)x_j(\mu)]=0 \label{eq:backward_R}
\end{align}
 for any $j=1,\cdots,m$ and $t_0<\tau < \mu < T$, where $\tilde{D}^{0}_\tau=1$, $\tilde{D}_\tau^{(i)}=\prod_{j=2}^{i+1}\tilde{D}_j$ are $i$-th order differential operator acting on $\tau$.  The last equality of \eqref{eq:backward_R} is because the process $x$ runs in a backward direction, meaning that the white noise ahead of $x(\mu)$ in this direction is independent of it.

A direct consequence of \eqref{eq:backward_R} is that the set $\{R^*_{1,j}(\cdot,\mu)\}_{j=1}^m$ are the fundamental solutions of the operator $\CalL_b$ on the interval $(t_0, \mu)$.  Even though $\{R^*_{1,j}(\cdot,\mu)\}_{j=1}^m$ do not constitute the exact fundamental solutions like ${P_i}$, they are sufficient for constructing the left-KP equations:
\begin{theorem}
\label{thm:fundamental-Covariance_leftKP}
    Suppose Condition \ref{condition:full_rank} holds. For any consecutive points $t_1<\cdots<t_{m+1}$ in $(t_0,T)$, 
    \[\sum_{j=1}^{m+1}a_jR^*_{1,:}(t_j,t_{m+1})=0\quad\text{if and only if}\quad \sum_{j=1}^{m+1}a_j R^*(t_j,t_{m+1})=0 \]
   where $[a_j]_{j=1}^{m+1}$ is one-dimensional. Therefore, $s=m+1$ for an irreducible  left-sided KP.
\end{theorem}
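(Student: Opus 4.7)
The plan is to mirror the proof of the forward analogue Theorem \ref{thm:fundamental-Covariance_rightKP}, systematically replacing every forward-SDE object with its backward counterpart from Theorem \ref{thm:backward_Markov}. The functions that will play the role of the P\'olya fundamental solutions $\{P_i\}_{i=1}^{m}$ of $\CalL$ are the first-row covariances $\{R_{1,k}(\cdot,t_{m+1})\}_{k=1}^{m}$, which by \eqref{eq:backward_R} lie in $\ker(\CalL_b)$ on $(t_0,t_{m+1})$.

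My first step is to use \eqref{eq:covariance_R} to factor
\[
R(\tau,t_{m+1}) \;=\; \Psi(\tau,t_{m+1})\,\Pi(t_{m+1}),\qquad \Psi(\tau,s):=e^{-\int_{\tau}^{s}[F(\gamma)+C(\gamma)]\,d\gamma},
\]
and then invoke Lemma \ref{lem:Pi_invertible} to peel off the invertible right factor $\Pi(t_{m+1})$, reducing the biconditional to an equivalent statement about $\Psi$ alone. Because $F(\gamma)+C(\gamma)$ differs from $F(\gamma)$ only in its last row, the super-diagonal relations $\partial_{\tau}\Psi_{k,\ell} = u_{k+1}(\tau)\,\Psi_{k+1,\ell}$ persist for $k=1,\ldots,m-1$, so $\Psi_{k,\ell}(\tau,t_{m+1}) = \tilde D_{k}\cdots\tilde D_{2}\,\Psi_{1,\ell}(\tau,t_{m+1})$. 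Combined with \eqref{eq:backward_R}, this shows that $\{\Psi_{1,\ell}(\cdot,t_{m+1})\}_{\ell=1}^{m}$ is a fundamental system for $\CalL_b$ on $(t_0,t_{m+1})$, and hence so is $\{R_{1,k}(\cdot,t_{m+1})\}_{k=1}^{m}$ after the harmless invertible change of basis by $\Pi(t_{m+1})$.

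Both sides of the biconditional will then reduce to the single assertion ``the evaluation functional $L[g] := \sum_{j=1}^{m+1}a_j\,g(t_j)$ annihilates $\ker(\CalL_b)$.''  For the LHS this is direct, and since $\dim \ker(\CalL_b) = m$, it imposes exactly $m$ linear constraints on the $m+1$ unknowns $a_j$, giving a one-dimensional null space and the minimality claim $s=m+1$. For the RHS, I would apply the P\'olya factorization (Theorem \ref{thm:product_form}) to $\CalL_b$ to obtain a triangular ``polynomial-type'' basis $\{\tilde P_i\}_{i=1}^{m}$ of $\ker(\CalL_b)$ satisfying the analogue of \eqref{eq:P_i}; representing $\Psi(\tau,t_{m+1}) = \tilde M(\tau)\,\tilde M(t_{m+1})^{-1}$ via the Wronskian-type matrix $\tilde M(\tau)$ built from the $\tilde P_i$'s (upper triangular with $1$'s on the diagonal), the triangular identities collapse the nominally $m^2$ scalar equations in $\sum_j a_j R(t_j,t_{m+1}) = 0$ down to the same $m$ independent equations $\sum_j a_j \tilde P_i(t_j) = 0$, exactly recovering the LHS condition.

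The main obstacle will be this collapse step: rigorously verifying that under the triangular backward P\'olya decomposition no genuinely new constraints are imposed by the ``below-diagonal'' entries $\tilde D_{k}\cdots\tilde D_{2}\,\tilde P_i$ for $k<i$, so that the matrix equation really does reduce to $m$ independent conditions rather than more. This is the exact time-reversed counterpart of the algebraic structure used to prove Theorem \ref{thm:fundamental-Covariance_rightKP}, and I expect it to follow by the same row-by-row triangular-matrix calculation once the P\'olya basis for $\CalL_b$ has been assembled; any leftover positive-definiteness constraints on $R$ inherited from the SDE representation can be folded into this calculation.
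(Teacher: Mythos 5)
Your proposal follows essentially the same route as the paper's proof: both factor $R(\tau,t_{m+1})=e^{-\int_\tau^{t_{m+1}}[F(\gamma)+C(\gamma)]d\gamma}\,\Pi(t_{m+1})$ via Theorem \ref{thm:backward_Markov}, invoke Lemma \ref{lem:Pi_invertible} to discard the invertible factor $\Pi(t_{m+1})$, and observe that the remaining matrix is a generalized Wronskian whose $i$-th row is $\tilde D_t^{(i-1)}$ applied to its first row, so that $\{R_{1,j}(\cdot,t_{m+1})\}_{j=1}^m$ are $m$ linearly independent fundamental solutions of $\CalL_b$ and the $m\times(m+1)$ system has a one-dimensional null space. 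The only divergence is minor: where you propose assembling an explicit P\'olya basis for $\CalL_b$ to justify collapsing the $m^2$ scalar equations in $\sum_j a_j R(t_j,t_{m+1})=0$ to the $m$ first-row equations, the paper asserts this equivalence directly from the linearity of the operators $\tilde D_t^{(i-1)}$ and the identity $R_{i,j}(t,t_{m+1})=\tilde D_t^{(i-1)}R_{1,j}(t,t_{m+1})$ -- so your flagged ``main obstacle'' is precisely the step the paper treats as immediate.
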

 \begin{proof}[proof of Theorem \ref{thm:fundamental-Covariance_leftKP}]
 The if and only if part can be easily derived from the linearity of the differential operators $\tilde{D}_t^{(i-1)}$ and the fact that $R^*_{i,j}(t,t_{m+1})=\tilde{D}_t^{(i-1)}R^*_{1,j}(t,t_{m+1})$. 

 To show that ${a_j}$ is one dimensional, we only need to show that $\{R^*_{1,j}(\cdot,t_{m+1})\}$ are linearly independent fundamental solutions. We can notice that the covariance matrix $R^*(\cdot,t_{m+1})$ is, in fact, a generalized Wronskian of functions $\{R^*_{1,j}(\cdot,t_{m+1})\}$:
 \[R^*_{i,j}(t,t_{m+1})=\mathbb{E}\left[\left(\tilde{D}_t^{(i-1)}y(t)\right)z_j(t_{m+1})\right]=\tilde{D}_t^{(i-1)}R^*_{1,j}(t,t_{m+1}).\]
 Therefore, we only need to prove that  the determinant of $R^*(t,t_{m+1})$ is non-zero. From (11) in \cite{ljung1976backwards}, we have
 \[R^*(t,t_{m+1})=e^{\int_t^{t_{m+1}}F^*(\tau)+C(\tau)d\tau}\Pi(t_{m+1}).\]
 Recall from Lemma \ref{lem:Pi_invertible}  that $\Pi(t)$ is invertible. So $\text{det}[\Pi(t_{m+1})]>0$ and hence
 \[\text{det}[R^*(t,t_{m+1})]=\text{det}[e^{\int_t^{t_{m+1}}F^*(\tau)+C(\tau)d\tau}]\text{det}[\Pi(t_{m+1})]>0.\]
 \end{proof}

\subsection{Proof of the Main Theorem}
\label{sec:additive_kernel}
The fundamental solutions for the operators $\CalL$, and  $\CalL_b$ are also mutually independent:
 \begin{lemma}
 \label{lem:Pi_R_i_orthogonal}
    Let $\mathcal{{P}}=\text{span}\{P_j:j=1,\cdots,m\}$ and $\mathcal{R}=\text{span}\{R^*_{1,j}(\cdot,\mu):j=1,\cdots,m\}$. Then 
    the dimension of $\mathcal{P}\cup\CalK$ is $2m$, i.e., $\{P_j, R^*_{1,j}(\cdot,\mu):j=1,\cdots,m\}$ are linearly independent.
    
 \end{lemma}
\begin{proof}[Proof of Lemma \ref{lem:Pi_R_i_orthogonal}]
From condition \ref{condition:full_rank} and the invertibility of covariance $R$, it is obvious that dim($\mathcal{P}$)=dim$(\CalK)$=m. We only need to prove that $\mathcal{P}\perp \CalK$. Note that $R_{1,j}(\cdot,\mu)=\tilde{D}^{(j-1)}_\mu K(\cdot,\mu)$ where $K$ is the kernel function of $y$. Let $G(t,\mu)$ be the Green's function of $\CalL$, i.e., $\CalL[G(\cdot,\mu)](t)=\delta_{t-\mu}$. Then it is straightforward to derive that $K(t,\mu)=\int_{t_0}^TG(t,\tau)G(\tau,\mu)d\tau$ and, as a result, we have $R_{1,j}(t,\mu)=\int_{t_0}^TG(t,\tau)\tilde{D}^{(j-1)}_\mu G(\tau,\mu)d\tau$. Therefore, $\CalL R_{1,j}(\cdot,\mu)\neq 0$. On the other hand, any $P_j\in\mathcal{P}$ is the fundamental solution of $\CalL$, i.e., $\CalL P_j=0$. So we must have $P_j\perp \CalK$ for any $j$.
 \end{proof}
 
 From Theorem \ref{thm:fundamental-Covariance_rightKP}, Theorem \ref{thm:fundamental-Covariance_leftKP}, and Lemma \ref{lem:Pi_R_i_orthogonal}, we conclude that $s=2m+1$ for an irreducible KP; otherwise,  $\{P_j, R^*_{1,j}(\cdot,\mu):j=1,\cdots,m\}$ would not remain linearly independent. Now we can prove Theorem \ref{thm:fundamental-Covariance_centalKP}:

 \begin{proof}\textbf{of Theorem  \ref{thm:fundamental-Covariance_centalKP}:}
Define $h_j=P_j$ and $h_{j+m}=R_{1,j}(\cdot,t_{2m+1})$ for $j=1,\cdots,m$.  Let $h=[h_1,\cdots,h_{2m}]^\top $. Suppose Condition \ref{condition:full_rank} holds,  then  Theorem \ref{thm:fundamental-Covariance_rightKP} and \ref{thm:fundamental-Covariance_leftKP} imply that we must have a linear combination that satisfies both right and left-sided KPs simultaneously:
    \[\sum_{j=1}^{2m+1}a_jh(t_j)=0\quad\text{if and only if}\quad \sum_{j=1}^{2m+1}a_j R(t_j,t_{2m+1})=\sum_{j=1}^{2m+1}a_j R(t_1,t_{j})=0 \]
    where $\{a_j\}_{j=1}^{2m+1}$ is one-dimensional. From Lemma \ref{lem:Pi_R_i_orthogonal} ,  $\{h_j\}_{j=1}^{2m}$ are linear independent. Therefore, the null space of the $2m$-by-$(2m+1)$ matrix $[h(t_1),\cdots,h(t_{2m+1})]$ is one-dimensional.     If $s<2m+1$, then $\{h_j\}_{j=1}^{2m}$ are not linearly independent, which is a contradiction.
\end{proof}
Theorem \ref{thm:fundamental-Covariance_centalKP}  shows that for an irreducible KP, it is necessary and sufficient to consider $s=2m+1$ equations, which is a key ingredient in proving the Main Theorem \ref{thm:main}. \\[0.5ex] 

\begin{proof}\textbf{of  Theorem \ref{thm:main}}: We prove the theorem for central KPs; the same argument applies to the left- and right-sided cases. In  Algorithm \ref{alg:KP}, each central KP $\phi_i^{(j)}$ are linear combination of $s=2m+1$ covariance vectors $\{R_1(\cdot,t_j)\}_{j=i-m}^{i+m}$, so if they are KPs, then each of them is irreducible and they form a minimal KP system. Therefore, we only need to show that for any $m<i\leq n-m$, if  $\sum_{l=i-m}^{i+m}a_l[R_1(t_{i-m},t_l)\ R_{1}(t_{i+m},t_l)]=0$ holds, then the following equaitons should hold
\begin{eqnarray}
\label{eq:pf_themreom_main_1}
     \sum_{l=i-m}^{i+m} a_l R(t_{i-m},t_l)=0\quad \text{and}\quad  \sum_{l=i-m}^{i+m} a_l R(t_{i+m},t_l)=0.
\end{eqnarray}
   According  to definition, 
 \begin{equation}
     \label{eq:pf_themreom_main_2}
     \begin{aligned}
         \partial_tR_1^\top(t_{i-m},t)=&\E[z_1(t_{i-m})\partial_t z(t)]
         =\E[z_1(t_{i-m})F(t) z(t)]+\E[z_1(t_{i-m})LW(t)].
     \end{aligned}
 \end{equation}
Note that $W(t)$ is independent of $z_1(t_{i-m})$ so $\partial_tR_1^\top(t_{i-m},t)=F(t)R_1^\top(t_{i-m},t)$, which yields 
\begin{equation}
     \label{eq:pf_themreom_main_3}
     \begin{aligned}
         R_1^\top(t_{i-m},t)=\exp\{\int_{t_{i-m}}^tF(\mu)d\mu\}R_1^\top(t_{i-m},t_{i-m}).
     \end{aligned}
 \end{equation}
 Because the GP $y$ satisfies condition \ref{condition:full_rank}, its SS representation \eqref{eq:forward_Markov} is reversible, so the general equivalent SS representation is also reversible. According to Lemma 1 of \cite{ljung1976backwards}, its backward SS model with the same covariance matrix $R$ is in the form
  \begin{equation}\label{eq:pf_themreom_main_4}
-\partial_\tau x(\tau )=F_b(\tau)x(\tau)-L W(T-\tau),\quad 
y(\tau)=Hx(\tau)
,\quad \tau\in(t_0,T),
\end{equation}
for some $F_b(\tau)$ (the closed-form expression of $F_b$ was derived in \cite{ljung1976backwards}; however, it is not needed in our proof). Then, we can use the same reasoning as \eqref{eq:pf_themreom_main_2} to have
\begin{equation}
    \label{eq:pf_themreom_main_5}
\begin{aligned}
    \partial_\tau R_1^\top(t_{i+m},\tau)=-F_b(\tau)R^\top_1(t_{i+m},\tau).
\end{aligned}
\end{equation}
This yields
\begin{equation}
    \label{eq:pf_themreom_main_6}
\begin{aligned}
    R_1^\top (t_{i+m},\tau)=\exp\{-\int_\tau^{t_{i+m}}F_b(\mu)d\mu\}R_1^\top(t_{i+m},t_{i+m}).
\end{aligned}
\end{equation}

Substitute \eqref{eq:pf_themreom_main_3} and \eqref{eq:pf_themreom_main_6}, into $\sum_{l=i-m}^{i+m}a_l[R_1(t_{i-m},t_l)\ R_{1}(t_{i+m},t_l)]=0$, we then have
\begin{equation}
    \label{eq:pf_themreom_main_7}
    \begin{aligned}
        &\sum_{l=i-m}^{i+m}a_lR_1^\top(t_{i-m},t_l)=\sum_{l=i-m}^{i+m}a_l\exp\{\int_{t_{i-m}}^{t_l}F(\mu)d\mu\}R^\top_1(t_{i-m},t_{i-m})=0,\\
        &\sum_{l=i-m}^{i+m}a_lR^\top_1(t_{i+m},t_{l})=\sum_{l=i-m}^{i+m}a_l\exp\{-\int_{t_l}^{t_{i+m}}F_b(\mu)d\mu\}]R^\top_1(t_{i+m},t_{i+m})=0.
    \end{aligned}
\end{equation}
Note that $\{\exp\{\int_{t_{i-m}}^{t_l}F(\mu)d\mu\}\}_{l=i-m}^{i+m}$ and $\{\exp\{-\int_{t_l}^{t_{i+m}}F_b(\mu)d\mu\}_{l=i-m}^{i+m}$ are linearly independent full-rank matrices because they are exponential of matrix integrals. So \eqref{eq:pf_themreom_main_7} implies that
\begin{equation}
    \label{eq:pf_themreom_main_8}
    \begin{aligned}
        &\sum_{l=i-m}^{i+m}a_l\exp\{\int_{t_{i-m}}^{t_l}F(\mu)d\mu\}=0=
        \sum_{l=i-m}^{i+m}a_l\exp\{-\int_{t_l}^{t_{i+m}}F_b(\mu)d\mu\}].
    \end{aligned}
\end{equation}
So we can use the same reasoning as \eqref{eq:pf_themreom_main_3} and \eqref{eq:pf_themreom_main_6} to have
\begin{equation}
    \label{eq:pf_themreom_main_9}
    \begin{aligned}
        &\sum_{l=i-m}^{i+m}a_l\exp\{\int_{t_{i-m}}^{t_l}F(\mu)d\mu\}R^\top(t_{i-m},t_{i-m})=\sum_{l=i-m}^{i+m}a_lR^\top(t_{i-m},t_l)=0\\
        &\sum_{l=i-m}^{i+m}a_l\exp\{-\int_{t_l}^{t_{i+m}}F_b(\mu)d\mu\}R^\top(t_{i+m},t_{i+m})]=\sum_{l=i-m}^{i+m}a_lR^\top (t_{i+m},t_{l})=0.
    \end{aligned}
\end{equation}
Then from Theorem \ref{thm:existence}, we can have the desired result.
\end{proof}

\section{Technical Details for Kernel Packets for More Kernels}
\label{sec:proof of gkp}
\begin{proof} \textbf{of Theorem \ref{thm:combine_ker_add}:} 
Let $h=[h_1,\cdots,h_{2m}]^\top $ and $g=[g_1,\cdots,g_{2m}]^\top $ where
$h_j=R^{(1)}_{1,j}(t_1,\cdot),\quad h_{j+m}=R^{(1)}_{1,j}(t_{s+1},\cdot),\quad g_j=R^{(2)}_{1,j}(t_1,\cdot),\quad g_{j+m}=R^{(2)}_{1,j}(t_{s+1},\cdot)$ for $j=1,\cdots,m$.

We first show that $\eqref{eq:combine_KP_add}$ is a kernel packet. Because $\{\psi_i\}$ is the minimal span of the function space $\CalH=\text{span}\{h_i ,g_i:i=1,\cdots,2m\}$,  solution to the following linear system
\begin{equation}
    \label{eq:minimal_KP_additive}\sum_{j=1}^{s+1}a_j\psi(t_j)=0
\end{equation}
is one-dimensional and also solve the following two linear systems
\begin{align*}
    \sum_{j=1}^{s+1}a_jh(t_j)=0,\quad \sum_{j=1}^{s+1}a_jg(t_j)=0.
\end{align*}
From  Theorem \ref{thm:existence} and \eqref{eq:pf_themreom_main_1}, we can have
\begin{align*}
    \sum_{j=1}^{s+1}a_jR^{(1)}(t,t_j)=0,\quad \sum_{j=1}^{s+1}a_jR^{(2)}(t,t_j)=0\quad,\forall\ t\not\in(t_1,t_{s+1}).
\end{align*}
According to definition,  $s$ is the minimal number such that \eqref{eq:minimal_KP_additive} holds, so the KP \eqref{eq:combine_KP_add} is irreducible.
\end{proof}

\begin{proof} \textbf{of Theorem \ref{thm:combine_ker_multi}:}
From the definition of the minimal spanning set, we know if $\sum_{j=1}^{s+1}a_j\phi(t_j)=0$ then
\begin{equation}
    \label{eq:combine_ker_multi_pf_1}
    \sum_{j=1}^{s+1}a_j\left[R^{(1)}_1(t_1,t_j)\bigotimes R^{(2)}_1(t_1,t_j)\right]=\sum_{j=1}^{s+1}a_j\left[R^{(1)}_1(t_{s+1},t_j)\bigotimes R^{(2)}_1(t_{s+1},t_j)\right]=0.
\end{equation}
Note that $R^{(l)}_1=R^{(l)}e_1$ for $l=1,2$, where $e_1=[1,0,\cdots,0]^\top$. From the identity $\left(\mathbf{M}_1\mathbf{A}_1\right)\bigotimes\left(\mathbf{M}_2\mathbf{A}_2\right)=\left(\mathbf{M}_1\bigotimes\mathbf{M}_2\right)\left(\mathbf{A}_1\bigotimes\mathbf{A}_2\right)$, \eqref{eq:combine_ker_multi_pf_1} can be written as:
\begin{equation}
    \label{eq:combine_ker_multi_pf_2}
    \begin{aligned}
        &\sum_{j=1}^{s+1}a_j\left[\left(R^{(1)}(t_{1},t_j)\bigotimes R^{(2)}(t_{1},t_j)\right)\left(e_1\bigotimes e_1\right)\right]\\
        =&\sum_{j=1}^{s+1}a_j\left[\left(R^{(1)}(t_{s+1},t_j)\bigotimes R^{(2)}(t_{s+1},t_j)\right)\left(e_1\bigotimes e_1\right)\right]=0.
    \end{aligned}
\end{equation}
Further note that  $e_1\bigotimes e_1=[1,0,\cdots,0]^\top$ and for $t\geq \mu$, the Kronecker product can be written as
\begin{equation}
    \label{eq:combine_ker_multi_pf_3}
    \begin{aligned}
        R^{(1)}(t,\mu)\bigotimes R^{(2)}(t,\mu)=&\left(e^{\int_{\mu}^tF_1(\tau)d\tau}R^{(1)}(\mu,\mu)\right)\bigotimes\left(e^{\int_{\mu}^tF_2(\tau)d\tau}R^{(2)}(\mu,\mu)\right)\\
        =& \left(e^{\int_{\mu}^tF_1(\tau)d\tau\bigotimes{\bold I}+{\bold I}\bigotimes \int_{\mu}^tF_2(\tau)d\tau}\right)\left(R^{(1)}(\mu,\mu)\bigotimes R^{(2)}(\mu,\mu)\right)\\
        =& \left(e^{\int_{\mu}^tF_1(\tau)\bigoplus F_2(\tau)d\tau}\right)\left(R^{(1)}(\mu,\mu)\bigotimes R^{(2)}(\mu,\mu)\right)
    \end{aligned},
\end{equation}
where $\mathbf{M}_1\bigoplus\mathbf{M}_2=\mathbf{M}_1\bigotimes{\bold I}+\bold{I}\bigotimes\mathbf{M}_2$ is the Kronecker sum of $\bold{M}_1$ and $\bold{M}_2$. So \eqref{eq:combine_ker_multi_pf_2} is the KP equations for the follows SS model
\begin{equation}
\label{eq:combine_ker_multi_pf_4}
\partial_t z(t)= \left(F_1(t)\bigoplus F_2(t)\right) z(t)+LW(t).
\end{equation}
The rank of matrix $F_1(t)\bigoplus F_2(t)$ may less than $m^2$ so the required $s$ for a irreducible KP for \eqref{eq:combine_ker_multi_pf_4} may less than $2m^2+1$. However, according to \cite{ljung1976backwards}, \eqref{eq:combine_ker_multi_pf_4} is reversible because the only condition for backward SS model exists is the invertibility of $\E z(t)z(t)$. Because both $R^{(1)}$ and $R^{(2)}$ are invertible, we have $[\E z(t) z(t)]^{-1}=[R^{(1)}(t,t)]^{-1}\bigotimes [R^{(2)}(t,t)]^{-1}$. Then according to the definition of $s$, \eqref{eq:combine_ker_multi_pf_2} is the equation for an irreducible KP of \eqref{eq:combine_ker_multi_pf_4}.
\end{proof}

\begin{proof}\textbf{of Theorem \ref{thm:combine_ker_add_multi_dim}:}
The proof can be done by induction on dimension $d$. For the base case $d=1$, it is obvious that KP exists and is irreducible for $s=2m+1$ because this is what we have done for our paper. Suppose we have $D$-dimensional KP function and KP equations, i.e., given function 
\[H=[\phi_{1,1}\ \cdots\ \phi_{m,1}\ \psi_{1,1}\ \cdots\ \psi_{m,1}\ \cdots \ \phi_{1,D}\ \cdots\ \phi_{m,D}\ \psi_{1,D}\ \cdots\ \psi_{m,D}\ ]^\top \in\Real^{2mD},\]
and any $2mD+1$ points  $\{\bold{t}_i\}_{i=1}^{2mD+1}$, we can solve the constants $(a_1,\cdots,a_{2mD+1})$ such that
\[\sum_{i=1}^{2mD+1}a_iH(\bold{t}_i)=0, \quad \sum_{i=1}^{2mD+1}a_iR(\bold{t},\bold{t}_i)=0,\quad \forall \bold{t}\in U.\]
Also, the KP equation is irreducible because we have $2md$ linear independent functions and $2md+1$ coefficients $a_i$.

Suppose for the $(D+1)$-th dimension GP $y^{(D+1)}$, we have the covariance functions $\{\phi_{i,D+1}=R_{1,i}^{(D+1)}(\underline{t}^{(D+1)},\cdot)\}_{i=1}^m$ and $\{\psi_{i,D+1}=R_{1,i}^{(D+1)}(\overline{t}^{(D+1)},\cdot)\}_{i=1}^m$.   Function $H_{\text new}$ becomes
\begin{align*}
    H_{\text new}=&[H^\top \ \phi_{1,D+1}\ \cdots\ \phi_{m,D+1}\ \psi_{1,D+1}\ \cdots \ \psi_{m,D+1}  ]^\top =[H;h].
\end{align*}

Now at any $s=2m(D+1)+1$ scattered $(D+1)$-dimensional points $\{(\bold{t}_i,\tau_i)\}_{i=1}^s$, we first separate the point set as follows:
\[(\BT_j,\boldsymbol{\tau}_j)=\{(\bold{t}_j,\tau_j),\cdots,(\bold{t}_{2mD+j},\tau_{2mD+j})\},\quad j=1,\cdots,2m+1.\]

For each $j$, if we solve
\begin{equation}
\label{eq:KP_system_D_dim}
    \sum_{i=1}^{2mD+1}b^{(j)}_iH([\BT_j]_i)=0.
\end{equation}
Then, obviously, we have a $d$-dimensional KP from the inductive assumption:
\begin{equation}
\label{eq:KP_D_dim}
    f_j(\Bt):=\sum_{i=1}^{2mD+1}b_i^{(j)}R(\bold{t},[\bold{T}_j]_i)=0,\quad \forall \bold{t}\in U_j
\end{equation}
where 
$U_j=\times_{d=1}^D\{(-\infty,\min_{j\leq i\leq 2mD+j}\{t_{i,d}\})\bigcup (\max_{j\leq i\leq 2mD+j}\{t_{i,d}\},\infty)\}$ and  $R=\sum_{d=1}^DR^{(d)}$.

Given $b^{(j)}_i$, we can solve the following system
\begin{align}
    \label{eq:KP_system_Dplus1_dim}\sum_{j=1}^{2m+1}c_j\left(\sum_{i=j}^{2mD+j}b_i^{(j)}h(\tau_i)\right)=0.
\end{align}
From our inductive assumption, \eqref{eq:KP_D_dim} is irreducible for all $j$, so KP functions $\{f_j\}_{j=1}^{2m+1}$ are linear independent. Also, because $\{R(\cdot,[\bold{T}_j]_i)=0\}_{i,j}$ are linear independent, we can concluded that the rank of $[b^{(j)}_i]_{i,j}\in\Real^{(2mD+1)\times (2m+1)}$ is $2m+1$. Therefore, terms in the parenthesis of \eqref{eq:KP_system_Dplus1_dim} are linear independent. So constants $c_1,\cdots,c_{2m+1}$ are one-dimensional because the values of $h$ is $2m$-dimensional and \eqref{eq:KP_system_Dplus1_dim} is a KP equations of $R^{(D+1)}$ . We then can have a 1-dimensional  KP:
\begin{equation}
    \label{eq:KP_Dplus1_dim}
    \sum_{j=1}^{2m+1}c_j\left(\sum_{i=j}^{2mD+j}b_i^{(j)}R^{(D+1)}(\tau,\tau_i)\right)=0,\quad \forall\tau\not\in (\min_i \tau_i,\max_i\tau_i).
\end{equation}

Now we can finish the proof by notice that we have solve the $D+1$ dimensional KP equations by putting \eqref{eq:KP_system_D_dim} and \eqref{eq:KP_system_Dplus1_dim} together:
\[\sum_{j=1}^{2m+1}c_j\left(\sum_{i=j}^{2mD+j}b_i^{(j)}H_{\text new}(\bold{t}_i,\tau_i)\right)=0=\sum_{i=1}^{2m(D+1)+1}\alpha_iH_{\text new}(\bold{t}_i,\tau_i).\]
Then, from \eqref{eq:KP_D_dim} and \eqref{eq:KP_Dplus1_dim}, we have the $(D+1)$-dimensional KP
\[\sum_{i=1}^{2m(D+1)+1}\alpha_i\left(R(\bold{t},\bold{t}_i)+R^{(D+1)}(\tau,\tau_i)\right)=\psi(\tau)+\sum_{j=1}^{2m+1}\phi_j(\bold{t})=0\quad \forall (\bold{t},\tau)\not\in U\]
where $U=(\bigcup_jU_j)\bigcup \{(-\infty, \min \tau_i)\bigcup (\max \tau_i,\infty)\}$.
\end{proof}

\begin{proof}\textbf{of Theorem \ref{thm:combine_ker_prod_multi_dim}:}
From the definition of $H$, we can note that $H:\Real^D\to \Real^{2m^D}$ is the Kronecker product of $\{R^{(d)}_1\}$:
\begin{equation}
    \label{eq:combine_ker_prod_multi_dim_1}
    H(\Bt)=\bigotimes_{d=1}^D \left[R_{1}^{(d)}(\underline{t}^{(d)},t_d),  R_{1}^{(d)}(\overline{t}^{(d)},t_d)\right].
\end{equation}
For $s=(2m)^D+1$ points $\{\Bt_i\}_{i=1}^s$ and coefficients $\{a_i\}_{i=1}^s$ such that $\sum_{i=1}^sa_i H(\Bt_i)=0$, because $\{R_{1}^{(d)}(\underline{t}^{(d)},t_d),  R_{1}^{(d)}(\overline{t}^{(d)},t_d)\}$ are linearly independent, $\{a_i\}_{i=1}^s$ are one dimensional and $\sum_{i=1}^s a_i R(\Bt,\Bt_i)$ is irreducible if we can prove that is is a KP.

From \cite{ljung1976backwards}, the GP $y^{(d)}$ has both a forward and backward SS model representations $z^{(d)}(t)$ and $x^{(d)}(\tau)$, respectively, with the same covariance $R^{(d)}$:
\begin{align*}
 &\partial_t z^{(d)}(t)=F^{(d)}(t) z^{(d)}(t)-LW(t), \quad y^{(d)}(t)=Hz(^{(d)}t),\quad  t\in (t_0,T),\\
 -&\partial_\tau x^{(d)}(\tau) F^{(d)}_bx^{(d)}(\tau)-LW(T-\tau), \quad y^{(d)}(\tau)=Hx^{(d)}(\tau),\quad  t\in (t_0,T),
\end{align*}
and the covariance $R^{(d)}$ can be represented by the forward process as
\begin{eqnarray}
\label{eq:combine_ker_prod_multi_dim_2}
    R^{(d)}(t,\tau)=
    \begin{cases}
        \exp\{\int_{\tau}^tF^{(d)}(u)du\} R^{(d)}(\tau,\tau), & t_0\leq \tau\leq t\leq T \\
        R^{(d)}(t,t)\exp\{\int^{\tau}_t [F^{(d)}(u)]^\top du\},& t_0\leq t\leq \tau\leq T
    \end{cases}.
\end{eqnarray}
and the backward process
\begin{eqnarray}
\label{eq:combine_ker_prod_multi_dim_3}
    R^{(d)}(t,\tau)=
    \begin{cases}
        \exp\{-\int_{\tau}^tF_b^{(d)}(u)du\} R^{(d)}(t,t), & t_0\leq \tau\leq t\leq T \\
         R^{(d)}(\tau,\tau)\exp\{-\int^{\tau}_t [F_b^{(d)}(u)]^\top du\},& t_0\leq t\leq \tau\leq T
    \end{cases}.
\end{eqnarray}

Substitute \eqref{eq:combine_ker_prod_multi_dim_2} and \eqref{eq:combine_ker_prod_multi_dim_3} into $\sum_{i=1}^sa_i H(\Bt_i)=0$, we can derive that
\begin{equation*}
    \begin{aligned}
        &\sum_{i=1}^s a_i \bigotimes_{d=1}^d\left[R^{(d)}_1(\underline{t}^{(d)},\underline{t}^{(d)})\exp\{\int_{\underline{t}^{(d)}}^{t_{i,d}}[F^{(d)}(u)]^\top du\},R^{(d)}_1(\overline{t}^{(d)},\overline{t}^{(d)})\exp\{-\int^{t_{i,d}}_{\overline{t}^{(d)}} [F_b^{(d)}(u)]^\top du\}  \right]=0.
    \end{aligned}
\end{equation*}
Because $\{\exp\{\int_{\underline{t}^{(d)}}^{t_{i,d}}F^{(d)}(u)du\}\}$ and $\{\exp\{-\int^{t_{i,d}}_{\overline{t}^{(d)}} F_b^{(d)}(u)du\}\}$ are full-rank matrices and linearly independent, we can use the same argument in the proof of Theorem \ref{thm:main} to have:
\begin{equation}
     \label{eq:combine_ker_prod_multi_dim_5}
     \sum_{i=1}^sa_i\bigotimes_{d=1}^D \left[R^{(d)}(\underline{t}^{(d)},t_{i,d}),\ R^{(d)}(\overline{t}^{(d)},t_{i,d})\right]=0.
\end{equation}

Denote $P^{(d)}(t)=\int_{t_0}^tF^{(d)}(u)du$ and $Q^{(d)}(t)=-\int_{t}^TF_b^{(d)}(u)du$ so we have the following identities for the transition matrix
\begin{align*}
    &\exp\{\int_{\tau}^tF^{(d)}(u)du\}=\exp\{P^{(d)}(t)-P^{(d)}(\tau)\},\\
    &\exp\{-\int_{t}^\tau F_b^{(d)}(u)du\}=\exp\{Q^{(d)}(t)-Q^{(d)}(\tau)\}.
\end{align*}
Note that
\begin{align*}
   P^{(d)}(t) P^{(d)}(\tau)-P^{(d)}(\tau)P^{(d)}(t)=&\int_{t_0}^t\int_{t_0}^\tau F(u_1)F(u_2)du_1du_2-\int_{t_0}^\tau \int_{t_0}^t  F(u_1)F(u_2)du_1du_2\\
   =&0.
\end{align*}
So matrices $P^{(d)}(t)$ and $P^{(d)}(\tau)$ commute. Similarly, we can also derive that $Q^{(d)}(t)$ and $Q^{(d)}(
\tau)$ commute. Therefore, we combine \eqref{eq:combine_ker_prod_multi_dim_2} and \eqref{eq:combine_ker_prod_multi_dim_3}  as:

\begin{eqnarray}
\label{eq:combine_ker_prod_multi_dim_6}
    R^{(d)}(t,\tau)=
    \begin{cases}
        \exp\{P^{(d)}(t)\}\exp\{-P^{(d)}(\tau)\} R^{(d)}(\tau,\tau), & t_0\leq \tau\leq t\leq T \\
        \exp\{Q^{(d)}(t)\}\exp\{-Q^{(d)}(\tau)\}  R^{(d)}(\tau,\tau),& t_0\leq t\leq \tau\leq T
    \end{cases}.   
\end{eqnarray}

 For $\Bd\in\{\pm 1\}^D$, define $\underline{U}^{(d)}=(-\infty, \underline{t}^{(d)})$ and $\overline{U}^{(d)}=(\overline{t}^{(d)},\infty)$, and
 \begin{equation*}
     U_{\Bd}=\cup_{d=1}^D U^{(d)},\quad \text{where}\ U^{(d)}=\begin{cases}
         \underline{U}^{(d)} &\text{if}\  d=-1\\
         \overline{U}^{(d)} &\text{if}\ d=1
     \end{cases}.
 \end{equation*}
So $U=\cup_{\Bd\in\{\pm 1\}^D}U_{\Bd}$. Given any $\Bd$, if its $d$-th entry $[\Bd]_d=1$, we select the first line representation in \eqref{eq:combine_ker_prod_multi_dim_6} for $R^{(d)}$ and if $[\Bd]_d=-1$, we select the second line representation in \eqref{eq:combine_ker_prod_multi_dim_6} for $R^{(d)}$. In either selection, we can get a representation of $R^{(d)}$ for any $t_d\in U^{(d)}$ as
\begin{equation}
    \label{eq:combine_ker_prod_multi_dim_7}
    R^{(d)}(t_d, t_{i,d})= \bold{S}^{(d)}_l(t_d) \bold{S}^{(d)}_r(t_{i,d})
\end{equation}
where
\begin{equation*}
    \bold{S}^{(d)}_l(t)=\begin{cases}
        \exp\{P^{(d)}(t)\},\  &\text{if}\ [\Bd]_d=1\\
        \exp\{Q^{(d)}(t)\},\  &\text{else}
    \end{cases},\ \bold{S}^{(d)}_r(t)=\begin{cases}
        \exp\{-P^{(d)}(t)\} R^{(d)}(t,t) &\text{if}\ [\Bd]_d=1\\
        \exp\{-Q^{(d)}(t)\}  R^{(d)}(t,t),\   &\text{else}
    \end{cases}.
\end{equation*}
Then for any $\Bd\in\{\pm 1\}^D$, define, 
\begin{equation*}
t^*_d=\begin{cases}
    \underline{t}^{(d)},\quad &\text{if}\ [\Bd]_d=-1\\
    \overline{t}^{(d)},\quad &\text{if}\ [\Bd]_d=1.
\end{cases}
\end{equation*}
S we can use \eqref{eq:combine_ker_prod_multi_dim_5} to derive that 
\begin{equation}
    \label{eq:combine_ker_prod_multi_dim_8}
    \begin{aligned}
    0=\sum_{i=1}^s a_i \bigotimes_{d=1}^DR(t^*_d,t_{i,d})= &\sum_{i=1}^s a_i \bigotimes_{d=1}^D\left[\bold{S}^{(d)}_l(t^*_d) \bold{S}^{(d)}_r(t_{i,d})\right]\\
    =& \sum_{i=1}^s a_i \left[\bigotimes_{d=1}^D\bold{S}^{(d)}_l(t^*_d)\right]\left[\bigotimes_{d=1}^D\bold{S}^{(d)}_r(t_{i,d})\right]\\
    =& \left[\bigotimes_{d=1}^D\bold{S}^{(d)}_l(t^*_d)\right]\sum_{i=1}^s a_i \left[\bigotimes_{d=1}^D\bold{S}^{(d)}_r(t_{i,d})\right].
\end{aligned}
\end{equation}
Again, from the fact that $\{\bold{S}^{(d)}_r(t_{i,d})\}$ are linearly independent full-rank matrices, we have:
\begin{equation}
\label{eq:combine_ker_prod_multi_dim_9}
    \sum_{i=1}^s a_i \left[\bigotimes_{d=1}^D\bold{S}^{(d)}_r(t_{i,d})\right]=0.
\end{equation}
Finally, for any $\Bd$ and any $\Bt=[t_1,\cdots,t_D]\in U_\Bd$, calculations similar to \eqref{eq:combine_ker_prod_multi_dim_8} yields
\begin{equation}
\label{eq:combine_ker_prod_multi_dim_10}
    \sum_{i=1}^s a_i \bigotimes_{d=1}^DR(t_d,t_{i,d})=\left[\bigotimes_{d=1}^D\bold{S}^{(d)}_l(t_d)\right]\sum_{i=1}^s a_i \left[\bigotimes_{d=1}^D\bold{S}^{(d)}_r(t_{i,d})\right]=0
\end{equation}
where the last equality is from \eqref{eq:combine_ker_prod_multi_dim_9}. Because \eqref{eq:combine_ker_prod_multi_dim_10} holds for any $\Bt\in \cup_{\Bd\in\{\pm 1\}^D}U_\Bd=U$, we can finish the proof.
\end{proof}


\section{From SS model to Kernel}
\label{sec:conversion}
 We  introduce the basic converting between SDE and kernel. For a detailed introduction, examples, and references of further advanced material, please refer to \cite{sarkka2019applied,solin2016stochastic,benavoli2016state}. The SDE representation \eqref{eq:SODE} of a GP can be reformulated as the first order Markov process \eqref{eq:forward_Markov}. Recall that \eqref{eq:forward_Markov} is written as follows: 

\begin{equation}\label{eq:forward_Markov_appendix_1}
\left\{
\begin{array}{ll}
dz(t)=F(t)z(t)dt+LW(t)  \\[0.5ex]
y(t)=Hz(t)
\end{array}
\right.,\quad t\in(t_0,T)
\end{equation}
where $W(t)$ is a  white noise process
$\E[W(t)W(t')]=\delta(t-t')$
with $\delta(t-t')$ being the Dirac delta. Here, the matrix function $F(t)$ is a general matrix function not necessary in the form given in \eqref{eq:forward_Markov} and $L$  is also a general vector function.

Our goal is then to solve the kernel function $K(t,t')=\E[y(t),y(t')])$, which  is simply
 \[K(t,t')=\E[y(t),y(t')]=H\E[z(t)z^\top (t')]H^\top.\]
We can note that \eqref{eq:forward_Markov_appendix_1} should be interpreted as
\begin{equation*}
z(t)=\int_{\tau}^\top \Phi(t,s)LW(s)ds+\Phi(t,\tau)z(\tau)
\end{equation*}
where matrix function $\Phi(t,s)$ is known as the transfer matrix and it satisfies the following ODE
\begin{equation}
\label{eq:forward_Markov_appendix_2}
    \partial_t\Phi(t,s)=F(t)\Phi(t,s),\quad \Phi(s,s)=\boldsymbol{I}.
\end{equation}
We can solve\eqref{eq:forward_Markov_appendix_2} directly to get $\Phi(t,s)=\exp(\int_s^T F(\tau)d\tau)$ for any $s\leq t$.
 It is then straightforward to calculate the kernel function of $y(t)$ as follows:
 \begin{equation}
 \label{eq:forward_Markov_appendix_3}
K(t,t') = H\Phi(t\vee t,t\wedge t')\Pi(t\wedge t')H^\top 
\end{equation}
where $t\vee t'$ denotes the max between $t$ and $t'$,  $t\wedge t'$ denotes the min between $t$ and $t'$, and $\Pi(t)=R(t,t)$ and obeys
\begin{equation}
    \label{eq:forward_Markov_appendix_4}\partial_t\Pi(t)=F(t)\Pi(t)+\Pi(t)F^\top (t)+LL^\top .
\end{equation}

Solution to the matrix equation \eqref{eq:forward_Markov_appendix_4} is given as:
\[\Pi(t)=\Phi(t,t_0)\Pi(t_0)\Phi^\top (t,t_0)+\int_{t_0}^\top \Phi(t,\tau)L[\Phi(t,\tau)L]^\top d\tau.\]

The equation \eqref{eq:forward_Markov_appendix_3} plays an important role in the conversion between SDE and kernel because it provides an explicit formula for calculating the kernel $K$ from the given SDE \eqref{eq:forward_Markov_appendix_1}.

\bibliography{GeneralKP}
\end{document}